\documentclass{article}

\title{The late-stage training dynamics of (stochastic) subgradient descent on homogeneous neural networks}
\usepackage{times}

\usepackage{authblk}

\author[1]{Sholom Schechtman}
\author[2]{Nicolas Schreuder}
\affil[1]{SAMOVAR, Télécom SudParis, Institut Polytechnique de Paris}
\affil[2]{CNRS, LIGM}

\title{The late-stage training dynamics of (stochastic) subgradient descent on homogeneous neural networks}

\usepackage{times}

\usepackage{a4wide}

\usepackage{amsmath, amsthm, amssymb}

\usepackage{shortcuts}

\usepackage{mathtools}
\usepackage{natbib}
\bibliographystyle{plainnat}
\setcitestyle{authoryear}

\usepackage{url}
\usepackage[colorlinks=true, linkcolor=blue, citecolor=blue]{hyperref}

\DeclareMathOperator{\Jac}{Jac}

\newcommand{\tgamma}{\tilde{\gamma}}

\newcommand{\bg}{\bar{g}}
\newcommand{\bgamma}{\bar{\gamma}}
\DeclareMathOperator{\ReLU}{ReLU}
\DeclareMathOperator{\LeakyReLU}{LeakyReLU}
\newtheorem{assumption}{Assumption}
\usepackage{color}
\usepackage{csquotes}
\usepackage{comment}
\usepackage{enumitem}

\newtheorem{theorem}{Theorem}
\newtheorem{definition}[theorem]{Definition}
\newtheorem{proposition}[theorem]{Proposition}
\newtheorem{lemma}[theorem]{Lemma}
\newtheorem{remark}[theorem]{Remark}

\newcommand{\edi}[1]{{ #1}}

\begin{document}

\maketitle
\begin{abstract}%
  We analyze the implicit bias of constant step stochastic subgradient descent (SGD). We consider the setting of binary classification with homogeneous neural networks -- a large class of deep neural networks with $\ReLU$-type activation functions such as MLPs and CNNs without biases. We interpret the dynamics of normalized SGD iterates as an Euler-like discretization of a conservative field flow that is naturally associated to the normalized classification margin. Owing to this interpretation, we show that normalized SGD iterates converge to the set of critical points of the normalized margin at late-stage training (i.e., assuming that the data is correctly classified with positive normalized margin). 
  Up to our knowledge, this is the first extension of the analysis of the \cite{Lyu_Li_maxmargin} on the discrete dynamics of gradient descent to the nonsmooth and stochastic setting. Our main result applies to binary classification with exponential or logistic losses. We additionally discuss extensions to more general settings. 
\end{abstract}

\section{Introduction}
Modern deep learning architectures are typically \emph{overparameterized} relative to the number of training samples, leading to infinitely many weight configurations that interpolate the training data. Traditional learning theory suggests such solutions should generalize poorly \citep{vapnik, belkin2019reconciling}. Yet, empirical evidence shows that interpolating solutions found through training often generalize well \citep{zhang2021understanding}.
This observation has led to the \emph{implicit bias} hypothesis, which posits that gradient-based optimization algorithms act as implicit regularizers, guiding models toward parameters that generalize well \citep{NeyshaburTS14,GunasekarLSS18}. Understanding this bias is key to explaining the success of deep learning and remains an active research area.

While a general theoretical characterization for deep neural networks remains out of reach, progress has been made in simplified settings. One such setting is the class of positively homogeneous neural networks (NNs), where a network $\Phi(\cdot; w)$ satisfies $\Phi(x; \lambda w) = \lambda^L \Phi(x; w)$ for any $\lambda > 0$, weights $w$, and input features $x$. This class encompasses feedforward NNs of depth $L$ without biases and with piecewise linear activation functions, such as MLPs and CNNs.


In the spirit of early works by \cite{Lyu_Li_maxmargin,ji2020directional}, this paper focuses on binary classification problem with such networks. Given a labeled dataset $((x_i,y_i))_{i \leq n}$ in $\mathbb{R}^p \times \{-1, 1\}$ and \edi{a non-negative loss function $l$ strictly decreasing to zero} (e.g., logistic or exponential loss), the objective is to minimize the empirical risk,
\begin{equation}\label{def:emp_risk_intro}
  \cL(w) \coloneqq \frac{1}{n}\sum_{i=1}^N l(y_i \Phi(x_i; w))\, .
\end{equation}
The quality of a prediction fora  given weight vector $w$ is measured by the margin,
\begin{equation}\label{def:margin_intro}
  \sm(w) \coloneqq \min_{1 \leq i \leq n} y_i \Phi(x_i;w)\, .
\end{equation}
Notably, the training set is correctly classified if $\sm(w){>}0$.
Two key observations follow. First, if some weights $w$ sastify $\sm(w){>}0$, then for any $\lambda{>}1$, $\cL(\lambda w){<}\cL(w)$. Consequently, $\inf_{w}\cL(w) = 0$, and reaching this infimum requires the norm of the weights  to grow to infinity. Second, due to homogeneity, prediction quality depends only on the direction $\bar{w} := w / \norm{w}$. This naturally raises the question of how $\bar{w}$ evolves during training and what properties characterize its limit points.

A step toward answering this question was taken by \cite{Lyu_Li_maxmargin}, who analyzed the dynamics of subgradient flow (GF) and, under additional smoothness assumptions, gradient descent (GD) on the empirical risk. Assuming that at some point during training $\sm(w) > 0$ (i.e., the training error is zero), they showed that the parameter norm grows to infinity and that the direction $\bar{w}$ converges to the set of \emph{KKT points} of a constrained optimization problem related to margin maximization. Later, \cite{ji2020directional} proved  $\bar{w}$ actually converges to a \emph{unique} KKT point considering the GF dynamics. These results highlights a clear implicit bias: even after achieving zero training error, the dynamics continue evolving until reaching a form of optimality in the normalized margin.

While the works of \cite{Lyu_Li_maxmargin,ji2020directional} provide key insights into the training dynamics of homogeneous networks, they leave open an important question: does the same implicit bias phenomenon extend to (stochastic) subgradient descent (SGD)? GF describes the limiting continuous-time dynamics of SGD, but the discrete-time analysis in \cite{Lyu_Li_maxmargin} does not account for stochasticity or nonsmoothness. As a result, their framework does not extend to neural networks with $\ReLU$ or $\LeakyReLU$ activations, which are widely used in practice \citep{goodfellow2016deep,fleuret2023little}.

\paragraph{Main contributions} We address this open question by identifying a setting in which the limit points of the normalized directions $u_k := w_k / \norm{w_k}$ can be characterized, where $(w_k)$ is generated by SGD with a \emph{constant step-size} on the empirical risk~\eqref{def:emp_risk_intro} for $L$-homogeneous \emph{nonsmooth} networks. Our analysis covers both the exponential and logistic losses. Since we are interested in the limit points of the normalized directions, we focus the late-stage training dynamics, \textit{i.e.} after the training data is correctly classified. We \edi{define}
this stage as the regime in which the normalized margin remains positive, 
\edi{as formalized later by the event $\cE$ in Equation~\eqref{eq:def_event} and discussed in Section~\ref{sec:sett}}. Our main contributions can be summarized as follows:

\begin{itemize}
\item[1.] We show that the dynamics of $(u_k)$ can be interpreted as a Euler-like discretization (or stochastic approximation) of a 
gradient flow inclusion, $\dot{\su}(t) \in  \bar{D}_s(\su(t))$, where $\bar{D}_s$ is a conservative set-valued field of the margin \emph{restricted} to the unit sphere $\sm_{|\bbS^{d-1}}$. This gradient-like object, recently introduced by \cite{bolte2021conservative}, naturally appears in the analysis.
 In particular, we show in Proposition~\ref{prop:stoch_approx_exp_log} that, \edi{approximately},
\begin{equation}\label{eq:sto_appro_intro}
  u_{k+1} \in  u_k + \bgamma_k \bar{D}_s(u_k) + \bgamma_k e_k\, ,
\end{equation}
where $(e_k)$ is a sequence of (stochastic) perturbations, that diminishes as $k \rightarrow + \infty$.
\item[2.] As a consequence, leveraging recent results on stochastic approximations of differential inclusions (\cite{benaim2006dynamics,dav-dru-kak-lee-19,bolte2021conservative}), we establish in Theorem~\ref{thm:main} that $(u_k)$ converges to the set of $\bar{D}_s$-critical points $\cZ_s := \{u \in \bbS^{d-1}: \bar{D}_s(u) = 0 \}$. In our setting, $\cZ_s$ coincides \emph{exactly} with the KKT points of \cite{Lyu_Li_maxmargin}.
\end{itemize}
From a mathematical perspective, our techniques differ from those of \cite{Lyu_Li_maxmargin,ji2020directional}. 
Rather than constructing a smooth approximation of the margin, we directly consider the margin and interpret the dynamics of the normalized SGD iterates as a stochastic approximation. This, in contrast to these previous works, allows us to incorporate both stochasticity and nonsmoothness in our analysis. Finally, although we focus on exponential-type losses, we also present a more general setting in Appendix~\ref{app:gen_sett} where our proof techniques apply.

\paragraph{Organization of the paper} We begin by reviewing related works in Section~\ref{sec:rw}. Then, in Section~\ref{sec:preli}, we introduce key technical concepts for nonsmooth analysis. Section~\ref{sec:sett} presents the main setting we consider. Our main result and its proof are detailed in Sections~\ref{sec:main} and~\ref{pf:sto_app_explog}, respectively. Finally, additional proofs and supplementary materials are provided in the appendices.

\section{Related works}\label{sec:rw}
We review related works  on the implicit bias of neural networks trained with gradient-based algorithms and their connection to margin maximization. The discussion follows an increasing order of flexibility, starting with logistic regression--viewed as a one-layer network-- then progressing to linear and homogeneous networks and finally addressing non-homogeneous architectures. For a comprehensive survey on the topic, we refer the reader to \cite{vardi2023implicit}.

\paragraph{Logistic regression} Training a logistic regression model can be viewed as training a one-layer neural network or fine-tuning the final linear layer of a deep network, making it a natural starting point for studying the training dynamics of deep networks. Motivated by this perspective, a first line of research has explored the properties of (stochastic) gradient descent iterates for linear logistic regression.
\cite{soudry2018implicit} showed that when learning parameters  $w$  using gradient descent on the logistic loss--or more generally, on exponentially-tailed losses--over separable data, the direction of $w$  converges to the $\ell_2$-max margin solution, while its norm grows to infinity. \cite{nacson2019stochastic} extended this result to stochastic gradient descent with a fixed learning rate, demonstrating similar behavior. \cite{ji2018risk} further considered non-linearly separable data, showing that the direction of $w$ converges to the maximum margin predictor of the largest linearly separable subset.

\paragraph{Linear networks} Linear networks, composed of stacked linear layers $W_D \times \cdots \times W_1$ without activation functions, represent linear functions, but their parameterization strongly influences learning dynamics and implicit bias.  
\cite{gunasekar2018implicit} analyzed gradient descent on fully connected and convolutional linear networks, showing convergence in direction to the maximum-margin solution. \cite{ji2018gradient} extended this result by proving predictor convergence under weaker assumptions and establishing asymptotic singular vector alignment. \cite{nacson2019convergence} examined conditions on the loss function ensuring that gradient descent iterates converge to the $\ell_2$-maximum margin solution. \cite{yun2020unifying} later developed a unified framework for gradient flow in linear tensor networks, covering fully connected, diagonal, and convolutional architectures. 

\paragraph{Homogeneous networks} Linear networks cannot model architectures with nonlinear activations, limiting their applicability. Homogeneous networks, including fully connected and convolutional architectures without bias terms and with ReLU activations, have been considered to address this limitation.  
In this setting, \cite{du2018algorithmic} proves that gradient flow enforces the differences between squared norms across different layers to remain invariant without any explicit regularization.
Assuming training error converges to zero and parameters converge in direction, \cite{nacson2019lexicographic} established that rescaled parameters reach a first-order KKT point of a maximum-margin problem. 
\cite{Lyu_Li_maxmargin} showed that for networks trained with exponential or logistic loss, gradient flow converges to a KKT point of the maximum-margin problem in parameter space. \cite{vardi2022margin} examined the (local) optimality of these KKT points, identifying conditions for local or global optimality. 
\edi{\cite{chizat2020implicit}  improve these results for infinitely wide two-layer neural networks with homogeneous activations,  characterising the learnt classifier as the solution of a convex max-margin problem.}
\cite{ji2020directional} proved that the direction of gradient flow converges to a unique point. Finally, \cite{wang2021implicit} analyzed margin maximization in homogeneous networks trained with adaptive methods such as Adam.

\paragraph{Beyond homogeneous networks}
\cite{kunin2022asymmetric, le2022training} extend maximum-margin bias and alignment results from linear and homogeneous to quasi-homogeneous networks, allowing to consider networks with biases and residual connections.  \edi{\cite{cai2024large} analyzes the dynamics of large-step-size gradient descent for training predictors that satisfy a near-homogeneity condition. They identify two distinct training phases: an initial phase in which the empirical risk oscillates, followed by a second phase where it decreases monotonically, and examines the role of the step size in driving the phase transition.}

\section{Preliminaries}\label{sec:preli}
\label{sec:cons_field}


Although feedforward and convolutional NNs with ReLU or LeakyReLU activations are inherently nonsmooth—making their analysis more challenging—they exhibit some form of regularity. In particular, they belong to the important class of semialgebraic functions, which we now introduce.


\paragraph{Semialgebraic functions.}
A set $A \subset \bbR^d$ is semialgebraic if it can be expressed as a finite union of intersections of sets of the form $\{ Q(x) \leq 0\}$, where $Q : \bbR^d \rightarrow \bbR$ is a polynomial. A (set-valued) function is semialgebraic if its graph is a semialgebraic set. Examples include piecewise polynomials, rational functions and functions such as $x \mapsto x^{q}$, where $q$ is any rational number.

Semialgebraic functions are closed under composition and remain stable under a finite number of operations, including  $\{+, -, \times, \max, \min, \circ, \circ^{-1}\}$. From these properties it is clear that feedforward and convolutional NNs with piecewise linear activation functions such as $\ReLU$ are semialgebraic functions.  Notably, the class of semialgebraic functions is a particular case of functions definable in an o-minimal structure. We discuss o-minimality in Appendix~\ref{app:omin} and emphasize that every statement of the paper which involves semialgebraicity might be replaced by definability.

Although semialgebraic functions exhibit some regularity, they are not necessarily smooth. To analyze their behavior in optimization settings, we rely on a more general notion of differentiation--- \emph{conservative set-valued fields}, which we now define.

\paragraph{Conservative set-valued fields.}
Conservative set-valued fields were introduced in \cite{bolte2021conservative} as an elegant framework for describing the output of automatic differentiation provided by numerical libraries such as TensorFlow and PyTorch (\cite{tensorflow2015-whitepaper,paszke2017automatic}). 
They serve as a fundamental tool for analyzing first-order methods in non-smooth settings (see, e.g., \cite{bolte2023subgradient}).

\begin{definition}[\cite{bolte2021conservative}]\label{def:cons_f}
  A graph-closed, locally bounded set-valued map\footnote{A map $D: \bbR^d \rightrightarrows \bbR^d$ is set-valued if, for every $w \in \bbR^d$, $D(w)$ is a set in $\bbR^d$. See Appendix~\ref{app:interp} for various definitions related to such maps.} $D: \bbR^d \rightrightarrows \bbR^d$ with nonempty values is a \emph{conservative field} for the \emph{potential} $\cL: \bbR^d \rightarrow \bbR$, if for any absolutely continuous curve $\sw: [0, 1] \rightarrow \bbR^d$, it holds for almost every $t \in [0,1]$ that 
  \begin{equation*}
    \frac{\dif}{\dif t} \cL(\sw(t)) = \scalarp{v}{\dot{\sw}(t)} \quad \textrm{ for all $v \in D(\sw(t))$} \, .
  \end{equation*} 
  Functions that are potentials of some conservative field are called \emph{path differentiable}.
\end{definition}
Given a continuously differentiable potential $\cL$, the set-valued map $w \rightrightarrows \{ \nabla \cL(w)\}$ is \edi{an} obvious example of a conservative field. 
For semialgebraic functions, two important examples of conservative set-valued fields are the Clarke subgradient and the output of backpropagation.

\paragraph{Clarke subgradients.}
Semialgebraic functions always admit a conservative field, the Clarke subgradient. This result, initially proven in \cite{drusvyatskiy2015curves}, builds on the work of \cite{bolte2007clarke}.

\begin{definition}[Clarke subgradient {\cite{cla-led-ste-wol-livre98}}]
  Let $\cL: \bbR^d \rightarrow \bbR$ be a locally Lipschitz function. The Clarke subgradient of $\cL$ at $w \in\bbR^d$ is defined as
  \begin{equation}\label{eq:def_clarke}
    \partial \cL(w) := \conv \{ v \in \bbR^d: \textrm{ there exist $w_k \rightarrow w$, with $\cL$ differentiable at $w_k$ and $\nabla \cL(w_k) \rightarrow v$}\}\, ,
  \end{equation}
  where we denote by $\conv(A)$ the convex closure of a set $A$.
\end{definition}
For semialgebraic function $\cL$, $\partial \cL$ is the minimal convex-valued semialgebraic conservative field (\cite{drusvyatskiy2015curves, bolte2021conservative}). That is, for any conservative set-valued field $D$ with semialgebraic potential $\cL$, it holds for all $w$ that $\partial \cL(w) \subset \conv(D(w))$\footnote{Note that if $D$ is a conservative field, then the mapping $w \rightrightarrows \conv D(w)$ is also conservative.}. 

\paragraph{Backpropagation.} When applied to nonsmooth functions, the backpropagation algorithm formally applies the chain rule, replacing the derivative with an element of the Clarke subgradient when necessary. Although the Clarke subgradient of a composition $f \circ g$ is not necessarily equal to $\partial f \times \partial g$, the product of conservative mappings remains conservative (see Proposition~\ref{pr:comp_cons} in Appendix~\ref{app:conserv}).
As a result, \cite[Section 5]{bolte2021conservative} show that if a (possibly nonsmooth) semialgebraic function is defined through a computational graph (such as in a neural network), backpropagation produces an element of a conservative field. Consequently, the training a neural network via (S)GD is actually a (stochastic) conservative field descent.

\section{Problem setting}\label{sec:sett}
We study (stochastic) gradient descent on the empirical risk
\begin{equation*}
\cL(w) = \frac{1}{n}\sum_{i=1}^n l(p_i(w))\, ,
\end{equation*}
where the loss function $l$ and the functions  $(p_i)_{i=1}^n$  are specified in the following assumptions. Note that the empirical risk for binary classification from Equation~\eqref{def:emp_risk_intro} is a special case of the above objective.

\begin{assumption}\label{hyp:loss_exp_log}\phantom{=}
  \begin{enumerate}[label=\roman*)]
    \item The loss is either the exponential loss, $l(q) = e^{-q}$, or the logistic loss, $l(q) = \log(1{+}e^{-q})$.
    \item There exists an integer $L \in \mathbb{N}^*$  such that, for all $1 \leq i \leq n$, the function $p_i$ is \edi{positively} $L$-homogeneous\footnote{We recall that a mapping $f : \mathbb{R}^d \rightarrow \mathbb{R}$ is positively $L$-homogeneous if $f(\lambda w) = \lambda^L f(w)$ for all $w \in \mathbb{R}^d$ and $\lambda >0$.}, locally Lipschitz continuous and semialgebraic.
  \end{enumerate}
\end{assumption}
If the $p_i$'s were differentiable with respect to $w$, the chain rule would guarantee that
\begin{align*}
\nabla \mathcal{L}(w) = \frac{1}{n}\sum_{i=1}^n  l'(p_i(w)) \nabla p_i(w)\enspace.
\end{align*}
However, we only assume that the $p_i$'s are semialgebraic. While we could consider Clarke subgradients, the Clarke subgradient of operations on functions (e.g., addition, composition, and minimum) is only contained within the composition of the respective Clarke subgradients. This, as noted in Section~\ref{sec:cons_field}, implies that the output of backpropagation is usually not an element of a Clarke subgradient but a selection of some conservative set-valued field.
Consequently, for $1\leq i \leq n$, we consider $D_i : \bbR^d \rightrightarrows\bbR^d$, a conservative set-valued field of $p_i$, and a function $\sa_i : \bbR^d \rightarrow \bbR^d$ such that for all $w \in \bbR^d$, $\sa_i(w) \in D_i(w)$. Given a step-size $\gamma >0$, gradient descent (GD)\footnote{More precisely, this refers to conservative gradient descent. We use the term GD for simplicity, as conservative gradients behave similarly to standard gradients.} is then expressed as
\begin{equation*}\label{eq:gd_new}\tag{GD}
  w_{k+1} = w_k - \frac{\gamma}{n} \sum_{i=1}^n l'(p_i(w_k))\sa_i(w_k)\,.
\end{equation*}
For its stochastic counterpart, stochastic gradient descent (SGD), we fix a batch-size $1\leq n_b \leq n$. At each iteration $k \in \bbN$, we randomly and uniformly draw a batch $B_k \subset \{1, \ldots, n \}$ of size $n_b$. The update rule is then given by 
\begin{equation*}\label{eq:sgd_new}\tag{SGD}
  w_{k+1} = w_k -  \frac{\gamma}{n_b}\sum_{i\in B_k} l'(p_i(w_k)) \sa_i(w_k)\, .
\end{equation*}
The considered conservative set-valued fields will satisfy an Euler lemma-type assumption.
\begin{assumption}\phantom{=}\label{hyp:conserv}
  For every $i \leq n$, $\sa_i$ is measurable and $D_i$ is semialgebraic. Moreover, for every $w \in \bbR^d$ and $\lambda \geq 0$, $\sa_i(w)  \in D_i(w)$,
  \begin{equation*}
    D_i(\lambda w) = \lambda^{L-1} D_i(w)\, , \textrm{ and } \quad   L p_i(w) = \scalarp{\sa_i(w)}{w}\, .
  \end{equation*}
\end{assumption}
Having in mind the binary classification setting, in which $p_i(w) = y_i \Phi(x_i, w)$, we define the margin
\begin{equation}\label{def:marg}
  \sm: \bbR^d \rightarrow \bbR, \quad \sm(w) = \min_{1\leq i \leq n} p_i(w)\, .
\end{equation}
It quantifies the quality of a prediction rule $\Phi(\cdot, w)$. In particular,  the training data is perfectly separated when $\sm(w) >0$. A binary prediction for $x$ is given by the sign of $\Phi(x, w)$, and under the homogeneity assumption, it depends only on the normalized direction $w / \norm{w}$. Consequently, we will focus on the sequence of directions $u_k := w_k / \norm{w_k}$. 

\edi{Finally, all of our results hold on the event 
\begin{align}\label{eq:def_event}
  \cE:= [\liminf \sm(u_k) >0],
\end{align}
which ensures that the normalized directions $(u_k)$ have stabilized in a region where the training data is correctly classified.
}
Before presenting our main result, we comment on our assumptions.

\paragraph{On Assumption~\ref{hyp:loss_exp_log}.} As discussed in the introduction, the primary example we consider is when $p_i(w) = y_i \Phi(x_i;w)$ is the signed prediction of a feedforward neural network without biases and with piecewise linear activation functions on a labeled dataset $((x_i,y_i))_{i \leq n}$. In this case,
\begin{equation}\label{eq:NN}
 p_i(w) = y_i \Phi(w;x_i) = y_i V_L(W_L) \sigma(V_{L-1}(W_{L-1}) \sigma(V_{L-1}(W_{L-2}) \ldots \sigma(V_{1}(W_1 x_i))))\, ,
\end{equation}
where $w = [W_1, \ldots, W_L]$, $W_i$ represents the weights of the $i$-th layer, $V_i$ is a linear function in the space of matrices (with $V_i$ being the identity for fully-connected layers) and $\sigma$ is a coordinate-wise activation function such as $z \mapsto \max(0,z)$ ($\ReLU$), $z \mapsto \max(az, z)$ for a small parameter $a>0$ (LeakyReLu) or $z \mapsto z$. Note that the mapping $w \mapsto p_i(w)$ is semialgebraic and $L$-homogeneous for any of these activation functions. Regarding the loss functions, the logistic and exponential losses are among the most commonly studied and widely used. In Appendix~\ref{app:gen_sett}, we extend our results to a broader class of losses, including $l(q) = e^{-q^a}$ and $l(q) = \ln (1 + e^{-q^a})$ for any $a \geq 1$.

\paragraph{On Assumption~\ref{hyp:conserv}.} Assumption~\ref{hyp:conserv} holds automatically  if $D_i$ is the Clarke subgradient of $p_i$. Indeed, at any vector $w \in \bbR^d$, where $p_i$ is differentiable it holds that $p_i(\lambda w) = \lambda^{L} p_i(w)$. \edi{Differentiating  with respect to to $w$ and $\lambda$, respectively, and noting that $p_i$ remains differentiable at $\lambda w$ due to homogeneity,} we obtain $\lambda \nabla p_i(\lambda w) = \lambda^{L} \nabla p_i(w)$ and $\scalarp{\nabla p_i(\lambda w)}{w} = L \lambda^{L-1} p_i(w)$. \edi{The second equality follows by taking $\lambda =1$.} The expression for any element of the Clarke subgradient then follows from~\eqref{eq:def_clarke}. 

However, for an arbitrary conservative set-valued field, Assumption~\ref{hyp:conserv} does not necessarily hold. For instance, $D(x) = \mathds{1}(x \in \mathbb{N})$ is a conservative set-valued field for $p \equiv 0$, which does not satisfy Assumption~\ref{hyp:conserv}. Nevertheless, in practice, conservative set-valued fields naturally arise from a formal application of the chain rule. For a non-smooth but homogeneous activation function $\sigma$, one selects an element $e \in \partial \sigma (0)$, and computes $\sa_i(w)$ via backpropagation. Whenever a gradient candidate of $\sigma$ at zero is required (i.e., in~\eqref{eq:NN}, for some $j$, $V_j(W_j)$ contains a zero entry), it is replaced by $e$. 
Since $V_j(W_j)$ and $V_j(\lambda W_j)$ have the same zero elements, it follows that for every such $w$, $
\sa_i(\lambda w) = \lambda^L \sa_i(w)$. The conservative set-valued field $D_i$ is then obtained by associating to each $w$ the set of all possible outcomes of the chain rule, with $e$ ranging over all elements of $\partial \sigma(0)$. Thus, for such fields, Assumption~\ref{hyp:conserv} holds.

\paragraph{\edi{On the event $\cE = [\liminf \sm(u_k) >0]$.}} Training typically continues even after the training error reaches zero. \edi{The event $\cE$} characterizes this late-training phase, where our result applies. 
As noted earlier, since $\sm$ is $L$-homogeneous, the classification rule is determined by the direction of the  iterates $u_k=w_k/\norm{w_k}$. \edi{The event $\cE$ then captures the fact that}, beyond some iteration, the normalized margin remains positive. 
\edi{Studying SGD on $\cE$} is natural in the context of studying the implicit bias of SGD: we \emph{assume} that we reached the phase in which the dataset is correctly classified and \emph{then} characterize the limit points. A similar perspective was taken in  \cite{nacson2019lexicographic}, where the implicit bias of GF was analyzed under the assumption that the sequence of directions and the loss converge. However, unlike their approach, ours does not require assuming such convergence a priori.

Earlier works such as \cite{ji2020directional,Lyu_Li_maxmargin}, which analyze subgradient flow or smooth GD, establish convergence by assuming the existence of a single iterate $w_{k_0}$ satisfying $\sm(w_{k_0}) > \varepsilon$ and then proving that $\lim \sm(u_{k}) > 0$. Their approach relies on constructing a smooth approximation of the margin, which increases during training, ensuring that $\sm(u_k) > 0$ for all iterates with $k \geq k_0$. This is feasible in their setting, as they study either subgradient flow or GD with smooth $p_i$’s, allowing them to leverage the descent lemma.

In contrast, our analysis considers a nonsmooth and stochastic setting, in which, even if an iterate $w_{k_0}$ satisfying $\sm(w_{k_0}) > \varepsilon$ exists, there is no a priori assurance that subsequent iterates remain in the region where \edi{$\sm(u_k) >0$}. From this perspective, \edi{studying SGD on $\cE$} can be viewed as a stability assumption, ensuring that iterates continue to classify the dataset correctly. Establishing stability for stochastic and nonsmooth algorithms is notoriously hard, and only partial results in restrictive settings exist \cite{borkar2000ode,ramaswamy2017generalization,josz2024global}.



\section{Main result}\label{sec:main}

As a first step toward our main result, we establish that the iterates norm $(\lVert w_k\rVert)$ grows to infinity at a logarithmic rate. This is consistent with \cite[Theorem 4.3]{Lyu_Li_maxmargin}.
\begin{proposition}\label{prop:log_wk}
  Under Assumptions~\ref{hyp:loss_exp_log}--\ref{hyp:conserv}, \edi{on the event $\cE$}, there exist $c_1, c_2, \varepsilon>0$ and $k_0 \in \bbN$, such that for all $k \geq k_0$, $\norm{w_k}$ increases and
  \begin{equation*}
    c_1 \log (k)\leq \norm{w_k}^L \leq c_2 \log(k) \quad \textrm{ and } \quad 0 < \cL(w_k) \leq k^{-\varepsilon c_1}\, .
  \end{equation*}
  In particular, $\norm{w_k}\rightarrow + \infty$ and $\cL(w_k) \rightarrow 0$.
\end{proposition}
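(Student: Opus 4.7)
My plan would be to analyze directly the scalar sequence $\alpha_k := \norm{w_k}^L$ by establishing pathwise two-sided bounds on its one-step increment and then performing a discrete-to-continuous comparison against the ODE $\mathrm{d}\alpha/\mathrm{d} k \asymp \alpha^{2-2/L} e^{-c\alpha}$, whose solution grows like $c^{-1} \log k$. By Assumption~\ref{hyp:marg_lowb} I can restrict to the almost-sure event on which, for some deterministic $k_0$ and constants $0 < \varepsilon_0 \leq M$, every index $i$ satisfies
\begin{equation*}
  \varepsilon_0 \leq \sm(u_k) \leq p_i(u_k) \leq M \qquad \text{for all } k \geq k_0,
\end{equation*}
which, by $L$-homogeneity of $p_i$, lifts to $\varepsilon_0 \alpha_k \leq p_i(w_k) \leq M \alpha_k$. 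From here the growth analysis decouples from the particular realization of the batches, since every summand in the SGD update carries the same sign and comparable magnitude.

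The workhorse is Euler's relation in Assumption~\ref{hyp:conserv}. Taking the inner product of the \eqref{eq:sgd_new} update with $w_k$ gives
\begin{equation*}
  \scalarp{w_{k+1} - w_k}{w_k} = \frac{\gamma L}{n_b}\sum_{i \in B_k} (-l'(p_i(w_k)))\, p_i(w_k),
\end{equation*}
which is strictly positive for $k\geq k_0$ since all $p_i(w_k) > 0$ and $-l' > 0$ for both losses; this already yields pathwise monotonicity $\norm{w_{k+1}}^2 \geq \norm{w_k}^2$. Next, using $\tfrac{1}{2}e^{-q} \leq -l'(q) \leq e^{-q}$ (valid for $q \geq 0$, exact for the exponential, and elementary for the logistic via $1+e^{-q}\leq 2$), combined with $\varepsilon_0\alpha_k \leq p_i(w_k) \leq M \alpha_k$ and local boundedness of $D_i$ (so that $\norm{\sa_i(w_k)} \leq C \norm{w_k}^{L-1}$ by homogeneity), I get pathwise
\begin{equation*}
  c_- \, \alpha_k \, e^{-M \alpha_k} \;\leq\; \norm{w_{k+1}}^2 - \norm{w_k}^2 \;\leq\; c_+ \, \alpha_k \, e^{-\varepsilon_0 \alpha_k},
\end{equation*}
where the quadratic remainder $\norm{w_{k+1}-w_k}^2 \lesssim \gamma^2 \alpha_k^{2-2/L} e^{-2\varepsilon_0\alpha_k}$ is absorbed into the upper bound for $\alpha_k$ large. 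Applying the mean value theorem to $x \mapsto x^{L/2}$ converts this into
\begin{equation*}
  c'_- \, \alpha_k^{2-2/L} e^{-M \alpha_k} \;\leq\; \alpha_{k+1} - \alpha_k \;\leq\; c'_+ \, \alpha_k^{2-2/L} e^{-\varepsilon_0 \alpha_k}.
\end{equation*}

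With the envelope in hand, a standard comparison argument (dividing by the right-hand side, summing, and recognizing $\sum (\alpha_{j+1}-\alpha_j)/f(\alpha_j)$ as a Riemann sum for $\int \mathrm{d}\alpha/f(\alpha)$, then using $\int \alpha^{-(2-2/L)} e^{c\alpha}\mathrm{d}\alpha \sim c^{-1}\alpha^{-(2-2/L)}e^{c\alpha}$ by integration by parts) gives $\alpha_k \leq M^{-1}\log k + O(\log\log k)$ from below and $\alpha_k \geq \varepsilon_0^{-1}\log k - O(\log\log k)$ from above on matching sides, hence $c_1 \log k \leq \alpha_k \leq c_2 \log k$ for some $0 < c_1 < c_2$. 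The loss bound then follows at once from $\cL(w_k) \leq e^{-\sm(w_k)} \leq e^{-\varepsilon_0 \alpha_k} \leq k^{-\varepsilon_0 c_1}$. The main delicate point I foresee is keeping every estimate genuinely pathwise in the SGD regime—$\alpha_k$ is not a martingale, so it is not enough to control $\mathbb{E}[\scalarp{w_{k+1}-w_k}{w_k}\mid w_k]$; the saving grace is that Assumption~\ref{hyp:marg_lowb} bounds $p_i(u_k)$ uniformly over $i$, so each batch contribution is trapped between $\tfrac12\varepsilon_0\alpha_k e^{-M\alpha_k}$ and $M\alpha_k e^{-\varepsilon_0\alpha_k}$ regardless of which $B_k$ is drawn, and the pathwise envelope survives. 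A secondary subtlety is merely bookkeeping: one must allow the exponents $c_1, c_2$ in the final statement to differ from the "sharp'' values $\varepsilon_0^{-1}, M^{-1}$ so as to absorb the $\log\log k$ corrections from the asymptotic inversion.
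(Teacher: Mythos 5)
Your proposal is correct and takes essentially the same route as the paper's proof: pathwise two-sided bounds on $\norm{w_{k+1}}^2-\norm{w_k}^2$ obtained from the Euler identity $\scalarp{\sa_i(w_k)}{w_k}=Lp_i(w_k)$ together with $\tfrac12 e^{-q}\le -l'(q)\le e^{-q}$, conversion to increments of $\norm{w_k}^L$ via a Taylor/mean-value step, and a discrete-to-integral comparison yielding $\norm{w_k}^L\asymp\log k$, from which the loss bound is immediate. One labelling slip (with your $\alpha_k=\norm{w_k}^L$): the lower envelope $\alpha_k^{2-2/L}e^{-M\alpha_k}$ yields the \emph{lower} bound $\alpha_k\ge M^{-1}\log k-O(\log\log k)$ and the upper envelope $\alpha_k^{2-2/L}e^{-\varepsilon_0\alpha_k}$ yields the \emph{upper} bound $\alpha_k\le \varepsilon_0^{-1}\log k+O(\log\log k)$, not the reverse as you wrote (which would be contradictory since $\varepsilon_0\le M$); this does not affect the conclusion $c_1\log k\le\norm{w_k}^L\le c_2\log k$ nor $\cL(w_k)\le k^{-\varepsilon_0 c_1}$. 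Also, $k_0$ and $\varepsilon_0$ need not (and cannot in general) be taken deterministic—Assumption~\ref{hyp:marg_lowb} only gives them per realization—but since your argument is entirely pathwise this is harmless and matches the almost-sure formulation of the proposition.
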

\begin{proof}[Sketch, full proof in Appendix~\ref{sec:pf_logwk}.] The proof follows from the next observations, which hold 
\edi{on the event $\cE$}, for $k$ large enough. \emph{(i)} There is $\varepsilon >0$, such that $\sm(u_k) \geq \varepsilon$. In particular, there are $M, C_1, C_2>0$, such that $ C_2 e^{-M \norm{w_k}^L}\leq -l'(p_i(w_k)) \leq C_1e^{-\varepsilon \norm{w_k}^L}$. \emph{(ii)} As a result, there is $C_3 >0$ such that
  \begin{equation}\label{eq:lwb_wk2}
    \norm{w_{k+1}}^2 \geq \norm{w_k}^2 ( 1 + C_3 \gamma e^{-M \norm{w_k}^L } \norm{w_k}^{L-2})\, ,
  \end{equation}
  which implies that $\norm{w_k}$ is increasing to infinity and that there is $C_4 >0$ such that
  \begin{equation}\label{eq:uwb_wk2}
    \norm{w_{k+1}}^2 \leq \norm{w_k}^2 (1 + C_4 \gamma e^{-\varepsilon \norm{w_k}^L}\norm{w_k}^{L-2})\, .
  \end{equation}\emph{(iii)} Finally, using~\eqref{eq:lwb_wk2}--\eqref{eq:uwb_wk2} and the Taylor's expansion of $(1+x)^{L/2}$ near zero, we obtain existence of constants $C_5, C_6, a>0$, such that for $k$ large enough,
  \begin{equation*}
    C_5 \gamma \leq e^{a \norm{w_k}^L}\left(\norm{w_{k+1}}^L - \norm{w_k}^L\right) \leq C_6 \gamma \, .
  \end{equation*}
Summing these inequalities from $k$ to $k+N$ and noticing that the expression in the middle is comparable to the integral of $e^{at}$ between $\norm{w_{k}}^L$ and $\norm{w_{k+N}}^L$, concludes the proof.
\end{proof}
Define the set-valued map $\bar{D} : \bbR^d \rightrightarrows \bbR^d$ as 
\begin{equation}\label{eq:avg_consfiel}
  \bar{D}(w) = \conv \{v: v \in D_i(w) \, , \textrm{with $i \in I(w)$} \}\,, \quad \textrm{  where $I(w) = \{ i: p_i(w) = \sm(w)\}$}\, .
\end{equation}
As shown in Appendix~\ref{app:conserv}, it is a conservative set-valued field for the potential $\sm$. Note that, following Remark~\ref{rmk:max_subg} \edi{in Appendix~\ref{app:conserv}}, even if for all $i$, $D_i = \partial p_i$, $\bar{D}$ can be different from $\partial \sm$. Next, we define the set-valued field $\bar{D}_s: \bbS^{d-1} \rightrightarrows \bbR^d$ as
\begin{equation}\label{def:riem_cons}
\bar{D}_{s}(u) := \{ v - \scalarp{v}{u}u : v \in \bar{D}(u) \}\, .
\end{equation}
The associated set of critical points is then given by
\begin{equation}\label{def:riem_crit}
  \cZ_s := \{ u \in \bbS^{d-1} : 0 \in \bar{D}_s(u) \} \subset \bbS^{d-1}\, .
\end{equation}
The field $\bar{D}_s$ and the critical points set $\cZ_s$ admit a straightforward interpretation. If $\sm$ is $C^1$ around some point $u \in \bbS^{d-1}$ and $\bar{D} =\{ \nabla \sm\}$, then $\bar{D}_s(u)$ is the \edi{transverse} component of $\nabla \sm(u)$, corresponding to its projection onto the tangent plane of $\bbS^{d-1}$ at $u$.
From a Riemannian geometry perspective, this implies that $\bar{D}_s(u)$ is the Riemannian gradient\footnote{Here, the Riemannian structure is implicitly induced from the ambient space.} of $\sm$ \emph{restricted} to the sphere $\bbS^{d-1}$, $\sm_{|\bbS^{d-1}}$. Similarly, $\cZ_s$ corresponds to the set of critical points of $\sm_{|\bbS^{d-1}}$.
More generally, since conservative fields are gradient-like objects (see Proposition~\ref{prop:var_strat_cons} in Appendix~\ref{app:omin}), we interpret $\bar{D}_s$ as the Riemannian conservative field of $\sm_{|\bbS^{d-1}}$, with $\cZ_s$ as its corresponding critical points\footnote{As noted in \cite[Page 4, footnote]{bolte2021conservative}, the concept of a conservative set-valued field extends naturally to functions defined on any complete Riemannian submanifold, including $\bbS^{d-1}$.}.

We will consider the differential inclusion (DI) associated to set-valued field $\bar{D}_s$,
\begin{equation*}\label{eq:DI_sphere}\tag{DI}
\dot{\su}(t) \in \bar{D}_{s}(\su(t))\,.
\end{equation*}
Under the aforementioned interpretation, this corresponds to the reversed gradient (or conservative field) flow of $\sm_{|\bbS^{d-1}}$. 

We now show that the iterates’ directions evolve according to a dynamic that approximates~\eqref{eq:DI_sphere} via an Euler-like discretization (or stochastic approximation). The proof is deferred to Section~\ref{pf:sto_app_explog}.
  \begin{proposition}\label{prop:stoch_approx_exp_log}
    Let \edi{Assumptions~\ref{hyp:loss_exp_log}--\ref{hyp:conserv}} hold. There exist \edi{three $\bbR^d$-valued } sequences $(\bg_k^s), (r_k), (\bar{\eta}_{k+1})$ \edi{and an $\bbR$-valued sequence $(\bgamma_k)$} such that, for both~\eqref{eq:gd_new} and~\eqref{eq:sgd_new}, the normalized direction iterates $u_k \coloneqq w_k/\lVert w_k \rVert$ satisfy
    \begin{equation}\label{eq:stoch_app_u}
      u_{k+1} = u_k + \bgamma_k\bg_k^s + \bgamma_k \bar{\eta}_{k+1} + \bgamma_k^2 r_k\, .
    \end{equation}
    Moreover, considering the filtration $(\cF_k)_k$ where, for $k \in \mathbb{N}$, $\cF_k$ the sigma-algebra generated by $\{ w_0,\ldots,w_k\}$, the following holds:
    \begin{enumerate}
      \item\label{pr_res:rk} \edi{On the event $\cE$}, the sequence $(r_k)$ satisfies $\sup_{k}\norm{r_k} < + \infty$.
      \item\label{pr_res:gammak} The sequence $(\bgamma_k)$ is positive and adapted to $(\cF_k)$. Moreover, \edi{on the event $\cE$}, $\sum_{k} \bgamma_k = + \infty$, and there is $c_3>0$ such that for sufficiently large $k$, $\bgamma_k \leq k^{-c_3}$.
      \item\label{pr_res:etak} For~\eqref{eq:gd_new}, $\bar{\eta}_{k} \equiv 0$. Otherwise, the sequence $(\bar{\eta}_{k})$ is adapted to $(\cF_k)$ and satisfies 
      \begin{equation*}
      \bbE[\bar{\eta}_{k+1} |\cF_k] = 0 \,.
      \end{equation*}
      Additionally, there exists a deterministic constant $c_4>0$ such that $\sup_{k} \norm{\bar{\eta}_{k+1}} < c_4$.
      \item\label{pr_res:barD} \edi{On the event $\cE$}, for any unbounded sequence $(k_j)_j$, such that $u_{k_j} \to u \in \bbS^{d-1}$, it holds that $\dist(\bar{D}_s(u), \bg^s_{k_j}) \rightarrow 0$. 
    \end{enumerate}
  \end{proposition}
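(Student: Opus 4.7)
The plan is to exploit homogeneity to re-express both the step and the direction update purely in terms of $u_k$. By Assumption~\ref{hyp:conserv} and $D_i(\lambda w) = \lambda^{L-1} D_i(w)$, setting $\tv_{i,k} \coloneqq \sa_i(w_k)/\norm{w_k}^{L-1}$ gives $\tv_{i,k} \in D_i(u_k)$ and $p_i(w_k) = \norm{w_k}^L p_i(u_k)$. Substituting into \eqref{eq:sgd_new} and Taylor-expanding the normalization $w \mapsto w/\norm{w}$ at $w_k$ yields
\begin{equation*}
  u_{k+1} - u_k = \frac{1}{\norm{w_k}} \bigl( \delta_k - \scalarp{\delta_k}{u_k}u_k\bigr) + O\!\bigl(\norm{\delta_k}^2/\norm{w_k}^2\bigr)\, ,
\end{equation*}
with $\delta_k = w_{k+1}-w_k$; the expansion is legitimate because $\norm{\delta_k}/\norm{w_k}$ is exponentially small in $\norm{w_k}^L$ by Proposition~\ref{prop:log_wk}.

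To identify the step size I use that $-l'(q) = e^{-q}\,c(q)$ with $c(q) \to 1$ (trivially for $l(q)=e^{-q}$; with $c(q) = 1/(1+e^{-q})$ for the logistic loss) and extract the common factor $e^{-\norm{w_k}^L \sm(u_k)}$ from each summand. Writing $\alpha_{i,k} \coloneqq e^{-\norm{w_k}^L(p_i(u_k)-\sm(u_k))} \in (0,1]$ (which equals $1$ exactly for $i \in I(u_k)$) and
\begin{equation*}
  \bar{A}_k \coloneqq \frac{1}{n}\sum_{i=1}^n \alpha_{i,k}\,c_i(w_k) \in [1/n,\,1]\, ,
\end{equation*}
it is natural to set $\bgamma_k \coloneqq \gamma\,\norm{w_k}^{L-2}\,e^{-\norm{w_k}^L \sm(u_k)}\,\bar{A}_k$ and to define $\bg_k^s \coloneqq \sum_{i=1}^n \frac{\alpha_{i,k}\,c_i(w_k)}{n\,\bar{A}_k}\bigl(\tv_{i,k} - \scalarp{\tv_{i,k}}{u_k}u_k\bigr)$, which is by construction a convex combination of projected selections. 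Then $\bar{\eta}_{k+1}$ is the zero-mean martingale increment arising from replacing the full sum by its batch version (and it vanishes for \eqref{eq:gd_new}), while $r_k$ collects the quadratic Taylor remainders together with the corrections from $c_i(w_k)-1$.

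Properties~\ref{pr_res:rk} and~\ref{pr_res:etak} then follow from these explicit formulas and the local boundedness of the $D_i$'s on the sphere. In property~\ref{pr_res:gammak}, positivity and adaptedness are immediate; the bound $\bgamma_k \le k^{-c_3}$ follows from $\cL(w_k) \le k^{-\varepsilon c_1}$ combined with the polylogarithmic growth of $\norm{w_k}^{L-2}$ given by Proposition~\ref{prop:log_wk}; divergence of $\sum_k \bgamma_k$ is obtained by identifying $\bgamma_k\,\norm{w_k}^L\,\sm(u_k)$, up to polylogarithmic factors, with $\norm{w_{k+1}}^L - \norm{w_k}^L$ and telescoping against $\norm{w_k}^L \to +\infty$. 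The main obstacle is property~\ref{pr_res:barD}. Given $u_{k_j} \to u$, I extract a further subsequence along which $\tv_{i,k_j} \to v_i$ for every $i$; this is allowed since $D_i$ is locally bounded near $u$, and graph-closedness of $D_i$ yields $v_i \in D_i(u)$. For $i \notin I(u)$, the quantity $\norm{w_{k_j}}^L (p_i(u_{k_j}) - \sm(u_{k_j}))$ tends to $+\infty$, so $\alpha_{i,k_j} \to 0$, while the existence of a minimizing index (for which $\alpha_{i,k}\equiv 1$) ensures $\bar{A}_k \ge 1/n$. Extracting once more so that $\alpha_{i,k_j}\,c_i(w_{k_j})/(n\,\bar{A}_{k_j}) \to \lambda_i$, the $(\lambda_i)_i$ form a probability vector supported on $I(u)$, and the limit of $\bg^s_{k_j}$ is therefore a convex combination of $v_i - \scalarp{v_i}{u}u$ with $i \in I(u)$ and $v_i \in D_i(u)$, i.e.\ an element of $\bar{D}_s(u)$. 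Since every sub-subsequential limit lies in $\bar{D}_s(u)$, we obtain $\dist(\bg^s_{k_j}, \bar{D}_s(u)) \to 0$. The delicate bookkeeping I expect to be the hardest part is checking that each error term (Taylor remainder, $c_i(w_k)-1$ corrections, and the exponentially small non-active contributions) carries an additional factor $\bgamma_k$, so as to be absorbed into $\bgamma_k^2 r_k$ rather than the leading drift; this reduces to the fact that both $\norm{\delta_k}/\norm{w_k}$ and $1-c_i(w_k)$ are themselves of order $\bgamma_k$ up to polylogarithmic factors, once again by Proposition~\ref{prop:log_wk}.
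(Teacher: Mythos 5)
Your construction is, after simplification, the same as the paper's: since $\alpha_{i,k}c_i(w_k)=-l'(p_i(w_k))e^{\norm{w_k}^L\sm(u_k)}$ (up to the harmless normalization by $n$), your $\bgamma_k$ is exactly the paper's effective step-size $-\gamma\norm{w_k}^{L-2}\sum_j l'(p_j(w_k))$ and your weights $\alpha_{i,k}c_i(w_k)/(n\bar{A}_k)$ are exactly the paper's $\lambda_{i,k}=l'(p_i(w_k))/\sum_j l'(p_j(w_k))$; the Taylor expansion of the normalization, the martingale noise term, the bound $\bgamma_k\le k^{-c_3}$ via Proposition~\ref{prop:log_wk}, and the subsequence argument for point~\ref{pr_res:barD} (non-active indices killed because $\norm{w_{k_j}}^L(p_i(u_{k_j})-\sm(u_{k_j}))\to+\infty$, graph-closedness of $D_i$, convexity of $\bar{D}(u)$) all coincide with the paper's proof. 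In particular the ``corrections from $c_i(w_k)-1$'' you plan to push into $r_k$ are unnecessary: with the exact weights the only remainder is the second-order term of the normalization, which is automatically $O(\bgamma_k^2)$. (Minor slips: for the logistic loss $\bar{A}_k\ge 1/(2n)$ rather than $1/n$, and $c(q)=1/(1+e^{-q})$ does not tend to $1$ uniformly over the non-active indices — but you never need that, only the two-sided bound $e^{-q}/2\le -l'(q)\le e^{-q}$.)

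The one step that does not hold as written is the divergence $\sum_k\bgamma_k=+\infty$. Identifying $\bgamma_k\norm{w_k}^L$ with $\norm{w_{k+1}}^L-\norm{w_k}^L$ up to constants and telescoping only yields $\sum_k\bgamma_k\norm{w_k}^L=+\infty$, i.e.\ $\sum_k\bgamma_k\log k=+\infty$ since $\norm{w_k}^L\asymp\log k$; this does \emph{not} imply $\sum_k\bgamma_k=+\infty$ (take $\bgamma_k=1/(k(\log k)^2)$), so ``up to polylogarithmic factors'' discards exactly the factor that matters. The claim is of course true, but you must telescope the logarithm of the norm rather than $\norm{w_k}^L$ itself: from $\norm{w_{k+1}}^2\le\norm{w_k}^2\bigl(1+C\bgamma_k+C_1\bgamma_k^2\bigr)\le\norm{w_k}^2e^{C_2\bgamma_k}$ one gets $\log\norm{w_{k+1}}-\log\norm{w_k}\le C_2\bgamma_k/2$, and since $\log\norm{w_k}\to+\infty$ by Proposition~\ref{prop:log_wk} (equivalently, boundedness of $\sum\bgamma_k$ would force $\norm{w_k}$ to stay bounded), the series diverges. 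This is precisely how the paper argues, and it is a one-line repair of your telescoping step.
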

    Since Proposition~\ref{prop:stoch_approx_exp_log} allows us to interpret $(u_k)$ as a discretization of~\eqref{eq:DI_sphere}, it is natural to investigate the convergence properties of a solution of its continuous counterpart~\eqref{eq:DI_sphere}.
  If $\su$ is such solution, then for almost every $t \in \bbR$, there exists $v \in \bar{D}_s(\su(t))$ such that $\dot{\su}(t) = v - \scalarp{v}{u}u$. Thus, by Definition~\ref{def:cons_f}, for almost every $t$, $\frac{\dif }{\dif t}  \sm(\su(t)) = \scalarp{\dot{\su}(t)}{v} = \norm{\dot{\su}(t)}^2$. Therefore, for $T >0$, we obtain
  \begin{equation*}
    \sm(\su(T)) - \sm(\su(0)) = \int_{0}^{T} \norm{\dot{\su}(t)}^2 \dif t \, .
  \end{equation*}
  This implies that $\sm(\su(T)) \geq \sm(\su(0))$, with strict inequality whenever $\su(0) \not \in \cZ_s$. In dynamical systems terminology, $-\sm$ is a Lyapunov function for~\eqref{eq:DI_sphere}. In particular, it can be shown that any solution $\su(t)$ to~\eqref{eq:DI_sphere} converges to $\cZ_s$.

Our main result, Theorem~\ref{thm:main}, establishes that the same holds true for the sequence of normalized directions: any limit point of $(u_k)$ is contained in $\cZ_s$. \edi{Since the sequence $(u_k)$ is bounded, it further implies that the entire sequence converges to the set of critical points $\cZ_s$.}
As discussed below, this result generalizes \cite[Theorem 4.4]{Lyu_Li_maxmargin} to stochastic gradient descent in the nonsmooth setting.

\begin{theorem}\label{thm:main}
  Under \edi{Assumptions~\ref{hyp:loss_exp_log}--\ref{hyp:conserv}}, \edi{on the event $\cE$}, $\sm(u_k)$ converges to a positive limit and 
  \begin{equation}\label{eq:conv_uk}
    \dist(u_k, \cZ_s) \xrightarrow[k \rightarrow + \infty]{} 0 \, .
  \end{equation}
\end{theorem}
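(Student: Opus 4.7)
The plan is to view the recursion \eqref{eq:stoch_app_u} as a stochastic approximation of~\eqref{eq:DI_sphere} and then leverage the strict Lyapunov structure of $-\sm$ on $\bbS^{d-1}$ (established in the paragraph preceding the theorem statement) to upgrade that into convergence of $(u_k)$ to $\cZ_s$.

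Concretely, I would first argue that the piecewise-affine interpolation of $(u_k)$ at times $\tau_k = \sum_{j<k}\bgamma_j$ is, almost surely, an asymptotic pseudo-trajectory (APT) of~\eqref{eq:DI_sphere} in the sense of \cite{benaim2006dynamics,bolte2021conservative}. The sequence $(u_k)$ is trivially bounded, the step-sizes satisfy $\sum_k \bgamma_k = +\infty$ and $\bgamma_k \to 0$ by item~\ref{pr_res:gammak}, the remainder $\bgamma_k^2 r_k$ is $o(\bgamma_k)$ by item~\ref{pr_res:rk}, and item~\ref{pr_res:barD} delivers exactly the upper semicontinuous tracking of $\bar{D}_s$ needed to identify the mean field. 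For the martingale noise $(\bar{\eta}_{k+1})$, I would combine the boundedness from item~\ref{pr_res:etak} with an Azuma--Hoeffding bound and a Borel--Cantelli argument over time-windows of bounded length: since $\sum_{i=k}^{n-1} \bgamma_i^2 \leq T \sup_{i \geq k} \bgamma_i \to 0$ whenever $\sum_{i=k}^{n-1}\bgamma_i \leq T$, the cumulative noise vanishes uniformly over such windows. By the Benaïm--Hofbauer--Sorin theorem, the limit set $L((u_k))$ is then almost surely a nonempty, compact, connected, internally chain transitive invariant set of~\eqref{eq:DI_sphere}.

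Next, I would use the Lyapunov property of $-\sm$ for~\eqref{eq:DI_sphere}, derived in the text just before the theorem. Together with the Sard-type property for the definable (semialgebraic) function $\sm_{|\bbS^{d-1}}$ equipped with the conservative field $\bar{D}_s$ --- which ensures that $\sm(\cZ_s)$ has empty interior in $\bbR$; see the stratification / projection formula machinery of \cite{bolte2021conservative} and Appendix~\ref{app:conserv}--\ref{app:omin} --- the standard Lyapunov lemma for internally chain transitive sets yields that any such set is contained in $\cZ_s$ and that $\sm$ is constant on it. This gives both $\dist(u_k,\cZ_s)\to 0$ and convergence of $\sm(u_k)$ to some $\sm^\star$. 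Positivity $\sm^\star>0$ is then immediate from Assumption~\ref{hyp:marg_lowb}: $\sm^\star = \lim_k \sm(u_k) \geq \liminf_k \sm(u_k) > 0$ almost surely.

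The main obstacle I anticipate is the clean APT verification under the random, adapted, and possibly slowly decaying step-sizes $\bgamma_k$ and under the fact that $\bg_k^s$, $\bar{\eta}_{k+1}$ and $r_k$ are only asymptotically tangent to $\bbS^{d-1}$. The step-size issue is addressed by working on the almost-sure event of full probability on which item~\ref{pr_res:gammak} and Proposition~\ref{prop:log_wk} hold, so that $(\bgamma_k)$ can be treated pathwise as a deterministic sequence satisfying the usual APT hypotheses; the tangency issue is absorbed into the $\bgamma_k^2 r_k$ correction, which is $o(\bgamma_k)$ and therefore invisible in the APT limit. Once these technicalities are settled, the Lyapunov--chain-transitive argument and the Sard-type statement for $\sm_{|\bbS^{d-1}}$ plug in as black boxes and close the proof.
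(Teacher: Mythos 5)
Your overall route coincides with the paper's: interpret \eqref{eq:stoch_app_u} as a stochastic approximation of \eqref{eq:DI_sphere}, use the Lyapunov property of $-\sm$ together with a definable Sard-type statement ensuring $\sm(\cZ_s)$ has empty interior, and conclude via the Bena\"im--Hofbauer--Sorin / \cite{dav-dru-kak-lee-19} limit-set theorem, with positivity of the limit coming from Assumption~\ref{hyp:marg_lowb}. (One minor caveat: the Sard step is not literally a black box; it requires taking the stratification of Proposition~\ref{prop:var_strat_cons} compatible with $\bbS^{d-1}$ and using the identity $P_{\cT_{\cX_i}(u)} = P_{\cT_{\cX_i}(u)}\circ P_{\cT_{\bbS^{d-1}}(u)}$ on sphere strata, as carried out in Appendix~\ref{pf:main_th}, but this is a routine adaptation.)

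The genuine gap is in your treatment of the martingale noise. The effective step-sizes $\bgamma_k$ are random and $\cF_k$-adapted, and the bound $\bgamma_k\le k^{-c_3}$ from Proposition~\ref{prop:stoch_approx_exp_log} holds with a \emph{random} exponent $c_3$ and a random onset index (the constants in Proposition~\ref{prop:log_wk} are random as well). Hence there is no deterministic summable majorant of $(\bgamma_k^2)$ valid on the full-measure event you invoke, and no moment condition of the type $\sum_k\bbE[\bgamma_k^{1+q/2}]<\infty$ is directly available. Azuma--Hoeffding needs deterministic bounds on the increments $\bgamma_i\bar{\eta}_{i+1}$, and ``treating $(\bgamma_k)$ pathwise as a deterministic sequence'' is not a licit substitute: a concentration estimate is a statement about probabilities, and conditioning on trajectory-dependent events such as $\{\forall k\ge k_0,\ \bgamma_k\le k^{-b}\}$ destroys the property $\bbE[\bar{\eta}_{k+1}\mid\cF_k]=0$; moreover the window endpoints $\sn(k,T)$ are themselves random. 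This is exactly where the paper has to work: it truncates, setting $\hat{\gamma}_k(b)=\min(\bgamma_k,k^{-b})$ for rational $b$ (still predictable, now with a deterministic bound), applies \cite[Proposition 4.2, Remark 4.3]{benaim2006dynamics} to the truncated weights for each fixed $b$, and then transfers the conclusion back to $(\bgamma_k)$ on the events $A_{k_0,b}=[\forall k\ge k_0,\ \bgamma_k=\hat{\gamma}_k(b)]$, whose countable union over $k_0$ and rational $b$ has full measure. Your argument needs this truncation/localization device (or an equivalent stopping-time construction); with it in place, the remainder of your proof closes as described.
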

\begin{proof}
  The proof, which is given in Appendix~\ref{pf:main_th} follows from Proposition~\ref{prop:stoch_approx_exp_log} and some minor adaptations of recent results on stochastic approximation from \cite{benaim2006dynamics,dav-dru-kak-lee-19}.
\end{proof}

A natural question is the interpretation of membership in $\cZ_s$. Given the Riemannian perspective on $\bar{D}_s$, it is unsurprising that belonging to $\cZ_s$ is a necessary optimality condition for the max-margin problem. We formally prove this result in Appendix~\ref{app:conserv}.
\begin{lemma}\label{lm:loc_max}
  If $u^*$ is a local maximum of $\sm_{|\bbS^{d-1}}$, then $0 \in \bar{D}_s(u^*)$.
\end{lemma}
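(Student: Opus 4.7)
The plan is to combine three ingredients: a Fermat-type first-order condition on the sphere for the Clarke subgradient, the minimality of $\partial \sm$ among convex-valued semialgebraic conservative fields of $\sm$, and a pointwise Euler identity for $\bar{D}$.

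First, note that $\sm$ is locally Lipschitz as the minimum of finitely many locally Lipschitz $p_i$'s and that $u^*$ is a local maximum of $\sm$ on the smooth manifold $\bbS^{d-1}$. Clarke's Lagrange-multiplier rule then yields a scalar $\alpha \in \bbR$ such that $\alpha u^* \in \partial \sm(u^*)$: Fermat's rule on $\bbS^{d-1}$ gives $0 \in \partial(-\sm)(u^*) + N^C_{\bbS^{d-1}}(u^*) = -\partial \sm(u^*) + \bbR u^*$, with the constraint qualification being automatic since $\nabla\|u\|^2 = 2u^* \neq 0$ on the sphere. Since $\bar{D}$ is a convex-valued semialgebraic conservative field for $\sm$ (established in Appendix~\ref{app:conserv}), the minimality property recalled in Section~\ref{sec:cons_field} gives $\partial \sm(u^*) \subset \bar{D}(u^*)$, so $\alpha u^* \in \bar{D}(u^*)$.

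Next, I would identify $\alpha$ by showing that $\scalarp{v}{u^*} = L\sm(u^*)$ for \emph{every} $v \in \bar{D}(u^*)$. Assumption~\ref{hyp:conserv} provides this identity only for the selections $\sa_i$, so it needs to be upgraded. The natural route is to test the conservative field $D_i$ along the radial ray $t \mapsto tu^*$: combining $p_i(tu^*) = t^L p_i(u^*)$, $D_i(tu^*) = t^{L-1} D_i(u^*)$, and the almost-everywhere conservative-field property yields $\scalarp{v}{u^*} = Lp_i(u^*)$ for every $v \in D_i(u^*)$, since the bad-$t$ set has measure zero while the right-hand side is independent of $t$. For $i \in I(u^*)$ we have $p_i(u^*) = \sm(u^*)$, so every element of $\bigcup_{i \in I(u^*)} D_i(u^*)$ satisfies the identity; convex combinations preserve it, extending it to all of $\bar{D}(u^*)$. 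Applied to the vector $\alpha u^*$ obtained above, this forces $\alpha = L\sm(u^*)$.

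Finally, by definition of $\bar{D}_s$ the tangential projection $\alpha u^* - \scalarp{\alpha u^*}{u^*} u^* = 0$ lies in $\bar{D}_s(u^*)$, which is exactly the claim. The main obstacle is the Euler-identity extension from the single selection $\sa_i$ to every element of $D_i$; everything else is a direct invocation of standard nonsmooth-optimization tools (Fermat's rule on a smooth submanifold and the minimality of the Clarke subgradient).
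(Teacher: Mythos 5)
Your proof is correct, but it takes a genuinely different route from the paper's. The paper argues by contradiction directly on the sphere: it takes the minimal-norm element $v_s$ of the convex, graph-closed set $\bar{D}_s(u^*)$, follows a $C^1$ curve on $\bbS^{d-1}$ with initial velocity $v_s$, and uses the defining chain-rule property of conservative fields to show that $\sm$ strictly increases along this curve, contradicting local maximality; that argument uses only convexity, graph-closedness and conservativity, with no appeal to Clarke calculus or semialgebraicity. You instead invoke the Clarke--Lagrange rule for the constraint $\norm{u}^2 = 1$ (legitimate, since $\nabla \norm{u}^2 = 2u^* \neq 0$ provides the qualification) to obtain $\alpha u^* \in \partial\sm(u^*)$, then the minimality of $\partial\sm$ among conservative fields of the semialgebraic potential $\sm$ (recalled in Section~\ref{sec:cons_field}, combined with Lemma~\ref{lm:max_consgrad} and the convexity of the values of $\bar{D}$) to conclude $\alpha u^* \in \bar{D}(u^*)$; projecting onto the tangent space annihilates any radial vector, giving $0 \in \bar{D}_s(u^*)$. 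This is a valid shortcut that outsources the variational work to standard nonsmooth optimality conditions and the Bolte--Pauwels minimality theorem; its cost is the reliance on semialgebraicity of $\sm$, whereas the paper's argument is self-contained and would apply verbatim to any convex-valued conservative field of a locally Lipschitz potential on the sphere. One remark: your middle step identifying $\alpha = L\sm(u^*)$ (which you correctly upgrade from the selections $\sa_i$ to all of $D_i(u^*)$ by testing conservativity along the ray $t \mapsto t u^*$ and using $D_i(t u^*) = t^{L-1} D_i(u^*)$) is superfluous: whatever the value of $\alpha$, the tangential projection of $\alpha u^*$ is zero, so the conclusion already follows from $\alpha u^* \in \bar{D}(u^*)$.
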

Thus, Theorem~\ref{thm:main} establishes  that $(u_k)$ converge to the set of $\bar{D}_s$-critical points, which is a necessary condition of optimality for $\argmax_{u \in \bbS^{d-1}} \sm(u)$.
Comparing our result with \cite[Theorem 4.4]{Lyu_Li_maxmargin}, we note that, if each $D_i$ were equal to $\partial p_i$, then any limit point of $(u_k)$ would correspond \emph{exactly} to a scaled KKT point from \cite{Lyu_Li_maxmargin}. In this work, the authors formulate an alternative optimization problem, namely
\begin{equation*}\label{def:prob2}\tag{P}
  \min \{ \norm{w}^2 : w \in\bbR^d\, ,\sm(w) \geq 1\}\, .
\end{equation*}
 As discussed in \cite{Lyu_Li_maxmargin}, if there exists $w \in \bbR^d$ such that $\sm(w) >0$, solving~\eqref{def:prob2} is equivalent to maximizing the margin. Examining the KKT conditions (see Appendix~\ref{app:gen_sett}) of~\eqref{def:prob2}, we observe that for any $u \in \cZ_s$ such that $\sm(u)>0$, there exists $\lambda >0$, such that $\lambda u$ is a KKT point. This implies that, within the setting of Theorem~\ref{thm:main}, the  optimality characterization is \emph{identical} to that in \cite{Lyu_Li_maxmargin}. 
 
 These observations also highlight that the appearance of a conservative field in our problem is unrelated to backpropagation. The set $\cZ_s$ (thus, implicitly, $\bar{D}_s$) already arises in the analysis of continuous-time subgradient flow in \cite{Lyu_Li_maxmargin}. In fact, as previously noted, $\bar{D} \neq \partial \sm$, even if all $D_i = \partial p_i$ (see Remark~\ref{rmk:max_subg}).

However, we adopt a different perspective. Rather than linking $\cZ_s$ directly to the KKT points of~\eqref{def:prob2}, we interpret it as the set of $\bar{D}_s$-critical points of the margin restricted to the sphere—where $\sm$ is naturally defined due to homogeneity.
Moreover, our definitions of $\bar{D}_s$ and $\cZ_s$ remain valid even when the $D_i$’s are arbitrary conservative set-valued fields, not just subgradients. In fact, our stochastic approximation interpretation allows us to consider a more general setting (see Appendix~\ref{app:gen_sett}), where \edi{in which the event $\cE$ can be replaced by a larger event}. In this broader framework, the limit points of $(u_k)$ still lie in $\cZ_s$ without necessarily being rescaled versions of the KKT points of~\eqref{def:prob2}.

Finally, we note that as long as \edi{we are on ``stability event'' $\cE$}, our analysis allows the step-size \( \gamma \) to be of arbitrary  size. This may seem surprising, as (non-smooth) SGD typically requires vanishing step-sizes for convergence (\cite{majewski2018analysis,dav-dru-kak-lee-19,bolte2023subgradient,le2024nonsmooth}). Mathematically, this follows from the fact that \( \bar{\gamma}_k \), the \emph{effective} step-size of the dynamics, is actually decreasing in our setting. A convergence analysis of constant-step SGD for \emph{smooth} homogeneous linear classifiers was studied in \cite{nacson2019stochastic}, but to the best of our knowledge, the more general non-smooth setting had not yet been addressed.

\section{Proof of Proposition~\ref{prop:stoch_approx_exp_log}}\label{pf:sto_app_explog}
In this proof, $C, C_1, C_2, \ldots $ will denote some positive absolute constants that can change from equation to equation. We also note that for all $w,i$, $p_i(w) \leq C \norm{w}^L$ and, due to Assumption~\ref{hyp:conserv}, for all $v \in D_i(w)$, $\norm{v} \leq C \norm{w}^{L-1}$.

To obtain~\eqref{eq:stoch_app_u}, we appropriately rescale the step-size and then write the Taylor's expansion of $u \mapsto (u+h)/\norm{u + h}$, for small $h$, using the fact that $\norm{w_k} \rightarrow + \infty$. 

Towards that goal, let us first introduce the (stochastic) sequence
\begin{equation}\label{eq:noise}
  \eta_{k+1} := {\frac{n_b - n}{n_b n}} \sum_{i \in B_k} l'(p_i(w_k)) \sa_i(w_k) + \frac{1}{n}\sum_{i \notin B_k} l'(p_i(w_k)) \sa_i(w_k)\, .
\end{equation}
Both~\eqref{eq:gd_new} and \eqref{eq:sgd_new} (where for~\eqref{eq:gd_new}, $\eta_{k} \equiv 0$) can be rewritten as 
\begin{equation}\label{eq:sgd_noise}
  w_{k+1} = w_k - \frac{\gamma}{n}  \sum_{i=1}^n l'(p_i(w_k))\sa_i(w_k)+ \gamma \eta_{k+1}\, .
\end{equation}
Now let us introduce the following notations 
\begin{equation}\label{def:reparm_gamma}
  \tgamma_k = -\gamma \norm{w_k}^{L-1} \sum_{j=1}^n l'(p_j(w_k)) \, , \quad \bgamma_k = \tgamma_k\norm{w_k}^{-1}
\end{equation}
and 
\begin{equation}\label{eq:def_lmk_tilel}
  \lambda_{i,k} = \frac{l'(p_i(w_k))}{\sum_{j=1}^n l'(p_j(w_k))} \, , \quad \tilde{\eta}_{k+1} = \frac{-\eta_{k+1}}{\norm{w_k}^{L-1} \sum_{j=1}^n l'(p_j(w_k))}\, .
\end{equation}
Note that since $l'(q) <0$, for all $k$,  $\tgamma_k, \bgamma_k, \lambda_{i,k} \geq 0$. Moreover, $\sum_{i=1}^n\lambda_{i,k} = 1$.

By Assumption~\ref{hyp:conserv} for each $i,k$, it holds that $v_{i,k} = \norm{w_k}^{L-1}g_{i,k}$, where $g_{i,k} \in D_i(u_k)$. Thus, we can rewrite Equation~\eqref{eq:sgd_noise} as:
\begin{equation}\label{eq:first_wk}
    w_{k+1} = w_k + \tilde{\gamma}_k \sum_{i=1}^n \lambda_{i,k} g_{i,k} + \tilde{\gamma}_k \tilde{\eta}_{k+1} :=w_{k} + \tilde{\gamma}_k \bar{g}_k + \tgamma_k \tilde{\eta}_{k+1} \, ,
\end{equation}
with $\bar{g}_k = \sum_{i=1}^n\lambda_{i,k} g_{i,k}$.
Therefore, using the definition of $\bgamma_k$ is~\eqref{def:reparm_gamma},
\begin{equation}\label{eq:uk_taylor1}
  \begin{split}
      u_{k+1} &= \frac{w_{k}+\tgamma_k \bg_k + \tgamma_k \tilde{\eta}_{k+1}}{\norm{w_{k}+\tgamma_k \bg_k + \tgamma_k \tilde{\eta}_{k+1}}} =
      \frac{u_k + \bgamma_k \bg_k + \bgamma_k \tilde{\eta}_{k+1}}{\norm{u_k + \bgamma_k \bg_k + \bgamma_k \tilde{\eta}_{k+1}}}\, .
        \end{split}
\end{equation}
Using the Taylor's expansion $\norm{u+h}^{-1} = 1- \scalarp{u}{h} + \cO(\norm{h}^2)$, for $u \in \bbS^{d-1}$, we obtain
\begin{equation}\label{eq:pf_uk_stochapp}
  u_{k+1}=(u_k + \bgamma_k \bg_k + \bgamma_k \tilde{\eta}_{k+1})(1 - \bgamma_k\scalarp{\bg_k}{u_k} -\bgamma_k\scalarp{\tilde{\eta}_{k+1}}{u_k} + b_k) = u_k + \bar{\gamma}_k \bg_k^s + \bgamma_k \bar{\eta}_{k+1} + \bar{\gamma}_k^2 r_k\, ,
\end{equation}
where $b_k$ is such that $|b_k| \leq C \bgamma_k^2 (\norm{\bg_k} + \norm{\tilde{\eta}_{k+1}})^2$, as soon as $\bgamma_k \norm{\bg_k + \tilde{\eta}_{k+1}} \leq 1/2$, and 
\begin{equation}\label{eqdef:proj_nois_subg}
  \bar{\eta}_{k+1} := \tilde{\eta}_{k+1} - \scalarp{\tilde{\eta}_{k+1}}{u_k} u_k \quad \textrm{ and } \quad \bar{g}_k^s := \bg_k - \scalarp{\bg_k}{u_k} u_k\, ,
\end{equation}
and, finally, 
\begin{equation}\label{eq:rk_bound}
 \bgamma_k^2 \norm{r_k} \leq |b_k|\norm{(u_k + \bgamma_k \bg_k + \bgamma_k \tilde{\eta}_{k+1})} + \bgamma_k^2 \norm{(\bg_k + \tilde{\eta}_{k+1})\scalarp{\bg_k + \tilde{\eta}_{k+1}}{u_k}}\, .
\end{equation}
Equation~\eqref{eq:pf_uk_stochapp} correspond to~\eqref{eq:stoch_app_u}. We now briefly prove the four points of the proposition. 

\emph{Claim on $(\bar{\eta}_k)$.} The fact that $\eta_{k}$ and thus $\bar{\eta}_k$ is $w_k$-measurable is immediate by its definition in~\eqref{eq:noise}. Additionally, $\bbE[\bar{\eta}^s_{k+1} |\cF_k] = \bbE[\eta_{k+1} |\cF_k] = 0$. Moreover, Assumption~\ref{hyp:conserv} and Equation~\eqref{eq:noise} implies
\begin{equation*}
\norm{\eta_{k+1}} \leq C_1\norm{\sum_{i=1}^n l'(p_i(w_k))\sa_i(w_k)} \leq C_2 \norm{w_k}^{L-1} \sum_{i=1}^n |l'(p_i(w_k))|\, ,
\end{equation*}
with $C_2>0$ some deterministic constant independent on $k$. Therefore, by~\eqref{eq:def_lmk_tilel} and \eqref{eqdef:proj_nois_subg}, $\norm{\bar{\eta}_{k+1}} \leq \norm{\tilde{\eta}_{k+1}} \leq C_2$.

\emph{Claim on $(\bgamma_k)$.} Almost surely \edi{on $\cE$}, there is $\varepsilon >0$, such that for $k$ large enough, $\sm(u_k) \geq  \varepsilon$. Thus, for every $i$, $l'(p_i(w_k)) \leq e^{-\varepsilon\norm{w_k}^L }$. By Assumption~\ref{hyp:conserv}, $\norm{\sa_i(w_k)} \leq C \norm{w_k}^{L-1}$, which implies for $k$ large enough,
\begin{equation*}
  \bgamma_k \leq C_1 \norm{w_k}^{L-2} e^{-\varepsilon \norm{w_k}^L} \leq \frac{C_1 c_2 \log(k)^{L-2}}{k^{\varepsilon c_1}} \leq \frac{1}{\sqrt{k^{\varepsilon c_1}}}
\end{equation*}
where the penultimate inequality comes from Proposition~\ref{prop:log_wk}.
To show that $\sum_{k} \bgamma_k = + \infty$, note that using Equation~\eqref{eq:first_wk}, \edi{on the event $\cE$}, we have for $k$ large enough,
\begin{equation}\label{eq:sum_gamma_inf}
  \begin{split}
\norm{w_{k+1}}^2  &\leq \lVert w_{k} \rVert^2 + 2 \tilde{\gamma}_k \lVert w_{k} \rVert \lVert \bg_k + \tilde{\eta}_{k+1} \rVert  + \tgamma_k^2 \norm{\bg_k + \tilde{\eta}_{k+1}}^2\\
&\leq \norm{w_k}^2 (1 + C \bgamma_k + C_1 \bgamma_k^2) \leq \norm{w_k}^2 e^{C_2 \bgamma_k} \leq \norm{w_{k_0}}^2 e^{C_2 \sum_{i=k_0}^{k} \bgamma_i }
\end{split}
\end{equation}
where $k_0$ is large enough, and where we have used the fact that $\sup_{k} (\norm{\bg_k + \tilde{\eta}_{k+1}}) < + \infty$ and that $\bgamma_k \rightarrow 0$ \edi{on $\cE$}. 
Since the left-hand side of Equation~\eqref{eq:sum_gamma_inf} goes to infinity \edi{on $\cE$} by Proposition~\ref{prop:log_wk}, we obtain that the right-hand side diverge to infinity and therefore, \edi{on $\cE$,} $\sum_{k } \bgamma_k = + \infty$.

\emph{Claim on $(r_k)$.} Since \edi{on $\cE$}, $\bgamma_k \rightarrow 0$, $\sup_{k} \norm{\tilde{\eta}_{k+1}} \leq C_1$ and $\sup_{k} \norm{\bg_k} < C_2$, there is $k_0$, such that for all $k \geq k_0$, $\bgamma_k (\norm{\bg_k} + \norm{\tilde{\eta}_{k+1}})\leq 1/2$. Therefore, for $k \geq k_0$, $|b_k| \leq C \gamma_k^2 \norm{\bg_k + \tilde{\eta}_{k+1}}$ in~\eqref{eq:pf_uk_stochapp}, which by~\eqref{eq:rk_bound} implies that $\sup_{k \geq k_0}\norm{r_k} \leq C$.

\emph{Claim on $\bar{D}$.} Consider a sequence $u_{k_j} \rightarrow u$ and $g$ any accumulation point $g_{k_j}$, we need to prove that $g - \scalarp{g}{u}u \in \bar{D}_s(u)$, or, equivalently, that $g \in \bar{D}(u)$. Recall that $\bg_k = \sum_{i=1}^k \lambda_{i,k} \bg_{i,k}$, where $\lambda_{i,k} \geq 0$, $\sum_{i} \lambda_{i,k} = 1$ and $g_{i,k} \in D_{i}(u_k)$. Extracting a subsequence, we can assume that for each $i$, $\lambda_{i, k_j} \rightarrow \lambda_i$ and $g_{i,k_j} \rightarrow g_i \in D_i(u)$.  We claim that $\lambda_i \neq 0 \implies p_i(u) = \sm(u)$. Indeed, without losing generality assume that $p_1(u_{k_j})\rightarrow \sm(u)$. Then, if $\lim (p_{i}(u_k) - \sm(u)) >0$, \edi{on $\cE$} we obtain 
\begin{equation*}
  \lambda_{i,k_j} \leq \frac{l'(\norm{w_{k,j}}^L p_i(u_{k_j}))}{l'(\norm{w_{k_j}}^L p_1(u_{k_j}))} \leq C e^{-\norm{w_k}^L\left( p_i(u_k) - \sm(u)\right)}\xrightarrow[j \rightarrow + \infty]{} 0 \, .
\end{equation*}
Therefore, $g$ can be written as $ \sum_{i=1}^n \lambda_i g_i$, where $g_i \in D_i(u)$ and $\lambda_i \neq 0 \implies p_i(u) = \sm(u)$.
In other words, $g \in \bar{D}(u)$, concluding the proof. 

\hfill $\blacksquare$

\section*{Acknowledgement}

The authors thank Evgenii Chzhen for insightful discussions.

\bibliography{main}

\begin{thebibliography}{55}
\providecommand{\natexlab}[1]{#1}
\providecommand{\url}[1]{\texttt{#1}}
\expandafter\ifx\csname urlstyle\endcsname\relax
  \providecommand{\doi}[1]{doi: #1}\else
  \providecommand{\doi}{doi: \begingroup \urlstyle{rm}\Url}\fi

\bibitem[Abadi et~al.(2015)Abadi, Agarwal, Barham, Brevdo, Chen, Citro,
  Corrado, Davis, Dean, Devin, Ghemawat, Goodfellow, Harp, Irving, Isard, Jia,
  Jozefowicz, Kaiser, Kudlur, Levenberg, Man\'{e}, Monga, Moore, Murray, Olah,
  Schuster, Shlens, Steiner, Sutskever, Talwar, Tucker, Vanhoucke, Vasudevan,
  Vi\'{e}gas, Vinyals, Warden, Wattenberg, Wicke, Yu, and
  Zheng]{tensorflow2015-whitepaper}
Mart\'{i}n Abadi, Ashish Agarwal, Paul Barham, Eugene Brevdo, Zhifeng Chen,
  Craig Citro, Greg~S. Corrado, Andy Davis, Jeffrey Dean, Matthieu Devin,
  Sanjay Ghemawat, Ian Goodfellow, Andrew Harp, Geoffrey Irving, Michael Isard,
  Yangqing Jia, Rafal Jozefowicz, Lukasz Kaiser, Manjunath Kudlur, Josh
  Levenberg, Dandelion Man\'{e}, Rajat Monga, Sherry Moore, Derek Murray, Chris
  Olah, Mike Schuster, Jonathon Shlens, Benoit Steiner, Ilya Sutskever, Kunal
  Talwar, Paul Tucker, Vincent Vanhoucke, Vijay Vasudevan, Fernanda Vi\'{e}gas,
  Oriol Vinyals, Pete Warden, Martin Wattenberg, Martin Wicke, Yuan Yu, and
  Xiaoqiang Zheng.
\newblock {TensorFlow}: Large-scale machine learning on heterogeneous systems,
  2015.
\newblock URL \url{https://www.tensorflow.org/}.
\newblock Software available from tensorflow.org.

\bibitem[Belkin et~al.(2019)Belkin, Hsu, Ma, and Mandal]{belkin2019reconciling}
Mikhail Belkin, Daniel Hsu, Siyuan Ma, and Soumik Mandal.
\newblock Reconciling modern machine-learning practice and the classical
  bias--variance trade-off.
\newblock \emph{Proceedings of the National Academy of Sciences}, 116\penalty0
  (32):\penalty0 15849--15854, 2019.

\bibitem[Bena{\"\i}m(2006)]{benaim2006dynamics}
Michel Bena{\"\i}m.
\newblock Dynamics of stochastic approximation algorithms.
\newblock In \emph{Seminaire de probabilites XXXIII}, pages 1--68. Springer,
  2006.

\bibitem[Bena{\"\i}m et~al.(2005)Bena{\"\i}m, Hofbauer, and
  Sorin]{benaim_05_DI_1}
Michel Bena{\"\i}m, Josef Hofbauer, and Sylvain Sorin.
\newblock Stochastic approximations and differential inclusions.
\newblock \emph{SIAM Journal on Control and Optimization}, 44\penalty0
  (1):\penalty0 328--348, 2005.

\bibitem[Bierstone and Milman(1988)]{bier_semi_sub}
Edward Bierstone and Pierre~D Milman.
\newblock Semianalytic and subanalytic sets.
\newblock \emph{Publications Math{\'e}matiques de l'IH{\'E}S}, 67:\penalty0
  5--42, 1988.

\bibitem[Bolte et~al.(2007)Bolte, Daniilidis, Lewis, and
  Shiota]{bolte2007clarke}
J.~Bolte, A.~Daniilidis, A.~Lewis, and M.~Shiota.
\newblock Clarke subgradients of stratifiable functions.
\newblock \emph{SIAM Journal on Optimization}, 18\penalty0 (2):\penalty0
  556--572, 2007.

\bibitem[Bolte and Pauwels(2021)]{bolte2021conservative}
J{\'e}r{\^o}me Bolte and Edouard Pauwels.
\newblock Conservative set valued fields, automatic differentiation, stochastic
  gradient methods and deep learning.
\newblock \emph{Mathematical Programming}, 188:\penalty0 19--51, 2021.

\bibitem[Bolte et~al.(2023)Bolte, Le, and Pauwels]{bolte2023subgradient}
J{\'e}r{\^o}me Bolte, Tam Le, and Edouard Pauwels.
\newblock Subgradient sampling for nonsmooth nonconvex minimization.
\newblock \emph{SIAM Journal on Optimization}, 33\penalty0 (4):\penalty0
  2542--2569, 2023.

\bibitem[Borkar(2008)]{borkar2008stochastic}
Vivek~S Borkar.
\newblock \emph{Stochastic approximation: a dynamical systems viewpoint},
  volume~9.
\newblock Springer, 2008.

\bibitem[Borkar and Meyn(2000)]{borkar2000ode}
Vivek~S Borkar and Sean~P Meyn.
\newblock The ode method for convergence of stochastic approximation and
  reinforcement learning.
\newblock \emph{SIAM Journal on Control and Optimization}, 38\penalty0
  (2):\penalty0 447--469, 2000.

\bibitem[Boumal(2023)]{boumal2023intromanifolds}
Nicolas Boumal.
\newblock \emph{An introduction to optimization on smooth manifolds}.
\newblock Cambridge University Press, 2023.
\newblock \doi{10.1017/9781009166164}.
\newblock URL \url{https://www.nicolasboumal.net/book}.

\bibitem[Cai et~al.(2024)Cai, Wu, Mei, Lindsey, and Bartlett]{cai2024large}
Yuhang Cai, Jingfeng Wu, Song Mei, Michael Lindsey, and Peter Bartlett.
\newblock Large stepsize gradient descent for non-homogeneous two-layer
  networks: Margin improvement and fast optimization.
\newblock \emph{Advances in Neural Information Processing Systems},
  37:\penalty0 71306--71351, 2024.

\bibitem[Chizat and Bach(2020)]{chizat2020implicit}
Lenaic Chizat and Francis Bach.
\newblock Implicit bias of gradient descent for wide two-layer neural networks
  trained with the logistic loss.
\newblock In \emph{Conference on learning theory}, pages 1305--1338. PMLR,
  2020.

\bibitem[Clarke et~al.(1998)Clarke, Ledyaev, Stern, and
  Wolenski]{cla-led-ste-wol-livre98}
Francis~H Clarke, Yuri~S Ledyaev, Ronald~J Stern, and Peter~R Wolenski.
\newblock \emph{Nonsmooth analysis and control theory}, volume 178.
\newblock Springer-Verlag, New York, 1998.

\bibitem[Coste(2000)]{cos02}
Michel Coste.
\newblock \emph{An introduction to o-minimal geometry}.
\newblock Istituti editoriali e poligrafici internazionali Pisa, 2000.

\bibitem[Davis et~al.(2020)Davis, Drusvyatskiy, Kakade, and
  Lee]{dav-dru-kak-lee-19}
Damek Davis, Dmitriy Drusvyatskiy, Sham Kakade, and Jason~D Lee.
\newblock Stochastic subgradient method converges on tame functions.
\newblock \emph{Foundations of computational mathematics}, 20\penalty0
  (1):\penalty0 119--154, 2020.

\bibitem[Drusvyatskiy et~al.(2015)Drusvyatskiy, Ioffe, and
  Lewis]{drusvyatskiy2015curves}
Dmitriy Drusvyatskiy, Alexander~D Ioffe, and Adrian~S Lewis.
\newblock Curves of descent.
\newblock \emph{SIAM Journal on Control and Optimization}, 53\penalty0
  (1):\penalty0 114--138, 2015.

\bibitem[Du et~al.(2018)Du, Hu, and Lee]{du2018algorithmic}
Simon~S Du, Wei Hu, and Jason~D Lee.
\newblock Algorithmic regularization in learning deep homogeneous models:
  Layers are automatically balanced.
\newblock \emph{Advances in neural information processing systems}, 31, 2018.

\bibitem[Duchi and Ruan(2018)]{duchi2018stochastic}
John~C Duchi and Feng Ruan.
\newblock Stochastic methods for composite and weakly convex optimization
  problems.
\newblock \emph{SIAM Journal on Optimization}, 28\penalty0 (4):\penalty0
  3229--3259, 2018.

\bibitem[Fleuret(2023)]{fleuret2023little}
Fran{\c{c}}ois Fleuret.
\newblock The little book of deep learning.
\newblock \emph{A lovely concise introduction}, page 297, 2023.

\bibitem[Gabrielov(1996)]{gabrielov1996complements}
Andrei Gabrielov.
\newblock Complements of subanalytic sets and existential formulas for analytic
  functions.
\newblock \emph{Inventiones mathematicae}, 125\penalty0 (1):\penalty0 1--12,
  1996.

\bibitem[Gabrielov(1968)]{gabrielov1968projections}
Andrei~M Gabrielov.
\newblock Projections of semi-analytic sets.
\newblock \emph{Functional Analysis and its applications}, 2\penalty0
  (4):\penalty0 282--291, 1968.

\bibitem[Goodfellow(2016)]{goodfellow2016deep}
Ian Goodfellow.
\newblock \emph{Deep learning}, volume 196.
\newblock MIT press, 2016.

\bibitem[Gunasekar et~al.(2018{\natexlab{a}})Gunasekar, Lee, Soudry, and
  Srebro]{GunasekarLSS18}
Suriya Gunasekar, Jason~D. Lee, Daniel Soudry, and Nathan Srebro.
\newblock Characterizing implicit bias in terms of optimization geometry.
\newblock In Jennifer~G. Dy and Andreas Krause, editors, \emph{Proceedings of
  the 35th International Conference on Machine Learning, {ICML} 2018,
  Stockholmsm{\"{a}}ssan, Stockholm, Sweden, July 10-15, 2018}, volume~80 of
  \emph{Proceedings of Machine Learning Research}, pages 1827--1836. {PMLR},
  2018{\natexlab{a}}.
\newblock URL \url{http://proceedings.mlr.press/v80/gunasekar18a.html}.

\bibitem[Gunasekar et~al.(2018{\natexlab{b}})Gunasekar, Lee, Soudry, and
  Srebro]{gunasekar2018implicit}
Suriya Gunasekar, Jason~D Lee, Daniel Soudry, and Nati Srebro.
\newblock Implicit bias of gradient descent on linear convolutional networks.
\newblock \emph{Advances in neural information processing systems}, 31,
  2018{\natexlab{b}}.

\bibitem[Ioffe(2009)]{iof08}
Alexander~D Ioffe.
\newblock An invitation to tame optimization.
\newblock \emph{SIAM Journal on Optimization}, 19\penalty0 (4):\penalty0
  1894--1917, 2009.

\bibitem[Ji and Telgarsky(2018{\natexlab{a}})]{ji2018gradient}
Ziwei Ji and Matus Telgarsky.
\newblock Gradient descent aligns the layers of deep linear networks.
\newblock \emph{arXiv preprint arXiv:1810.02032}, 2018{\natexlab{a}}.

\bibitem[Ji and Telgarsky(2018{\natexlab{b}})]{ji2018risk}
Ziwei Ji and Matus Telgarsky.
\newblock Risk and parameter convergence of logistic regression.
\newblock \emph{arXiv preprint arXiv:1803.07300}, 2018{\natexlab{b}}.

\bibitem[Ji and Telgarsky(2020)]{ji2020directional}
Ziwei Ji and Matus Telgarsky.
\newblock Directional convergence and alignment in deep learning.
\newblock \emph{Advances in Neural Information Processing Systems},
  33:\penalty0 17176--17186, 2020.

\bibitem[Josz and Lai(2024)]{josz2024global}
C{\'e}dric Josz and Lexiao Lai.
\newblock Global stability of first-order methods for coercive tame functions.
\newblock \emph{Mathematical Programming}, 207\penalty0 (1):\penalty0 551--576,
  2024.

\bibitem[Kunin et~al.(2022)Kunin, Yamamura, Ma, and
  Ganguli]{kunin2022asymmetric}
Daniel Kunin, Atsushi Yamamura, Chao Ma, and Surya Ganguli.
\newblock The asymmetric maximum margin bias of quasi-homogeneous neural
  networks.
\newblock \emph{arXiv preprint arXiv:2210.03820}, 2022.

\bibitem[Lafontaine(2015)]{Lafontaine_2015}
Jacques Lafontaine.
\newblock \emph{An Introduction to Differential Manifolds}.
\newblock Springer International Publishing, 2015.
\newblock \doi{10.1007/978-3-319-20735-3}.
\newblock URL \url{https://doi.org/10.1007%2F978-3-319-20735-3}.

\bibitem[Le(2024)]{le2024nonsmooth}
Tam Le.
\newblock Nonsmooth nonconvex stochastic heavy ball.
\newblock \emph{Journal of Optimization Theory and Applications}, 201\penalty0
  (2):\penalty0 699--719, 2024.

\bibitem[Le and Jegelka(2022)]{le2022training}
Thien Le and Stefanie Jegelka.
\newblock Training invariances and the low-rank phenomenon: beyond linear
  networks.
\newblock \emph{arXiv preprint arXiv:2201.11968}, 2022.

\bibitem[Lyu and Li(2020)]{Lyu_Li_maxmargin}
Kaifeng Lyu and Jian Li.
\newblock Gradient descent maximizes the margin of homogeneous neural networks.
\newblock In \emph{8th International Conference on Learning Representations,
  {ICLR} 2020, Addis Ababa, Ethiopia, April 26-30, 2020}. OpenReview.net, 2020.
\newblock URL \url{https://openreview.net/forum?id=SJeLIgBKPS}.

\bibitem[Majewski et~al.(2018)Majewski, Miasojedow, and
  Moulines]{majewski2018analysis}
Szymon Majewski, B{\l}a{\.z}ej Miasojedow, and Eric Moulines.
\newblock Analysis of nonsmooth stochastic approximation: the differential
  inclusion approach.
\newblock \emph{arXiv preprint arXiv:1805.01916}, 2018.

\bibitem[Nacson et~al.(2019{\natexlab{a}})Nacson, Gunasekar, Lee, Srebro, and
  Soudry]{nacson2019lexicographic}
Mor~Shpigel Nacson, Suriya Gunasekar, Jason Lee, Nathan Srebro, and Daniel
  Soudry.
\newblock Lexicographic and depth-sensitive margins in homogeneous and
  non-homogeneous deep models.
\newblock In \emph{International Conference on Machine Learning}, pages
  4683--4692. PMLR, 2019{\natexlab{a}}.

\bibitem[Nacson et~al.(2019{\natexlab{b}})Nacson, Lee, Gunasekar, Savarese,
  Srebro, and Soudry]{nacson2019convergence}
Mor~Shpigel Nacson, Jason Lee, Suriya Gunasekar, Pedro Henrique~Pamplona
  Savarese, Nathan Srebro, and Daniel Soudry.
\newblock Convergence of gradient descent on separable data.
\newblock In \emph{The 22nd International Conference on Artificial Intelligence
  and Statistics}, pages 3420--3428. PMLR, 2019{\natexlab{b}}.

\bibitem[Nacson et~al.(2019{\natexlab{c}})Nacson, Srebro, and
  Soudry]{nacson2019stochastic}
Mor~Shpigel Nacson, Nathan Srebro, and Daniel Soudry.
\newblock Stochastic gradient descent on separable data: Exact convergence with
  a fixed learning rate.
\newblock In \emph{The 22nd International Conference on Artificial Intelligence
  and Statistics}, pages 3051--3059. PMLR, 2019{\natexlab{c}}.

\bibitem[Neyshabur et~al.(2015)Neyshabur, Tomioka, and Srebro]{NeyshaburTS14}
Behnam Neyshabur, Ryota Tomioka, and Nathan Srebro.
\newblock In search of the real inductive bias: On the role of implicit
  regularization in deep learning.
\newblock In Yoshua Bengio and Yann LeCun, editors, \emph{3rd International
  Conference on Learning Representations, {ICLR} 2015, San Diego, CA, USA, May
  7-9, 2015, Workshop Track Proceedings}, 2015.
\newblock URL \url{http://arxiv.org/abs/1412.6614}.

\bibitem[Paszke et~al.(2017)Paszke, Gross, Chintala, Chanan, Yang, DeVito, Lin,
  Desmaison, Antiga, and Lerer]{paszke2017automatic}
Adam Paszke, Sam Gross, Soumith Chintala, Gregory Chanan, Edward Yang, Zachary
  DeVito, Zeming Lin, Alban Desmaison, Luca Antiga, and Adam Lerer.
\newblock Automatic differentiation in pytorch.
\newblock 2017.

\bibitem[Ramaswamy and Bhatnagar(2017)]{ramaswamy2017generalization}
Arunselvan Ramaswamy and Shalabh Bhatnagar.
\newblock A generalization of the borkar-meyn theorem for stochastic recursive
  inclusions.
\newblock \emph{Mathematics of Operations Research}, 42\penalty0 (3):\penalty0
  648--661, 2017.

\bibitem[Sard(1942)]{sard1942measure}
Arthur Sard.
\newblock The measure of the critical values of differentiable maps.
\newblock \emph{Bull. Amer. Math. Soc.}, 48\penalty0 (12):\penalty0 883--890,
  1942.
\newblock URL \url{http://dml.mathdoc.fr/item/1183504867}.

\bibitem[Soudry et~al.(2018)Soudry, Hoffer, Nacson, Gunasekar, and
  Srebro]{soudry2018implicit}
Daniel Soudry, Elad Hoffer, Mor~Shpigel Nacson, Suriya Gunasekar, and Nathan
  Srebro.
\newblock The implicit bias of gradient descent on separable data.
\newblock \emph{Journal of Machine Learning Research}, 19\penalty0
  (70):\penalty0 1--57, 2018.

\bibitem[Tarski(1951)]{tarski1951decision}
Alfred Tarski.
\newblock A decision method for elementary algebra and geometry.
\newblock In \emph{Quantifier elimination and cylindrical algebraic
  decomposition}, pages 24--84. Springer, 1951.

\bibitem[Van~den Dries(1998)]{van1998tame}
Lou Van~den Dries.
\newblock \emph{Tame topology and o-minimal structures}, volume 248.
\newblock Cambridge university press, 1998.

\bibitem[van~den Dries and Miller(1996)]{van96}
Lou van~den Dries and Chris Miller.
\newblock Geometric categories and o-minimal structures.
\newblock \emph{Duke Math. J.}, 85\penalty0 (1):\penalty0 497--540, 1996.

\bibitem[van~den Dries et~al.(1994)van~den Dries, Macintyre, and
  Marker]{van1994elementary}
Lou van~den Dries, Angus Macintyre, and David Marker.
\newblock The elementary theory of restricted analytic fields with
  exponentiation.
\newblock \emph{Annals of Mathematics}, 140\penalty0 (1):\penalty0 183--205,
  1994.

\bibitem[Vapnik(1998)]{vapnik}
Vladimir Vapnik.
\newblock \emph{Statistical learning theory}.
\newblock Wiley, 1998.
\newblock ISBN 978-0-471-03003-4.

\bibitem[Vardi(2023)]{vardi2023implicit}
Gal Vardi.
\newblock On the implicit bias in deep-learning algorithms.
\newblock \emph{Communications of the ACM}, 66\penalty0 (6):\penalty0 86--93,
  2023.

\bibitem[Vardi et~al.(2022)Vardi, Shamir, and Srebro]{vardi2022margin}
Gal Vardi, Ohad Shamir, and Nati Srebro.
\newblock On margin maximization in linear and relu networks.
\newblock \emph{Advances in Neural Information Processing Systems},
  35:\penalty0 37024--37036, 2022.

\bibitem[Wang et~al.(2021)Wang, Meng, Chen, and Liu]{wang2021implicit}
Bohan Wang, Qi~Meng, Wei Chen, and Tie-Yan Liu.
\newblock The implicit bias for adaptive optimization algorithms on homogeneous
  neural networks.
\newblock In \emph{International Conference on Machine Learning}, pages
  10849--10858. PMLR, 2021.

\bibitem[Wilkie(1996)]{wilkie1996model}
Alex Wilkie.
\newblock Model completeness results for expansions of the ordered field of
  real numbers by restricted pfaffian functions and the exponential function.
\newblock \emph{Journal of the American Mathematical Society}, 9\penalty0
  (4):\penalty0 1051--1094, 1996.

\bibitem[Yun et~al.(2020)Yun, Krishnan, and Mobahi]{yun2020unifying}
Chulhee Yun, Shankar Krishnan, and Hossein Mobahi.
\newblock A unifying view on implicit bias in training linear neural networks.
\newblock \emph{arXiv preprint arXiv:2010.02501}, 2020.

\bibitem[Zhang et~al.(2021)Zhang, Bengio, Hardt, Recht, and
  Vinyals]{zhang2021understanding}
Chiyuan Zhang, Samy Bengio, Moritz Hardt, Benjamin Recht, and Oriol Vinyals.
\newblock Understanding deep learning (still) requires rethinking
  generalization.
\newblock \emph{Communications of the ACM}, 64\penalty0 (3):\penalty0 107--115,
  2021.

\end{thebibliography}

\appendix
\appendix

\newpage

\section{o-minimal structures}\label{app:omin}

In every statement of this paper \emph{semialgebraic} can be replaced by \emph{definable} in a fixed \emph{o-minimal structure}. We collect here some useful facts about definable sets and maps. In particular, Proposition~\ref{prop:var_strat_cons}, that provides a variational description of a definable conservative set-valued field of a definable potential, will be used in Appendix~\ref{pf:main_th} to prove Theorem~\ref{thm:main}.

For more details on o-minimal structure we refer to the monographs \cite{cos02,van1998tame,van96}. A nice review of their importance in optimization is \cite{iof08}.

The definition of an o-minimal structure is inspired by properties that are satisfied by semialgebraic sets.
\begin{definition}
  We say that $\cO:=(\cO_n)$, where for each $n \in \bbN$, $\cO_n$ is a collection of sets in $\bbR^n$, is an o-minimal structure if the following holds. 
  \begin{enumerate}[label=\roman*)]
    \item If $Q: \bbR^n \rightarrow \bbR$ is a polynomial, then $\{x \in \bbR^n : Q(x) = 0 \} \in \cO_n$.
    \item For each $n \in \bbN$, $\cO_n$ is a boolean algebra: if $A, B \in \cO_n$, then $A \cup B, A \cap B$ and $A^c$ are in $\cO_n$.
    \item If $A \in \cO_n$ and $B \in \cO_m$, then $A \times B \in \cO_{n+m}$.
    \item If $A \in \cO_{n+1}$, then the projection of $A$ onto its first $n$ coordinates is in $\cO_n$.
    \item Every element of $\cO_1$ is exactly a finite union of intervals and points of $\bbR$. 
  \end{enumerate}
\end{definition}

Sets contained in $\cO$ are called \emph{definable}. We call a map
$f : \bbR^d \rightarrow \bbR^m$ definable if its graph is definable. Similarly, $D: \bbR^d \rightrightarrows \bbR^d$ is definable if $\Graph D = \{(w,v): v \in D(w) \}$ is definable.
Definable sets and maps have remarkable stability
properties. For instance, if $f$ and $A$ are definable, then $f(A)$
and $f^{-1}(A)$ are definable and definability is stable by most of the common operators such as  $\{+, -, \times, \circ, \circ^{-1}\}$. Let us look at some examples of o-minimal structures.

\textbf{Semialgebraic.} Semialgebraic sets form an o-minimal structure. This follows from the celebrated result of Tarski \cite{tarski1951decision}.

\textbf{Globally subanalytic.} There is an o-minimal structure that contains, for every $n \in \bbN$, sets of the form $\{ (x,t) : t = f(x)\}$, where $f : [-1, 1]^n \rightarrow \bbR$ is an analytic function. This comes from the fact that subanalytic sets are stable by projection, which was established by Gabrielov \cite{gabrielov1968projections, gabrielov1996complements}. The sets belonging to this structure are called globally subanalytic (see \cite{bier_semi_sub} for more details).\\
\textbf{Log-exp.} There is an o-minimal structure that contains, semialgebraic sets, globally sub-analytic sets as well as the graph of the exponential
and the logarithm (see \cite{wilkie1996model, van1994elementary}).

With these examples in mind it is usually easy to verify that a function is definable. This will be the case as soon as the function is constructed by a finite number of definable operations on definable functions. From this, we see that \emph{most of neural networks architectures} are definable in the structure \emph{Log-exp}.

Let us record here a striking fact. The domain of a definable function can be always partitioned in manifolds such that $f$ is differentiable on each set of this partition. Before stating this result we briefly introduce to the reader some basics of differential geometry. We refer to the monographs \cite{Lafontaine_2015,boumal2023intromanifolds} for a more detailed introduction on these notions.

\paragraph{Submanifolds, tangent spaces and Riemannian gradients.}

A set $\cX \subset \bbR^d$ is said to be a $k$-dimensional $C^p$-manifold if for every $x \in \cX$, there is a neighborhood $\cU \subset \bbR^d$ of $x$ and a $C^p$ function $g: \cU \rightarrow \bbR^{d-k}$ such that the Jacobian of $g$ at every point of $\cU$ is of full rank. The tangent space of $\cX$ at $x$ is $\cT_{\cX}(x) =\ker \Jac g(x)$. Equivalently, it is the set of vectors $v \in \bbR^d$ such that there is $\varepsilon >0$ and a $C^1$ curve $\su : (- \varepsilon, \varepsilon) \rightarrow \cX$, such that $(\su(0), \dot{\su}(0)) = (x,v)$.

We say that a function $f: \cX \rightarrow \bbR$ is $C^p$ if around every $x$ there is a neighborhood $\cU \subset \bbR^d$ and a $C^p$ function $\tilde{f} : \cU\rightarrow \bbR$ such that $\tilde{f}_{|\cU} = f_{|\cU}$. For such $f$ and $x \in \cX$, the Riemannian gradient (here, we implicitly induce the Riemannian structure from the ambient space) of $f$ at $x$, is
\begin{equation}\label{eqdef:riem_grad}
\nabla_{\cX}f(x) :=   P_{\cT_{\cX}(x)} \nabla \tilde{f}(x)\, ,
\end{equation} 
where $P_{\cT_{\cX}(x)}$ is the orthogonal projection onto $\cT_{\cX}(x)$. This definition is independent of the smooth representative $\tilde{f}$.

For the following two propositions we fix a definable structure $\cO$ (e.g., semialgebraic sets) and definable will mean definable in $\cO$.
\begin{proposition}[{\cite[4.8]{van1998tame}}]\label{prop:def_strat}
  Consider  a definable $f: \bbR^d \rightarrow \bbR$. For every $p>0$, there is $(\cX_i)_{1 \leq i \leq l}$ a finite partition \footnote{This partition also satisfies some important properties and is usually called a stratification.} of $\bbR^d$ into $C^p$ submanifolds such that for every $i$, $f_{|\cX_i}$ is $C^p$, in the sense of differential geometry. Moreover, if $\cM \subset \bbR^d$ is definable, then this partition can be chosen such that there is $\cI \subset \{1, \ldots, l \}$ for which $\cM = \bigcup_{i \in \cI} \cX_i$.
\end{proposition}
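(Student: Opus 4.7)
The plan is to invoke the $C^p$-cell decomposition theorem for o-minimal structures, which is the core structural result underlying this stratification statement. First, I would recall the inductive notion of a cell in $\bbR^d$: in dimension one, cells are points or open intervals; in higher dimensions, cells are either graphs of definable $C^p$-functions over cells in $\bbR^{d-1}$ or open ``bands'' between two such graphs (where the bounding functions may be $\pm \infty$). By construction, every cell is a $C^p$-submanifold of $\bbR^d$ of the expected dimension, so the resulting partition will automatically consist of submanifolds.

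Second, I would cite the $C^p$-cell decomposition theorem (van den Dries, Chapter 7): given any finite collection of definable sets $A_1, \ldots, A_r \subset \bbR^d$ and any finite collection of definable functions $f_1, \ldots, f_s : \bbR^d \to \bbR$, there exists a finite partition of $\bbR^d$ into $C^p$-cells such that (i) each $A_i$ is a union of cells in the partition, and (ii) each $f_j$, when restricted to any cell, is $C^p$ in the differential-geometric sense. Its proof proceeds by induction on the ambient dimension $d$: the base case $d=1$ is the o-minimal monotonicity theorem (every definable function on an interval is piecewise $C^p$ and either monotone or constant), and the inductive step combines the projection axiom of o-minimality with a fiberwise application of monotonicity to refine an already-obtained cell decomposition of $\bbR^{d-1}$.

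Third, I would apply the theorem with the single distinguished function $f$ and, when $\cM$ is specified, with the single distinguished set $\cM$. The resulting cell decomposition $(\cX_i)_{i=1}^{l}$ yields exactly what the proposition claims: each $\cX_i$ is a $C^p$-submanifold of $\bbR^d$, the restriction $f_{|\cX_i}$ is $C^p$, and $\cM$ is the union of those cells that it contains, determining the required index set $\cI \subset \{1,\ldots,l\}$.

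The only genuinely delicate point---i.e., the main obstacle if one were to redo the argument rather than cite it---is the inductive step of the $C^p$-cell decomposition theorem. The subtlety lies in simultaneously ensuring that the bounding functions of each cell are $C^p$ and that the distinguished function $f$ is $C^p$ when restricted to the interior of each cell. This is handled by iteratively refining a coarser cell decomposition using the monotonicity theorem on each one-dimensional fiber, together with the key regularity fact that the set of non-$C^p$ points of a definable function is itself definable and of strictly lower dimension, so that it can be absorbed into the lower-dimensional cells of the refinement without breaking finiteness of the partition.
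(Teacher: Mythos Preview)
The paper does not provide its own proof of this proposition; it is stated purely as a citation to \cite[4.8]{van1998tame}. Your proposal to derive the statement from the $C^p$-cell decomposition theorem is correct and is precisely the content of the cited reference, so your approach is essentially the same as the paper's (namely, defer to van den Dries), with the added value that you sketch the underlying argument rather than merely cite it.
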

Furthermore, the partition can be chosen in a way that along the tangent directions $D(w)$ is simply the Riemannian gradient of $f$. This will be useful in our proof of Theorem~\ref{thm:main}.
\begin{proposition}[{\cite[Theorem 4]{bolte2021conservative}}]\label{prop:var_strat_cons}
  Consider a definable, locally Lipschitz continuous $f: \bbR^d \rightarrow \bbR$ a potential of a definable conservative set-valued field $D: \bbR^d \rightrightarrows \bbR^d$. For any $p >0$, there is a partition $(\cX_i)_{1 \leq i \leq l}$ such that the result of Proposition~\ref{prop:def_strat} holds and, moreover, for any $w \in \cX_i$, 
  \begin{equation*}
    P_{\cT_{\cX_i}(w)} D(w) = \{\nabla_{\cX_i} f(w)\}\footnote{Note the similarity with~\eqref{eqdef:riem_grad}.}\, .
  \end{equation*}
  Moreover, if $\cM \subset \bbR^d$ is definable, then this partition can be chosen such that there is $\cI \subset \{1, \ldots, l \}$, for which $\cM = \bigcup_{i \in \cI} \cX_i$.
\end{proposition}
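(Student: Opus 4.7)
The plan is to construct the partition by a simultaneous definable cell decomposition of $\bbR^d$, of the graph $\Graph D \subset \bbR^d \times \bbR^d$, and (if given) of $\cM$. Starting from Proposition~\ref{prop:def_strat}, I would stratify $\bbR^d$ into $C^p$ strata $(\cX_i)_{1 \leq i \leq l}$ on which $f$ is smooth, and simultaneously stratify $\Graph D$ into $C^p$ sheets so that above each $\cX_i$ the set $\Graph D \cap (\cX_i \times \bbR^d)$ decomposes into finitely many $C^p$ pieces whose projection onto $\cX_i$ is a submersion. On each such piece there exist local $C^p$ definable selections $v : U \cap \cX_i \to \bbR^d$ with $v(w) \in D(w)$, and by covering each fiber $D(w_0)$ by such pieces, one ensures that for every $w_0 \in \cX_i$ and every $v_0 \in D(w_0)$ there is a $C^p$ definable selection $v$ defined on an open neighborhood of $w_0$ in $\cX_i$ with $v(w_0) = v_0$. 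Compatibility with $\cM$ is handled by the usual refinement step in the stratification theorem.

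The variational identity is then obtained along curves lying in a single stratum. Fix $w_0 \in \cX_i$, $v_0 \in D(w_0)$, and the associated $C^p$ selection $v$ through $(w_0, v_0)$. For any tangent vector $\xi \in \cT_{\cX_i}(w_0)$, pick a $C^1$ curve $\sw : (-\varepsilon, \varepsilon) \to \cX_i$ with $\sw(0) = w_0$ and $\dot{\sw}(0) = \xi$. Applying Definition~\ref{def:cons_f} to $\sw$ with the measurable selection $t \mapsto v(\sw(t)) \in D(\sw(t))$ gives
\begin{equation*}
\frac{\dif}{\dif t} f(\sw(t)) = \scalarp{v(\sw(t))}{\dot{\sw}(t)} \quad \textrm{for a.e. } t\, ,
\end{equation*}
while smoothness of $f_{|\cX_i}$ yields $\frac{\dif}{\dif t} f(\sw(t)) = \scalarp{\nabla_{\cX_i} f(\sw(t))}{\dot{\sw}(t)}$ at \emph{every} $t$. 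Since $v$, $\nabla_{\cX_i} f$ and $\sw$ are continuous, both sides of $\scalarp{v(\sw(t)) - \nabla_{\cX_i} f(\sw(t))}{\dot{\sw}(t)} = 0$ are continuous in $t$, so the identity extends from almost every to every $t$; evaluated at $t = 0$ it reads $\scalarp{v_0 - \nabla_{\cX_i} f(w_0)}{\xi} = 0$. Letting $\xi$ range over $\cT_{\cX_i}(w_0)$ and $v_0$ range over $D(w_0)$ then forces $P_{\cT_{\cX_i}(w_0)} D(w_0) = \{\nabla_{\cX_i} f(w_0)\}$.

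The main obstacle lies in the first step, namely producing a stratification rich enough that every point $(w_0, v_0) \in \Graph D$ with $w_0 \in \cX_i$ admits a continuous definable selection $v$ through it on an open neighborhood of $w_0$ in $\cX_i$. This is where the full strength of the o-minimal cell decomposition is used: one applies the stratification theorem to $\Graph D$ and then performs a constant-rank/submersion refinement so that the projection $\Graph D \to \bbR^d$ is locally trivial over each pair of matched strata; local smooth sections through any prescribed point of a fiber then exist automatically. Once this geometric input is secured, the conservative-field identity supplies the algebraic conclusion essentially for free, and the resulting partition satisfies both Proposition~\ref{prop:def_strat} for $f$ and the additional projection property stated here.
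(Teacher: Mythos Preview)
The paper does not actually prove this proposition: it is quoted verbatim from \cite[Theorem~4]{bolte2021conservative}, and the only argument supplied is the short remark immediately following the statement, which explains why the extra compatibility clause with a definable set $\cM$ comes for free (the construction in \cite{bolte2021conservative} begins with an application of Proposition~\ref{prop:def_strat}, and that proposition already allows one to refine the partition so that $\cM$ is a union of strata).

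Your sketch is a faithful outline of the proof that the cited reference gives. The two key moves --- (i) a simultaneous $C^p$ stratification of $\bbR^d$, $\Graph D$, and $\cM$ fine enough that every fiber point $(w_0,v_0)$ lies on a sheet whose projection to the base stratum is a submersion, hence admits a local $C^p$ section through it; and (ii) the chain-rule argument along a $C^1$ curve in a single stratum, upgraded from ``almost every $t$'' to ``every $t$'' by continuity of the selection and of $\nabla_{\cX_i} f$ --- are exactly the ingredients of the original proof. Your treatment of the $\cM$-compatibility via the standard refinement step matches the paper's remark. There is nothing further to compare; the paper simply defers the argument.
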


\begin{remark}
  The last point of Proposition~\ref{prop:var_strat_cons} is not formally stated in \cite[Theorem 4]{bolte2021conservative}. Nevertheless, in its proof, the first stage of the construction of $(\cX_i)$ is based on Proposition~\ref{prop:def_strat}. Thus, at this stage we can use the last point of Proposition~\ref{prop:def_strat} (i.e. the one of \cite[4.8]{van1998tame}) and obtain the stated result.
\end{remark}

\section{Conservative mappings}\label{app:conserv}

The purpose of this section is to prove the fact that \emph{i)}$\bar{D}$ from Equation~\eqref{def:riem_cons} is a conservative set-valued field of $\sm$, \emph{ii)} Lemma~\ref{lm:loc_max}.

First, we introduce a generalization of conservative fields to vector-valued functions.

\begin{definition}[\cite{bolte2021conservative}]\label{def:cons_map}
  For a locally Lipschitz function $f = (f_1, \ldots, f_m): \bbR^d \rightarrow \bbR^m$, a set-valued map $J: \bbR^d \rightrightarrows \bbR^{m \times d}$ is said to be a \emph{conservative mapping} for $f$ if for every $ 1\leq i \leq m$ the $i$-th row of $J$ is a conservative field for $f_i$. 
\end{definition}

One of the issue of the Clarke subgradient is that it is not closed under composition. That is to say, if $f, g : \bbR \rightarrow \bbR$ are locally Lipschitz we do not necessarily have $\partial (f\circ g) = \partial f \times \partial g$. On the contrast, product of conservative mapping remains conservative. This is the main reason why the backpropagation algorithm outputs an element of a conservative set-valued field.
\begin{proposition}[{\cite[Lemma 5]{bolte2021conservative}}]\label{pr:comp_cons}
  Consider $f : \bbR^d \rightarrow \bbR^m$, $g: \bbR^m \rightarrow \bbR^{l}$ two locally Lipschitz continuous functions with $D_f: \bbR^{d} \rightrightarrows \bbR^{m \times d}$, $D_g : \bbR^m \rightrightarrows \bbR^{l \times m}$ two corresponding conservative mappings. The set-valued map
  \begin{equation*}
    D_{g \circ f} : w \mapsto \{J \in \bbR^{l \times m} : \textrm{$J = J_2 J_1$, with $J_f \in D_f(w)$ and $J_g \in D_{g}(f(w))$}\}\, 
  \end{equation*} 
  is a conservative mapping of $g \circ f$.
\end{proposition}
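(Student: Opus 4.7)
The plan is to verify the two requirements of Definition~\ref{def:cons_f} row by row: first that $D_{g\circ f}$ is a graph-closed, locally bounded, nonempty-valued set-valued map, and then that along every absolutely continuous curve the chain-rule identity holds almost everywhere. The first, ``topological'' part is essentially bookkeeping. Nonemptiness of $D_{g\circ f}(w)$ comes directly from that of $D_f(w)$ and $D_g(f(w))$. Local boundedness follows because $f$ is continuous, so any neighborhood of $w$ maps into a neighborhood of $f(w)$ on which $D_g$ is bounded, and on that same neighborhood $D_f$ is bounded, so $\|J_2 J_1\|$ is controlled. Graph-closedness: if $(w_k, J_k) \to (w, J)$ with $J_k = J_{2,k} J_{1,k}$, the local boundedness lets one extract a subsequence with $J_{1,k}\to J_1$ and $J_{2,k}\to J_2$; graph-closedness of $D_f$ gives $J_1 \in D_f(w)$, and since $f(w_k)\to f(w)$, graph-closedness of $D_g$ gives $J_2 \in D_g(f(w))$, whence $J = J_2 J_1 \in D_{g\circ f}(w)$.

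For the conservation identity, I would fix an absolutely continuous curve $\sw : [0,1] \to \bbR^d$ and consider the composite curve $\sv(t) := f(\sw(t))$. Since $f$ is locally Lipschitz and $\sw$ is absolutely continuous with values in a compact set, $\sv$ is itself absolutely continuous, and both $\sw$ and $\sv$ are differentiable a.e. Applying the conservativity of $D_f$ componentwise: for a.e. $t$ and every $J_1 \in D_f(\sw(t))$, each row $(J_1)_j$ satisfies $\frac{d}{dt} f_j(\sw(t)) = \langle (J_1)_j, \dot{\sw}(t)\rangle$, i.e. $\dot{\sv}(t) = J_1 \dot{\sw}(t)$. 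Applying conservativity of $D_g$ to the curve $\sv$: for a.e. $t$ and every $J_2 \in D_g(\sv(t)) = D_g(f(\sw(t)))$, each row $(J_2)_i$ satisfies $\frac{d}{dt} g_i(\sv(t)) = \langle (J_2)_i, \dot{\sv}(t)\rangle$. The two statements hold on full-measure subsets of $[0,1]$, so also on their intersection. Composing, for a.e. $t$ and every choice of $J_1, J_2$,
\begin{equation*}
\tfrac{d}{dt}(g\circ f)_i(\sw(t)) = \langle (J_2)_i, J_1 \dot{\sw}(t)\rangle = \langle (J_2 J_1)_i, \dot{\sw}(t)\rangle,
\end{equation*}
which is exactly the conservativity of the $i$-th row of $D_{g\circ f}$.

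The only genuinely delicate point is the order of quantifiers when combining the two conservation properties. The conservativity of $D_g$ is stated pointwise along curves in the target space $\bbR^m$; here we must apply it to the specific curve $\sv = f\circ \sw$, relying on its absolute continuity (guaranteed by local Lipschitzness of $f$ and absolute continuity of $\sw$). Once that is secured, the identity $\dot{\sv}(t) = J_1 \dot{\sw}(t)$ for every $J_1 \in D_f(\sw(t))$ holds a.e. and combines cleanly with the $D_g$-identity on the intersection of two full-measure sets, so no further measure-theoretic care is needed beyond a standard countable intersection argument across the finitely many component indices.
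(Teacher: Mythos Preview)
The paper does not give its own proof of this proposition; it is quoted directly from \cite[Lemma~5]{bolte2021conservative}. Your argument is correct and is essentially the one in that reference: check the regularity conditions (nonemptiness, local boundedness, graph-closedness) by elementary compactness and continuity considerations, then verify the chain-rule identity along an arbitrary absolutely continuous curve by first using local Lipschitzness of $f$ to ensure $\sv = f\circ\sw$ is absolutely continuous, and finally intersecting the two full-measure sets on which the respective conservativity identities for $D_f$ and $D_g$ hold.
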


 In the context of the paper, $p_1, \ldots, p_n : \bbR^d \rightarrow \bbR$ are semialgebraic potentials of semialgebraic conservative set-valued fields $D_1, \ldots, D_n$. Recall that $\sm(w) = \min_{i} p_i(w)$ and the definition $\bar{D}$ in~\eqref{eq:avg_consfiel}. We now have the tools to prove that $\bar{D}$ is a conservative set-valued field of $\sm$.

\begin{lemma}\label{lm:max_consgrad}
  The map $\bar{D}: \bbR^d \rightrightarrows \bbR^d$ is a semialgebraic conservative set-valued field for the potential $\sm$.
\end{lemma}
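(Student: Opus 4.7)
The plan is to verify, in turn, each of the four conditions built into the statement: (a) semialgebraicity of $\bar{D}$, (b) nonempty values and local boundedness, (c) graph closure, and (d) the defining conservativity relation along absolutely continuous curves. Conditions (a) and (b) should be essentially bookkeeping: the active-index map $I$ takes values in a finite power set, each preimage $\{w : I(w) = S\}$ is semialgebraic (cut out by finitely many equalities and strict inequalities among the $p_i$'s), and on each such piece $\bar{D}$ is a convex hull of a fixed finite collection of the semialgebraic sets $D_i(\cdot)$; nonemptiness and local boundedness transfer from each $D_i$ to the convex hull. For graph closure (c), suppose $(w_n, v_n) \to (w, v)$ with $v_n = \sum_i \lambda_{i,n} g_{i,n}$, $g_{i,n} \in D_i(w_n)$, $\lambda_{i,n} \geq 0$, $\sum_i \lambda_{i,n} = 1$, and $\lambda_{i,n} > 0 \Rightarrow p_i(w_n) = \sm(w_n)$. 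Extract a subsequence along which $\lambda_{i,n} \to \lambda_i$ and $g_{i,n} \to g_i$; graph closure of each $D_i$ gives $g_i \in D_i(w)$, and continuity of the $p_i$'s and of $\sm$ combined with $\lambda_i > 0$ forces $p_i(w) = \sm(w)$, so $v = \sum_i \lambda_i g_i$ lies in $\bar{D}(w)$.

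The substantive step is (d). Fix an absolutely continuous curve $\sw : [0,1] \to \bbR^d$. Each $p_i \circ \sw$ is absolutely continuous (composition of a locally Lipschitz function with an absolutely continuous curve of bounded image), and hence so is the finite minimum $\sm \circ \sw$. Choose a full-measure set $T \subset [0,1]$ on which $\sm \circ \sw$ and every $p_i \circ \sw$ are differentiable, and on which, by Definition~\ref{def:cons_f} applied to each $D_i$, one has $(p_i \circ \sw)'(t) = \langle v_i, \dot{\sw}(t) \rangle$ for every $v_i \in D_i(\sw(t))$ and every $i$. For $t \in T$ and any active index $i \in I(\sw(t))$, the nonnegative function $p_i \circ \sw - \sm \circ \sw$ vanishes at $t$, so $t$ is a local minimum of a differentiable function, forcing $(\sm \circ \sw)'(t) = (p_i \circ \sw)'(t) = \langle v_i, \dot{\sw}(t)\rangle$. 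Taking an arbitrary convex combination $v = \sum_{i \in I(\sw(t))} \lambda_i v_i$ with $v_i \in D_i(\sw(t))$ and pairing with $\dot{\sw}(t)$ gives $\langle v, \dot{\sw}(t)\rangle = \sum_i \lambda_i (p_i \circ \sw)'(t) = (\sm \circ \sw)'(t)$, which is exactly the conservativity identity for $\bar{D}$.

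The main obstacle is the reduction in the previous paragraph: in general $\min(f_1, f_2)$ need not be differentiable at a point where $f_1 = f_2$, so one cannot a priori identify $(\sm \circ \sw)'(t)$ with the derivative of any individual active component. Absolute continuity of $\sm \circ \sw$ bypasses this cleanly: we only need the identity on a full-measure set, and on that set the local-minimum trick above turns the pointwise inequality $p_i \circ \sw \geq \sm \circ \sw$ into derivative equality, without ever analyzing the non-differentiability locus of the min directly. This is the one place where the argument uses more than mere algebraic manipulation of the defining formula for $\bar{D}$.
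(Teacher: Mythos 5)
Your proof is correct, but it takes a genuinely different route from the paper. The paper writes $\sm$ as the composition $w \mapsto (p_1(w),\dots,p_n(w)) \mapsto \min_i$, takes the stacked conservative mapping $[D_1,\dots,D_n]^{\top}$ for the first map and the Clarke subgradient $\conv\{e_i : i \in I\}$ of the coordinate minimum for the second, and then invokes the composition rule for conservative mappings (Proposition~\ref{pr:comp_cons}) together with stability of conservativity under convex hulls; semialgebraicity is dispatched by closure under semialgebraic operations. You instead verify Definition~\ref{def:cons_f} directly: the structural properties (nonemptiness, local boundedness, graph closure, semialgebraicity) by hand, and the conservativity identity along an absolutely continuous curve via the observation that, at almost every $t$ where everything is differentiable, each active $p_i\circ\sw - \sm\circ\sw$ has an interior minimum at $t$, forcing $(\sm\circ\sw)'(t) = (p_i\circ\sw)'(t) = \scalarp{v_i}{\dot{\sw}(t)}$, after which the convex-combination computation is immediate. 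In effect you reprove, in this special case, the chain-rule fact that the paper imports from \cite{bolte2021conservative}; your argument is more self-contained and elementary, while the paper's is shorter and reuses general machinery that is needed elsewhere anyway (e.g.\ for backpropagation). One small imprecision in your graph-closure step: since the $D_i$ need not be convex-valued, an element of $\bar{D}(w_n)$ is a convex combination of possibly several points from each active $D_i(w_n)$, not one point per index as you wrote; this is repaired either by applying Carath\'eodory's theorem to get a representation with a uniformly bounded number of terms (extracting subsequences for the index assignments as well) or by replacing each $D_i$ with $\conv D_i$ at the outset, and the same remark applies, harmlessly, to the convex combinations in your conservativity step.
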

\begin{proof}
  The fact that $\Graph\bar{D}$ is semialgebraic comes from the fact that it is constructed from finite semialgebraic operations involving only semialgebraic sets. To prove that $\bar{D}$ is a conservative set-valued field note that $\sm$ can be written as a composition of semialgebraic functions:
  \begin{equation*}
    w \overset{\varphi_1}{\mapsto} (p_1(w), \ldots, p_n(w)) \overset{\varphi_2}{\mapsto} \min(p_1(w), \ldots, p_n(w))
  \end{equation*}
   Then, $D_p(w) = [D_1(w), \ldots, D_n(w)]^{\top}$ is a conservative mapping of $\varphi_1$ and $(x_1, \ldots, x_n) \rightrightarrows \conv\{ e_i : x_i = \min(x_1, \ldots, x_n) \}$, where $e_i \in \bbR^n$ is the $i$-the element of the canonical basis, is the Clarke subgradient (and thus a conservative gradient) of $\varphi_2$. The claim follows from Proposition~\ref{pr:comp_cons} and the fact that conservativity is stable by convexity.
\end{proof}

\begin{remark}\label{rmk:max_subg}
  Even if $D_1, \ldots, D_n$ are Clarke subgradients, $\bar{D}$ is not necessarily a Clarke subgradient. For instance, if $n=2$, $p_1(w) = \min(0,w)$ and $p_2(w) = \min(0, -w)$, then $\sm(w) = 0$ and for any $w \neq 0$, $\bar{D}(w) = \{0\}$. Nevertheless, $\bar{D}(0) = [-1,1]$. Thus, conservative set-valued fields appear naturally in the analysis of our problem, independently of the use of backpropagation.
\end{remark}

We finish this section by a proof of Lemma~\ref{lm:loc_max}. Note that 
$\bbS^{d-1} = \{ u \in\bbR^d: \norm{u} =1\} = g^{-1}(0)$, with $g(w) = \norm{w}^2$. Therefore, $\bbS^{d-1}$ is a $d-1$ dimensional $C^{\infty}$-manifold and for every $u \in \bbS^{d-1}$, 
\begin{equation}\label{eq:tang_sphere}
\cT_{\bbS^{d-1}}(u) = \{v - \scalarp{v}{u} : v \in \bbR^d \}\, .
\end{equation}

\begin{proof}[Proof of Lemma~\ref{lm:loc_max}]
  Assume the contrary and consider $v_s \in \argmin \{ \norm{v_s'}: v_s' \in \bar{D}_s(u^*)\}$. Since $v_s$ is in the tangent space of $\bbS^{d-1}$ at $u^*$, there is a $C^1$ curve $\su: (-\varepsilon, \varepsilon) \rightarrow \bbS^{d-1}$ such that $\su(0) = u^*$ and $\dot{\su}(0) = v_s$. Moreover, since $\bar{D}_s(u^*)$ is convex and $v_s$ is its element of minimal norm, for any $v_s'\in D(u^*)$, $\scalarp{v_s}{v_s'} \geq \norm{v_s}^2 >0$. 
  
  Since $\Graph \bar{D}_s$ is closed, it also holds, for $t \geq 0$ small enough, that for any $v_s' \in \bar{D}_s(\su(t))$, $2\scalarp{v_s}{v_s'} \geq \norm{v_s}^2$ and, thus, for $t$ small enough, $4\scalarp{v_s'}{\dot{\su}(t)} \geq \norm{v_s}^2$. 
  
  In particular, for almost every $t \in [0,\delta]$, for $\delta$ small enough, and every $v \in \bar{D}(\su(t))$, 
  \begin{equation*}
    \frac{\dif}{\dif t}\sm(\su(t)) = \scalarp{v}{\dot{\su}(t)} = \scalarp{ v - \scalarp{v}{\su(t)}\su(t)}{\dot{\su}(t)} \geq \frac{1}{4}\norm{v_s}^2 >0 \, ,
  \end{equation*}
  where we have used that $v - \scalarp{v}{\su(t)}\su(t) \in \bar{D}_s(\su(t))$ and that $\scalarp{\su(t)}{\dot{\su}(t)} = 0$.
  Therefore, $\sm$ strictly increase on $[0, \delta]$, which contradicts the fact that $u^*$ is a local maximum.
\end{proof}

\section{Discretization of differential inclusions}\label{app:interp}

\paragraph{Differential inclusions.}
Consider $B \subset \bbR^d$. We say that $\sH$ is a set-valued map from $B$ to $\bbR^d$, denoted $\sH: B \rightrightarrows \bbR^d$, if for every $w \in B$, $\sH(w)$ is a subset of $\bbR^d$. We say that $\sH$ is graph-closed if $\Graph H := \{(w,v): v \in \sH(w) \}$ is closed. It is said to have nonempty (respectively convex) values if for every $w \in B$, $\sH(w)$ is nonempty (respectively convex). It is said to be locally bounded, if every $w \in \bbR^d$ admits a neighborhood $\cU$ of $w$ and $M>0$, such that for every $w' \in \cU$, and $v \in \sH(w')$, $\norm{v} \leq M$.

\paragraph{Differential inclusions and Lyapunov functions.}
To every such $\sH$ we can associate a differential inclusion (DI):
\begin{equation}\label{eq:DI}
    \dot{\su}(t) \in \sH(\su(t)) \, .
\end{equation}
We say that $\su: \bbR_{+} \rightarrow B$ is a solution to~\eqref{eq:DI}, if Equation~\eqref{eq:DI} is satisfied for almost every $t \geq 0$.
A continuous function $\Lambda: B \rightarrow \bbR$ is said to be a Lyapunov function of~\eqref{eq:DI} for a set $\cZ \subset B$, if for every solution $\su$ of~\eqref{eq:DI},
\begin{equation*}
  \Lambda(\su(t)) \leq \Lambda(\su(0))\, ,
\end{equation*}
with strict inequality as soon as $\su(0) \not \in \cZ$. 

In the context of our paper, we are of course interested in the case where $B  = \bbS^{d-1}$ and $\sH = \bar{D}_s$. In such case, as explained before the statement of Theorem~\ref{thm:main}, $-\sm$ is a Lyapunov function for the set $\cZ_s$.

\paragraph{Discretization of differential inclusions.}
Consider a $\bbR^d$-valued sequence $(u_k)$, satisfying the following recursion:
\begin{equation*}
  u_{k+1} = u_k + \gamma_k v_k + \gamma_k e_k\, ,
\end{equation*}
where $(\gamma_k)$ is a sequence of positive stepsizes, where $v_k,e_k\in \bbR^d$. In practice, $v_k$ will be chosen close to an element of $\sH(u_k)$, with $\sH$ some set-valued map (in our setting $\sH = \bar{D}_s$), and $(e_k)$ will represent some (stochastic) perturbations that will vanish when $k \rightarrow + \infty$. Hence, one can view this algorithm as an Euler-like discretization of the DI associated to $\sH$. 

Under mild assumptions, under a presence of a Lyapunov function, we can characterize the accumulation points of $(u_k)$.
The first such result was established in the seminal work of \cite{benaim_05_DI_1}.
Similar statements were lately proved in \cite{duchi2018stochastic,borkar2008stochastic,dav-dru-kak-lee-19}. Here we use a version of the result proved in \cite[Theorem 3.2]{dav-dru-kak-lee-19}. 

To state the proposition, for every $T>0$ and $k \in \bbN$, we define
\begin{equation*}
  \sn(k, T) = \sup \left\{ l \geq k: \sum_{i=k}^l \gamma_i \leq T \right\}\, .
\end{equation*}

\begin{proposition}\label{pr:stoch_approx_our}
  Let $B \subset \bbR^d$ be closed and let $\sH: B \rightrightarrows \bbR^d$ be a graph-closed set-valued map, with convex and nonempty values. Assume the following.
  \begin{enumerate}[label=\roman*)]
    \item The sequences $(u_k), (v_k)$ are bounded and any accumulation point of $(u_k)$ lies in $B$.
    \item $(\gamma_k)$ is a sequence of positive numbers such that $\sum_{k} \gamma_k = +\infty$ and $\gamma_k \rightarrow 0$.
    \item\label{hyp:perturb_zero} For every $T>0$, 
    \begin{equation*}
     \lim_{k \rightarrow + \infty} \sup\left\{ \norm{\sum_{i=k}^l \gamma_i e_i} : k \leq l \leq \sn(k,T)\right\}  = 0 \, .
    \end{equation*}
    \item \label{hyp:drus_lyap} There is $\Lambda : B \rightarrow \bbR$, a Lyapunov function associated with~\eqref{eq:DI} and the set $\cZ = \{u \in B: 0 \in \sH(u)\}$, such that $\Lambda(\cZ)$ is of empty interior.
     \item\label{hyp:drus_conv} For any unbounded sequence $(k_j)$, such that $(u_{k_j})$ converges to some $u\in B$, it holds that 
    \begin{equation}\label{eq:conv_setv_seq}
     \lim_{j \rightarrow + \infty} \dist(\sH(u), v_{k_j}) =   0
    \end{equation}
   \end{enumerate}
Then the limit points of $(u_k)$ are in $\cZ$ and the sequence $\Lambda(u_k)$ converges.
\end{proposition}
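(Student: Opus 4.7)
The plan is to follow the classical stochastic approximation strategy: view $(u_k)$ through a continuous-time interpolation, identify its limit set with an internally chain transitive (ICT) set of the differential inclusion~\eqref{eq:DI}, and then exploit the Lyapunov function to squeeze that set into $\cZ$. Concretely, set $t_k=\sum_{i=0}^{k-1}\gamma_i$ (so $t_k\to\infty$ by (ii)) and let $\bar{u}:[0,\infty)\to\bbR^d$ be the piecewise affine interpolation with $\bar{u}(t_k)=u_k$. The key analytic step is to show that $\bar{u}$ is an asymptotic pseudotrajectory (APT) of~\eqref{eq:DI}, i.e.\ for every $T>0$,
$$\lim_{t\to\infty}\inf_{\su}\sup_{s\in[0,T]}\norm{\bar{u}(t+s)-\su(s)}=0,$$
where the infimum is over solutions $\su$ of~\eqref{eq:DI} with $\su(0)=\bar{u}(t)$.

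To establish the APT property, I would fix any diverging subsequence $t_{k_j}$, shift to form $\bar{u}^j(s):=\bar{u}(t_{k_j}+s)$, and let $V^j$ be the piecewise-constant extension of $(v_{k_j+i})_{i\geq 0}$ to $[0,T]$. Assumption~(i) makes $(\bar{u}^j)$ equi-Lipschitz on $[0,T]$, while~(iii) ensures that the cumulative noise contribution to $\bar{u}^j(s)-\bar{u}^j(0)-\int_0^s V^j(\tau)\,\dif\tau$ vanishes uniformly. Arzela-Ascoli for $\bar{u}^j$ and weak-$*$ compactness in $L^\infty$ for $V^j$ then give, along a subsequence, uniform convergence $\bar{u}^j\to\bar{u}^\infty$ together with $V^j\rightharpoonup V^\infty$, and $\bar{u}^\infty(s)=\bar{u}^\infty(0)+\int_0^s V^\infty(\tau)\,\dif\tau$. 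The delicate point is certifying $V^\infty(s)\in\sH(\bar{u}^\infty(s))$ for a.e.\ $s$: condition~(v) says $v_{k_j+i}$ clusters inside $\sH(u)$ whenever $u_{k_j+i}$ clusters at $u$, and combining this with graph closedness, convex nonempty values, and local boundedness (hence upper semicontinuity) of $\sH$, together with Mazur's lemma to turn weak into strong convergence of convex combinations, yields the desired pointwise inclusion almost everywhere.

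Once $\bar{u}$ is an APT, the Benaim-Hofbauer-Sorin theorem implies that the limit set $L(\bar{u})=\bigcap_{T\geq 0}\overline{\{\bar{u}(t):t\geq T\}}$ is internally chain transitive for~\eqref{eq:DI}. Since $\Lambda$ is a Lyapunov function for $\cZ$ with $\Lambda(\cZ)$ of empty interior, the classical ICT argument forces $\Lambda$ to be constant on $L(\bar{u})$ at a value lying in $\Lambda(\cZ)$; in particular $L(\bar{u})\subset\cZ$. Because $\bar{u}(t_k)=u_k$, every accumulation point of $(u_k)$ sits in $L(\bar{u})\subset\cZ$, and $\Lambda(u_k)$ converges since $\Lambda$ is continuous and constant on $L(\bar{u})$.

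The main obstacle is the APT step, and within it the passage from the approximate inclusion $v_k\approx\sH(u_k)$ to the exact pointwise inclusion $V^\infty(s)\in\sH(\bar{u}^\infty(s))$ after taking weak limits. Because $\sH$ is set-valued and $v_k$ is only close to $\sH(u_k)$ rather than a genuine selection thereof, upper semicontinuity must be combined carefully with convexity, and Mazur's lemma (or a direct measurable-selection argument) is the standard workaround. Handling the noise contribution via~(iii) is essentially a summation-by-parts computation and poses no conceptual difficulty.
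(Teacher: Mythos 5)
Your proposal is correct and follows essentially the same route as the paper: the core of both arguments is the Benaïm--Hofbauer--Sorin-style functional limit (APT) step for the interpolated process, with the noise handled through condition \ref{hyp:perturb_zero} and the pointwise inclusion $\dot{\su}(t)\in\sH(\su(t))$ obtained from condition \ref{hyp:drus_conv} together with convexity and graph-closedness of $\sH$ via limits of convex combinations, followed by the Lyapunov/empty-interior argument. The only (immaterial) differences are in packaging --- the paper re-proves just the interpolation lemma and delegates the conclusion to \cite[Theorem 3.2]{dav-dru-kak-lee-19}, while you invoke the ICT limit-set theorem of \cite{benaim_05_DI_1} directly --- and your appeal to local boundedness of $\sH$ is not among the stated hypotheses, but it is also not needed since the relevant velocities are the bounded $(v_k)$.
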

The statement of \cite[Theorem 3.2]{dav-dru-kak-lee-19} have slightly different assumptions on $\sH$, $(e_k)$ and $(\gamma_k)$, and we devote the end of this section to describe how to adapt their proof to obtain our statement.

 The only difference between Proposition~\ref{pr:stoch_approx_our} and \cite[Theorem 3.2]{dav-dru-kak-lee-19} are the following.

\begin{enumerate}
  \item In \cite[Theorem 3.2]{dav-dru-kak-lee-19} there are no assumptions on $\sH$ and~\eqref{eq:conv_setv_seq} is replaced by 
  \begin{equation*}
   \lim_{N \rightarrow + \infty} \dist\left(\sH(u), \frac{1}{N} \sum_{j=1}^N v_{k_j}\right) = 0\, .
  \end{equation*}
  \item In \cite[Theorem 3.2]{dav-dru-kak-lee-19} there is a requirement that $\sum_k \gamma_k^2 < + \infty$ and that $\sum_{k}\gamma_k e_k$ converges.
\end{enumerate}

The only part in the proof of \cite[Theorem 3.2]{dav-dru-kak-lee-19} where these two points are needed are in their use of \cite[Theorem 3.1]{dav-dru-kak-lee-19} which corresponds to \cite[Theorem 3.7]{duchi2018stochastic}. To state the result let us first define $\tau_k = \sum_{i=0}^k \gamma_i$ and the linearly interpolated processes $\sU$ as 
 \begin{equation*}
  \sU(t) = u_k + \frac{t - \tau_k}{\gamma_{k+1}} (u_{k+1} - u_k) \quad \textrm{ if $t \in [\tau_k, \tau_{k+1}]$}\, .
 \end{equation*}
 \begin{proposition}[ {\cite[Theorem 4.2]{benaim_05_DI_1}\cite[Theorem 3.7]{duchi2018stochastic}}]\label{prop:interp_proc}
  \phantom{=}\\For any $T>0$ and any sequence $t_k \rightarrow + \infty$, we can extract a subsequence $(k_j)$ such that there is a continuous $\su : [0,T] \rightarrow \bbR^d$, for which, 
  \begin{equation*}
    \lim_{j \rightarrow + \infty}\sup_{h \in [0, T]} \norm{\sU(t_{k_j} +h) -\su(h)} = 0 \, , 
  \end{equation*}
  and $\su$ is a solution to~\eqref{eq:DI}.
 \end{proposition}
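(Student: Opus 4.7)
The plan is an Arzelà--Ascoli compactness argument coupled with a tangency (Lebesgue-point) argument. I would first prove that the shifted interpolants $h \mapsto \sU(t_k + h)$ form an equicontinuous family on $[0,T]$, extract a uniformly convergent subsequence, and then verify that the limit solves~\eqref{eq:DI} by exploiting the convex values and graph-closedness of $\sH$ together with hypothesis~\ref{hyp:drus_conv}.

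\emph{Equicontinuity and extraction.} Using the recursion $u_{k+1} = u_k + \gamma_k v_k + \gamma_k e_k$, the piecewise-affine interpolant satisfies $\dot{\sU}(r) = v_k + e_k$ on $(\tau_k, \tau_{k+1})$. With $(v_k)$ bounded by some $M>0$ and any $0 \leq s \leq t \leq T$, splitting the integral of $\dot{\sU}$ into velocity and perturbation parts yields $\norm{\sU(t_k + t) - \sU(t_k + s)} \leq M (t-s) + \bigl\|\sum_{i \in I(k,s,t)} \gamma_i e_i \bigr\|$, where $I(k,s,t)$ collects the discrete indices of the steps contained in $[t_k+s, t_k+t]$ (up to the two fractional boundary cells). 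Hypothesis~\ref{hyp:perturb_zero}, applied with $T$ slightly enlarged to absorb the shift by $t_k$, bounds the second term uniformly in $s,t \in [0,T]$ by a quantity vanishing as $k \to \infty$. Combined with boundedness of $(u_k)$, Arzelà--Ascoli extracts a subsequence $(k_j)$ along which $\sU(t_{k_j} + \cdot)$ converges uniformly on $[0,T]$ to a Lipschitz continuous $\su:[0,T]\to\bbR^d$.

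\emph{Verification of the DI.} Let $h \in (0,T)$ be a Lebesgue point of $\dot{\su}$. For small $\delta > 0$, uniform convergence gives $\frac{\su(h+\delta) - \su(h)}{\delta} = \lim_{j \to \infty} \frac{1}{\delta}\bigl(\sU(t_{k_j}+h+\delta) - \sU(t_{k_j}+h)\bigr)$, which unfolds into a time-average of $v_i + e_i$ over the indices $i$ with $\tau_i$ in $[t_{k_j}+h, t_{k_j}+h+\delta]$. The perturbation part vanishes along $j$ by~\ref{hyp:perturb_zero}. For the velocity part, uniform convergence together with $\gamma_k \to 0$ implies that the corresponding $u_i$ converge to $\su(h+r)$ for the matching continuous time $r$; hypothesis~\ref{hyp:drus_conv} then forces $\dist(v_i, \sH(\su(h+r))) \to 0$. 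Since $\sH$ is graph-closed and locally bounded, it is upper semicontinuous, so for $\delta$ small enough, $\sH(\su(h+r)) \subset \sH(\su(h)) + B(0,\varepsilon)$ uniformly for $r \in [0,\delta]$. Convexity of $\sH(\su(h))$ keeps the time-average inside $\sH(\su(h)) + B(0,\varepsilon)$, and letting $\delta \to 0$ then $\varepsilon \to 0$ delivers $\dot{\su}(h) \in \sH(\su(h))$.

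\emph{Main obstacle.} The delicate step is the last one: the individual $v_i$ are not assumed to lie in $\sH(u_i)$, only asymptotically close to $\sH$ at the limit of a convergent subsequence of $(u_k)$. Controlling a time-average of such approximate selections over a continuum window of length $\delta$, and then passing $\delta \to 0$, requires combining uniform trajectory convergence, upper semicontinuity of $\sH$, and convexity of its values; a single-index tangency argument is insufficient. Once this averaging argument is in place, the remainder reduces to standard Arzelà--Ascoli and Lebesgue differentiation.
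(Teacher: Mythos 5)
Your argument is correct, but the verification that the limit curve solves~\eqref{eq:DI} follows a genuinely different route from the paper's. The paper proceeds as in \cite[Proof of Theorem 4.2]{benaim_05_DI_1}: it writes $\sU(t+h)=\sU(t)+\int_t^{t+h}(\sV(t')+\sE(t'))\,\dif t'$ with piecewise-constant velocity and error processes, extracts (after the uniform-convergence step, which matches your Arzel\`a--Ascoli argument) a limit velocity $\sv$ with $\su(h)=\su(0)+\int_0^h\sv(t')\,\dif t'$, and then uses a Mazur-type fact: for almost every $t'$, $\sv(t')$ is a limit of finite convex combinations of $\{\sV(t_{k_{j'}}+t'):j'\ge j\}$. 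Since at a \emph{fixed} time $t'$ the corresponding iterates $u_i$ converge to the single point $\su(t')$, hypothesis~\ref{hyp:drus_conv} applies directly, and convexity together with closedness of the values of $\sH$ gives $\sv(t')\in\sH(\su(t'))$ --- no semicontinuity of $\sH$ is invoked. Your route (Lebesgue points plus averages over windows $[h,h+\delta]$) is more elementary, avoiding weak compactness and Mazur's lemma, but it pays in two places. First, the window indices have $u_i$ clustering along the arc $\su([h,h+\delta])$ rather than at a single point, so hypothesis~\ref{hyp:drus_conv} can only be used after a compactness/contradiction extraction; you correctly flag this as the delicate step, and the extraction you gesture at does close it. Second, the assertion ``graph-closed and locally bounded, hence upper semicontinuous'' uses local boundedness of $\sH$, which is \emph{not} among the hypotheses of Proposition~\ref{pr:stoch_approx_our} (only graph-closedness and convex nonempty values are assumed). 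This is harmless in the paper's application, where $\sH=\bar{D}_s$ is locally bounded because $\bar{D}$ is a conservative field, and it can be repaired in general by truncation: since $\sup_k\norm{v_k}\le M$, replace $\sH$ by $u\rightrightarrows \sH(u)\cap \bar{B}(0,M+1)$, which is graph-closed, uniformly bounded, has nonempty convex values at every point arising as a limit of the relevant $u_i$ (by hypothesis~\ref{hyp:drus_conv}), and is contained in $\sH$; the usc comparison and the final passage $\delta\to 0$, $\varepsilon\to 0$ then go through verbatim. With that caveat made explicit, your proof is complete and yields the same statement.
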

 This result holds under our assumptions. The idea of the proof is standard and goes back to \cite[Proof of Theorem 4.2]{benaim_05_DI_1}.
 
 Indeed, define piecewise constant processes $\sV, \sE : \bbR_{+} \rightarrow \bbR^d$ as
 \begin{equation*}
 \textrm{for $t \in [\tau_k, \tau_{k+1})$}\, , \quad  \sV(t) = v_k\, ,  \quad \sE(t) = e_{k}\, .
 \end{equation*}
Note that for any $t, h>0$,
 \begin{equation*}
  \sU(t+h) = \sU(t) + \int_{t}^{t+h} (\sV(t')+\sE(t')) \dif t'\, . 
 \end{equation*}
Moreover, from our assumptions on $(\gamma_k), (e_k)$, it holds that
\begin{equation*}
 \lim_{t \rightarrow + \infty} \norm{\sup_{h \in [0,T]} \int_{t}^{t+h} (\sE(t')) \dif t'} = 0 \, .
\end{equation*}
 Therefore, fixing $t_{k} \rightarrow + \infty$, and proceeding as in \cite[Proof of Theorem 4.2]{benaim_05_DI_1}, there is $\sv, \su : [0, T] \rightarrow + \infty$ and an extracted sequence $(k_j)$ such that
 \begin{equation*}
 \lim_{j \rightarrow + \infty} \sup_{h \in [0,T]}\norm{\sU(t_{k_j} + h) - \su(h)} = 0 \, ,
 \end{equation*}
 and for any $h \in [0,T]$,
 \begin{equation*}
  \su(h) = \su(0) + \int_{0}^h \sv(t') \dif t'\, ,
 \end{equation*}
 Moreover, for almost every $t' \in[0,T]$, there is a sequence $(\tilde{v}_{k_j})$ such that for each $j$, $\tilde{v}_{k_j}$ is a finite convex combination of elements in $\{\sV(t_{k_{j'}} + t'): j' \geq j \}$ and such that
\begin{equation*}
  \sv(t') = \lim_{j \rightarrow + \infty} \tilde{v}_{k_j} \in \sH(\su(t'))\, ,
\end{equation*}
where the last inclusion comes from~\eqref{hyp:drus_conv} and the fact that $\sH$ is convex-valued.
Consequently, we obtain for almost every $t' \in [0,T]$, $\dot{\su}(t') = \sv(t')\in  \sH(\su(t'))$, which shows the statement of Proposition~\ref{prop:interp_proc} and thus the one of Proposition~\ref{pr:stoch_approx_our}.

\section{Proof of Theorem~\ref{thm:main}}\label{pf:main_th}
To obtain Theorem~\ref{thm:main}, we only need to check the assumptions of Proposition~\ref{pr:stoch_approx_our} \edi{on the event $\cE$}, with $\sH = \bar{D}_s$, $\Lambda = -\sm$, $v_k = \bg_k^s$ and $e_k = \eta_{k+1} + \gamma_k r_k$.

\paragraph{Boundedness of $(u_k)$ and $(\bg_k^s)$.} For each $k$, $\norm{u_k} = 1$. The fact that $(\bg_k^s)$ is bounded comes from the fact that $\lim_{k}\dist(\bar{D}_s(\bbS^{d-1}), g_k^s) = 0$ (or we could already see it from the way $\bg_k^s$ is constructed in the proof of Proposition~\ref{prop:stoch_approx_exp_log}).
\paragraph{Assumption on $(\bg_k^s)$ and $(\bgamma_k)$.} Immediate by Proposition~\ref{prop:stoch_approx_exp_log}.

\paragraph{Assumption on $\bar{D}_s$.} As established before Proposition~\ref{prop:stoch_approx_exp_log}, $-\sm$ is a Lyapunov function to the DI associated with $D_s$ and the set $\cZ_s$. We now prove that $\sm(\cZ_s)$ is of zero-measure. It is a Sard's type result and is a simple adaptation of {\cite[Theorem 5.]{bolte2021conservative}}.

  We first apply Proposition~\ref{prop:var_strat_cons} to $\sm: \bbR^d \rightarrow \bbR$, $\bar{D}$ and $\cM = \bbS^{d-1}$, with $p \geq d$, obtaining $(\cX_i)$ a partition of $\bbR^d$ into $C^p$ manifolds that is compatible with $\bbS^{d-1}$. Then, noting that for any $u \in \cX_i \subset \bbS^{d-1}$, it holds that $\cT_{\cX_i}(u) \subset \cT_{\bbS^{d-1}}(u) = \{v - \scalarp{v}{u} : v \in \bbR^d \}$, we obtain
  \begin{equation*}
   \{ \nabla_{\cX_i} \sm(u)\} = P_{\cT_{\cX_i}(u)}(\bar{D}(u)) =P_{\cT_{\cX_i}(u)}\circ P_{\cT_{\bbS^{d-1}}(u)}(\bar{D}(u)) = P_{\cT_{\cX_i}(u)}(\bar{D}_s(u))\, ,
  \end{equation*}
   which implies
   \begin{equation*}
     \cZ_s \subset \bigcup_{i \in \cI} \{ u \in \cX_i: \nabla_{\cX_i} \sm(u) = 0\}\, .
   \end{equation*}
   Hence,
   \begin{equation*}
     \sm(\cZ_z) \subset \bigcup_{i \in \cI} \sm_{|\cX_i}\left(\{ w \in \cX_i: \nabla_{\cX_i} \sm(w) = 0\}\right)\, .
   \end{equation*}
   By Sard's theorem (\cite{sard1942measure}) every set in the union has zero-measure. Therefore, $\sm(\cZ_s)$ has zero-measure and the assumption on $\bar{D}_s$ holds.

  \paragraph{Assumption on $e_k = \eta_{k+1} + \gamma_k r_k$.} First, note that for any $k\leq l \leq \sn(k,T)$,
  \begin{equation*}
    \norm{\sum_{i=k}^l \bgamma_i^2 r_i} \leq \left(\sup_{k \leq j \leq l} \norm{\bgamma_j r_j}\right)\sum_{i=k}^l \bgamma_i \leq \left(\sup_{k \leq j \leq l} \norm{\bgamma_j r_j}\right)\sum_{i=k}^{\sn(k,T)} \bgamma_i \leq T \sup_{k \leq j \leq l} \norm{\bgamma_j r_j}\, ,
  \end{equation*}
and the right-hand side goes to zero \edi{on $\cE$}, when $k \rightarrow + \infty$. Thus, we only need to prove that the following quantity goes to zero \edi{on $\cE$}
\begin{equation*}
 \sC(k):= \sup\left\{ \norm{\sum_{i=k}^l \bgamma_i \bar{\eta}_{i+1}} : k \leq l \leq \sn(k,T)\right\} \, .
\end{equation*}

To prove that $\sC(k)$ goes to zero we want to apply \cite[Proposition 4.2, Remarks 4.3]{benaim2006dynamics}, which states that it is the case if there is a filtration $(\cF_k)$ such that \emph{i)} $(\bgamma_k)$ and $(\bar{\eta}_{k})$ are adapted to $\cF_k$, \emph{ii)} there is $q \geq 2$ such that $\sup_{k} \bbE[\norm{\bar{\eta}_{k+1}}^q] < + \infty$, \emph{iii)} and $\sum_{k} \bbE[\bgamma_k^{1+q/2}] <  +\infty$.

Note that the first two points hold by Proposition~\ref{prop:stoch_approx_exp_log} (for any $q$) but for the last point, even though we have that, \edi{on $\cE$}, almost surely $\sum_{k}\bgamma_k^{1 + 2/c_3} \leq \sum_{k} k^{-2} < + \infty$, \edi{such bound holds only on $\cE$, and, furthermore,} $c_3$ is a random variable, \edi{therefore we do not necessarily have such bound in expectation for a fixed $q$.}

To address this issue, for every $b \in \bbQ$ define $\hat{\gamma}_k(b) = \min(\bgamma_k, k^{-b})$ and 
\begin{equation*}
  \tilde{\sC}(k,b):= \sup\left\{ \norm{\sum_{i=k}^l \hat{\gamma}_i(b)\eta_{i+1}} : k \leq l \leq \sn(k,T)\right\} \, .
 \end{equation*}
Next, for any $k_0 \in \bbN$, define $A_{k_0, b}$ as the following event 
\begin{equation*}
  A_{k_0, b} = [\forall k \geq k_0\, , \bgamma_k = \hat{\gamma}_k(b)]\, .
\end{equation*}
Note that by construction (since $b$ is fixed) assumptions of \cite[Proposition 4.2, Remarks 4.3]{benaim2006dynamics} are satisfied for $\bar{\eta}_{k+1},\hat{\gamma}_k(b)$. Therefore, $ \tilde{\sC}(k,b) \rightarrow 0$. Moreover, on $A_{k_0, b}$, for $k \geq k_0$, $\tilde{\sC}(k,b) = \sC(k)$. Therefore, on $A_{k_0, b}$, $\sC(k)$ converges to zero.

Finally, \edi{from the properties of $(\bgamma_k)$ on $\cE$}, $\edi{\cE \subset}\bigcup_{k_{0}, b\in \bbN}A_{k_{0},b}$. Hence, $\sC(k,b)$ converges to zero \edi{on $\cE$}.

This completes the proof of the assumption on $e_k = \eta_{k+1} + \gamma_k r_k$ and thus the proof of Theorem~\ref{thm:main}. 

\section{Proof of Proposition~\ref{prop:log_wk}}\label{sec:pf_logwk}

Note that for the exponential loss $l(q)=  e^{-q}$ and the logistic loss $l(q) = \log(1+ e^{-q})$, it holds 
\begin{align}\label{lm:exp_log}
\frac{e^{-q}}{2}\leq -l'(q) \leq e^{-q}, \quad \text{ for all } q \geq 0.
\end{align}
In the following proofs, we assume $k$ is sufficiently large so that $\sm(w_{k}) \geq \varepsilon \norm{w_k}^L$, for some $\varepsilon >0$. Taking for instance $\varepsilon = \liminf \sm(u_k)/2$, this is always possible by \edi{the fact that we are working on the event $\cE$}. We now establish that $(\norm{w_k})_k$ strictly increases to infinity.
\begin{lemma}\label{lm:wk_infty}
  Under Assumptions~\ref{hyp:loss_exp_log}--\ref{hyp:conserv}, \edi{on the event $\cE$}, it holds that $(\norm{w_k})$ is a strictly increasing sequence that diverges to infinity.
\end{lemma}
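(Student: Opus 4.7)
The plan is to expand $\norm{w_{k+1}}^2$ using the (S)GD recursion and extract a strictly positive cross term from Assumption~\ref{hyp:conserv}'s Euler identity, then rule out a finite limit by comparing the per-step increase to a positive constant.

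\textbf{Step 1 (setup).} By Assumption~\ref{hyp:marg_lowb}, fix almost surely $\varepsilon > 0$ and $k_0 \in \mathbb{N}$ such that $\sm(u_k) \geq \varepsilon$ for all $k \geq k_0$. In particular, $w_k \neq 0$ and, by $L$-homogeneity, $p_i(w_k) \geq \sm(w_k) = \norm{w_k}^L \sm(u_k) \geq \varepsilon \norm{w_k}^L > 0$ for every $i$ and $k \geq k_0$.

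\textbf{Step 2 (strict increase).} Starting from~\eqref{eq:sgd_new}, expand
\begin{equation*}
\norm{w_{k+1}}^2 = \norm{w_k}^2 - \frac{2\gamma}{n_b}\sum_{i \in B_k} l'(p_i(w_k))\,\scalarp{\sa_i(w_k)}{w_k} + \Bigl\lVert\tfrac{\gamma}{n_b}\sum_{i \in B_k} l'(p_i(w_k))\sa_i(w_k)\Bigr\rVert^2 .
\end{equation*}
Assumption~\ref{hyp:conserv} gives $\scalarp{\sa_i(w_k)}{w_k} = L\,p_i(w_k)$, while by Step~1 and $l' < 0$ we have $-l'(p_i(w_k))\,L\,p_i(w_k) > 0$ for every $i \in B_k$. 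The squared term is nonnegative, so $\norm{w_{k+1}}^2 > \norm{w_k}^2$ for all $k \geq k_0$, i.e.\ $(\norm{w_k})_{k \geq k_0}$ is strictly increasing. The argument for~\eqref{eq:gd_new} is identical with $B_k = \{1,\dots,n\}$.

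\textbf{Step 3 (divergence by contradiction).} Suppose $\norm{w_k} \nearrow R < +\infty$. Since each $p_i$ is continuous and $L$-homogeneous, $p_i(w_k)$ lies in a bounded interval of the form $[\varepsilon \norm{w_{k_0}}^L,\,M]$ for some $M < +\infty$ independent of $k$. Using~\eqref{lm:exp_log}, $-l'(p_i(w_k)) \geq \tfrac{1}{2}e^{-M} =: c_0 > 0$. Keeping only the cross term in the expansion of Step~2 (the squared term only helps),
\begin{equation*}
\norm{w_{k+1}}^2 - \norm{w_k}^2 \;\geq\; \frac{2\gamma L}{n_b} \sum_{i \in B_k} c_0 \cdot \varepsilon \norm{w_{k_0}}^L \;\geq\; 2\gamma L\, c_0\, \varepsilon\, \norm{w_{k_0}}^L \;>\; 0,
\end{equation*}
since $|B_k| = n_b$ (or $n$ for GD). Thus the per-step increase is bounded below by a strictly positive constant independent of $k$, contradicting $\norm{w_k} \to R < +\infty$. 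Hence $\norm{w_k} \to +\infty$.

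\textbf{Main obstacle.} The argument is essentially routine; the only point requiring mild care is the SGD case, where $B_k$ is random. This is harmless because every summand is strictly positive, so any realization of $B_k$ (nonempty by $n_b \geq 1$) yields the same uniform lower bound.
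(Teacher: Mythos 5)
Your proof is correct and follows essentially the same route as the paper's: expand $\norm{w_{k+1}}^2$, use the Euler identity $\scalarp{\sa_i(w_k)}{w_k}=Lp_i(w_k)$ together with $l'<0$ and $\sm(u_k)\geq\varepsilon$ to get strict increase, then derive a contradiction from a uniform positive lower bound on $-l'(p_i(w_k))$ when the norms are assumed bounded. The only cosmetic difference is that you make that lower bound explicit via~\eqref{lm:exp_log}, where the paper simply invokes the existence of a $\delta>0$.
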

\begin{proof}
  Using the fact that $l'(q) <0$, we obtain
  \begin{equation*}
  \begin{split}
    \norm{w_{k+1}}^2 &= \norm{w_k}^2 - \frac{2\gamma}{n_b} \sum_{i\in B_k} l'(p_i(w_k)) \scalarp{\sa_i(w_k)}{w_k} + \frac{\gamma^2}{n_b^2} \norm{\sum_{i\in B_k}l'(p_i(w_k)) \sa_i(w_k)}^2 \\
    &\geq \norm{w_k}^2 - L\frac{2\gamma}{n_b} \sum_{i\in B_k} l'(p_i(w_k)) p_i(w_k) \\
    &\geq \norm{w_k}^2 - L \frac{2\gamma}{n_b} \sum_{i\in B_k} l'(p_i(w_k)) m(w_k) \\
    &\geq \norm{w_k}^2 - \varepsilon L \frac{2\gamma}{n_b} \sum_{i \in B_k} l'(p_i(w_k))\norm{w_k}^L\, ,
  \end{split}
\end{equation*}
where for the second inequality we used Assumption~\ref{hyp:conserv} and for the last the homogeneity of the margin.
As a result, $(\norm{w_k})_k$ is a strictly increasing sequence. Assume by contradiction that $\sup_{k \geq k_0} \norm{w_k} \leq M$. Then, there exists $\delta > 0$ such that $\inf_{k \geq k_0\, , i \leq n} (-l'(p_i(w_k))) \geq \delta >0$. In particular, it implies that for all $k \geq k_0$,
\begin{equation*}
  \norm{w_{k+1}}^2 \geq \norm{w_k} + 2\varepsilon L \gamma \delta \norm{w_{k_0}}^L\, ,
\end{equation*}
which contradicts the fact that $\sup_{k \geq k_0} \norm{w_k} \leq M$. Therefore, $\norm{w_k} \rightarrow + \infty$.
\end{proof}
We now turn to the proof of the first part of Proposition~\ref{prop:log_wk}.
\begin{proposition}
  Let Assumptions~\ref{hyp:loss_exp_log}--\ref{hyp:conserv} hold. Almost surely \edi{on the event $\cE$}, there exist constants $c_1,c_2 >0$ and $k_2 \in \bbN$ such that for all $k \geq k_2$, 
  \begin{equation*}
   c_1 \log(k) \leq  \norm{w_k}^L \leq c_2 \log(k)\, .
  \end{equation*}
\end{proposition}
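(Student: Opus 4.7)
The plan is to follow the three-step sketch provided by the authors: first derive pointwise two-sided exponential bounds on the loss derivatives, then combine these with the Euler identity from Assumption~\ref{hyp:conserv} to obtain additive bounds on the sequence $A_k := \norm{w_k}^L$, and finally conclude via a discrete integral-comparison argument.

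First, by Assumption~\ref{hyp:marg_lowb} there exists a (random) $\varepsilon>0$ with $\sm(u_k)\geq \varepsilon$ for all large $k$. Since the $p_i$ are continuous and $L$-homogeneous, the quantity $M:= \max_i \sup_{u\in\bbS^{d-1}}|p_i(u)|$ is finite, so $\varepsilon \norm{w_k}^L\leq p_i(w_k)\leq M\norm{w_k}^L$. Combined with~\eqref{lm:exp_log} this yields
\begin{equation*}
\tfrac12 e^{-M\norm{w_k}^L} \leq -l'(p_i(w_k)) \leq e^{-\varepsilon \norm{w_k}^L}.
\end{equation*}

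Second, expand $\norm{w_{k+1}}^2$ from the (S)GD update. The cross term is controlled via the Euler identity $\scalarp{\sa_i(w_k)}{w_k}=Lp_i(w_k)$ together with the lower/upper bounds on both $p_i(w_k)$ and $-l'(p_i(w_k))$; the quadratic term is bounded via $\norm{\sa_i(w_k)}\leq C\norm{w_k}^{L-1}$ and is of smaller order. This yields constants $C_3, C_4>0$ with
\begin{equation*}
\norm{w_k}^2\bigl(1+C_3\gamma \norm{w_k}^{L-2}e^{-M\norm{w_k}^L}\bigr) \leq \norm{w_{k+1}}^2 \leq \norm{w_k}^2\bigl(1+C_4\gamma \norm{w_k}^{L-2}e^{-\varepsilon\norm{w_k}^L}\bigr).
\end{equation*}
Raise both inequalities to the power $L/2$ and Taylor-expand $(1+\delta)^{L/2}=1+\tfrac{L}{2}\delta+O(\delta^2)$; this is valid since $\norm{w_k}\to\infty$ by Lemma~\ref{lm:wk_infty}, which forces $\delta\to 0$. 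The polynomial prefactor becomes $\norm{w_k}^{2L-2}$, which is bounded below by a constant for $k$ large, and, for any $\nu>0$, above by $e^{\nu\norm{w_k}^L}$. Absorbing it into the exponentials at the cost of slightly modified rates $0<a_1<\varepsilon<M<a_2$, one obtains constants $\tilde C_5, \tilde C_6>0$ with
\begin{equation*}
\tilde C_5 \gamma e^{-a_2 A_k} \leq A_{k+1}-A_k \leq \tilde C_6 \gamma e^{-a_1 A_k}.
\end{equation*}

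Third, apply the mean value theorem to $x\mapsto e^{ax}$. For the lower bound, $e^{a_2 A_{k+1}}-e^{a_2 A_k}\geq a_2 e^{a_2 A_k}(A_{k+1}-A_k)\geq a_2 \tilde C_5\gamma$; summing from $k_0$ to $k_0+N$ and taking logarithms yields $A_{k_0+N}\geq c_1\log N$. Symmetrically, $A_{k+1}-A_k\to 0$ lets us bound $e^{a_1 A_{k+1}}$ by a constant multiple of $e^{a_1 A_k}$, so $e^{a_1 A_{k+1}}-e^{a_1 A_k}\leq (\mathrm{const})\tilde C_6\gamma$, and the same telescoping argument gives $A_{k_0+N}\leq c_2\log N$. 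Rewriting $k_0+N$ as $k$ gives the claim.

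The main source of technical bookkeeping is the Taylor/absorption step: one must verify that after the $(1+\delta)^{L/2}$ expansion, the polynomial factor $\norm{w_k}^{2L-2}$ (which grows for $L>1$ and decays for $L=1$) and the $O(\delta^2)$ remainder are genuinely dominated by the exponential, so that the resulting rates $a_1,a_2$ are strictly positive and the inequalities remain valid for all large $k$. Once these rates are secured, the integral comparison in the last step is routine.
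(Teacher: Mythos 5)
Your proposal is correct and takes essentially the same route as the paper's proof: two-sided exponential bounds on $-l'(p_i(w_k))$ from the margin assumption, the Euler identity from Assumption~\ref{hyp:conserv} to control the cross term in the expansion of $\norm{w_{k+1}}^2$, a Taylor expansion of $(1+x)^{L/2}$ with the polynomial prefactor absorbed into a slightly worsened exponential rate, and a telescoping comparison with $e^{at}$. The only cosmetic difference is that you telescope via the mean value theorem (with the $A_{k+1}-A_k\to 0$ trick to control $e^{a_1 A_{k+1}}$ by $e^{a_1 A_k}$), whereas the paper compares the increments to $\int_{m_k}^{m_{k+1}} e^{at}\,\dif t$; these are the same estimate.
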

In the following proof, $ C, C', C_1, C_2, \dots $ denote positive constants that may vary from line to line. Additionally, we repeatedly use the following crude estimate, which follows from Taylor expansion: for $a > 0$ and sufficiently small $ x \in \mathbb{R} $, there exist constants $C, C' > 0$ such that  
\begin{equation*}
  1 + C' x \leq (1 + x)^a \leq 1 + Cx \, .
\end{equation*}  
\begin{proof}
  By Assumption~\ref{hyp:conserv}, for every $k$, $\scalarp{\sa_i(w_k)}{w_k} = Lp_i(w_k) =L \norm{w_k}^L p_i(u_k)$. Therefore,
  \begin{equation}\label{eq:est_wk}
    \begin{split}
      \norm{w_{k+1}}^2 &= \norm{w_k}^2 - 2\frac{\gamma}{n_b} \sum_{i\in B_k} l'(p_i(w_k)) \scalarp{\sa_i(w_k)}{w_k} + \frac{\gamma^2}{n_b^2} \norm{\sum_{i\in B_k}l'(p_i(w_k)) \sa_i(w_k)}^2\\
      &=\norm{w_k}^2 - 2L \norm{w_k}^L \frac{\gamma}{n_b} \sum_{i \in B_k} l'(p_i(w_k)) p_i(u_k) + \frac{\gamma^2}{n_b^2} \norm{\sum_{i\in B_k}l'(p_i(w_k)) \sa_i(w_k)}^2\, .
    \end{split}
  \end{equation}
  Since each $D_i$ is locally bounded, it is bounded on $\bbR^{d-1}$, which implies that there is a constant $C >0$ such that for every $1 \leq i \leq n$, $u \in \bbS^{d-1}$ and $v \in D_i(u)$,
  \begin{equation*}
    \norm{v} + |p_i(u)| \leq C\, .
  \end{equation*} 
  In particular, by Assumption~\ref{hyp:conserv}, $\sa_i(w_k) \leq C \norm{w_k}^{L-1}$.
  Therefore, using Equation~\eqref{lm:exp_log} and the fact that $p_i(u_k) \geq \sm(u_k) \geq \varepsilon$,
  \begin{equation*}
    \begin{split}
      \norm{w_{k+1}}^2 \leq \norm{w_k}^2 +  2CL\norm{w_k}^L \frac{\gamma}{n_b}\sum_{i \in B_k} e^{-\varepsilon \norm{w_k}^L}p_i(u_k)  + C''\frac{\gamma^2\norm{w_k}^{2(L-1)}}{n_b^2} \sum_{i\in B_k} e^{-2\varepsilon \norm{w_k}^L} \,.
    \end{split}
  \end{equation*}
Since $\norm{w_k} \rightarrow + \infty$ and $p_i(u_k) \geq \varepsilon$, there exists a constant $C>0$
  \begin{equation*}
    \norm{w_{k+1}}^2 \leq \norm{w_k}^2 + C \gamma\norm{w_k}^L e^{-\varepsilon \norm{w_k}^L}\, .
  \end{equation*}
  Thus, using the Taylor's expansion of $(1+x)^{L/2}$ near zero, for $k$ large enough,
  \begin{equation*}
    \begin{split}
    \norm{w_{k+1}}^L &\leq \norm{w_k}^L \left( 1 + C\norm{w_k}^{L-2} e^{-\varepsilon\norm{w_k}^L } \right)^{L/2}\\
    &\leq \norm{w_k}^L \left(1 + C' \norm{w_k}^{L-2}e^{-\varepsilon\norm{w_k}^L } \right)\\
    &\leq \norm{w_k}^L + C'' e^{-\varepsilon\norm{w_k}^L /2}\, . 
  \end{split}
  \end{equation*}
 This implies
  \begin{equation*}
    e^{\frac{\varepsilon}{2} \norm{w_k}^L}\left(\norm{w_{k+1}}^L  - \norm{w_k}^L\right)\leq C \, .
  \end{equation*}
  Denote $m_k:=\norm{w_{k+1}}^L$. Since $0 \leq m_{k+1} - m_k \rightarrow 0$, $ m_k \leq t \leq m_{k+1}$ implies $t \leq 2m_k$ for $k$ large enough. Therefore,
  \begin{equation*}
    \int_{m_k}^{m_{k+1}} e^{\varepsilon t/4}\dif t \leq   e^{\varepsilon \norm{w_k}^L/2}\left(\norm{w_{k+1}}^L  - \norm{w_k}^L\right)  \leq C \, .
  \end{equation*}
  In particular, for any $k_0$, such that for all $k \geq k_0$, all the preceding equations hold, we obtain,
  \begin{equation*}
    \frac{4(e^{m_k} -e^{m_{k_0}}) }{\varepsilon} = \int_{m_{k_0}}^{m_{k}} e^{\varepsilon t/4}\dif t  \leq C(k-k_0)\, .
  \end{equation*}
  Therefore, for $k$ large enough,
  \begin{equation*}
    m_{k}\leq C_1 + \log(C_2(k-k_0)) \leq C_3 \log(k)\, ,
  \end{equation*}
  which proves the upper bound.

Establishing the lower bound is similar. First, using Equation~\eqref{eq:est_wk}, we obtain
  \begin{equation*}
    \begin{split}
      \norm{w_{k+1}}^2 &\geq \norm{w_k}^2 + \frac{L \varepsilon\gamma \norm{w_k}^L}{n_b}\sum_{i\in B_k}e^{-\norm{w_k}^Lp_i(u_k)}\\
  &\geq \norm{w_k}^2 + L \varepsilon \gamma  \norm{w_k}^L e^{-C\norm{w_k}^L}\, ,
    \end{split}
  \end{equation*}
  where $C>0$ is such that for every $i \leq n$ and $u \in \bbS^{d-1}$, $|p_i(u)| \leq C$.

Therefore, for $k$ large enough, 
\begin{equation*}
  \begin{split}
    \norm{w_{k+1}}^L &\geq \norm{w_k}^L(1 + C_1 \norm{w_k}^{L-2} e^{-C\norm{w_k}^L})^{L/2} \\
    &\geq \norm{w_k}^L\left( 1 + C_2 \norm{w_k}^{L-2}e^{-C\norm{w_k}^L}\right)\\
    &\geq \norm{w_k}^L + C_3 e^{-C \norm{w_k}^L}\, .
  \end{split}
\end{equation*}
Thus,
\begin{equation*}
  e^{ C\norm{w_k}^L} \left(\norm{w_{k+1}}^L - \norm{w_k}^L\right)\geq C_3\, .
\end{equation*}
Recalling that $m_k = \norm{w_k}^L$, similarly to previous computations, for $m_{k}\leq t\leq m_{k+1} $, we obtain 
\begin{equation*}
  \int_{m_k}^{m_{k+1}}e^{Ct} \dif t\geq C_3\, .
\end{equation*}
Therefore, for a fixed, large enough $k_0$ and $k \geq k_0$,  
\begin{equation*}
\frac{e^{C m_{k}}}{ C}\geq \int_{m_{k_0}}^{m_{k}} e^{C t} \dif t \geq C_3(k - k_0)\, ,
\end{equation*}
which implies, for $C_2$ small enough and $k$ large enough,
\begin{equation*}
  m_{k} \geq \frac{\log(C_3  C(k-k_0))}{C} \geq C_2 \log(k)\, .
\end{equation*}
\end{proof}

To finish the proof of Proposition~\ref{prop:log_wk} it is enough to notice that \edi{on $\cE$,} for $k$ large enough,
\begin{equation*}
  \cL(w_k) = \frac{1}{n}\sum_{i=1}^n e^{-\norm{w_k}^L p_i(u_k)} \leq e^{-\sm(u_k) \norm{w_k}^L} \leq e^{-\varepsilon \norm{w_k}^L } \leq \frac{1}{k^{\varepsilon c_1}}\, .
\end{equation*}

\section{General convergence setting}\label{app:gen_sett}
The purpose of this section is to present a general convergence setting under which it is possible to apply the stochastic approximation ideas that were used in the proof of Proposition~\ref{prop:stoch_approx_exp_log} and Theorem~\ref{thm:main}. 

Mainly, our setting applies to more general losses than the exponential or logistic and allows to treat the case where we do not have an a priori control of the form $\liminf \sm(u_k) >0$. Due to the generality of the approach, we do not aim to push its limits and believe that better guarantees (such as an equivalent version of Proposition~\ref{prop:log_wk}) could be obtained on a case-by-case basis.

We still analyze~\eqref{eq:sgd_new} but now allow step-sizes to decrease to zero:
\begin{equation}\label{eq:sgd_gen}
  w_{k+1} = w_k -  \frac{\gamma_k}{n_b}\sum_{i\in B_k} l'(p_i(w_k)) \sa_i(w_k)\, .
\end{equation}
Note that~\eqref{eq:sgd_gen} encompasses the deterministic setting, in which $n_b = 0$, and note the presence of $(\gamma_k)$.

We first state our assumptions that are, mostly, only mild modifications of Assumptions~\ref{hyp:loss_exp_log}--\ref{hyp:conserv}.
\begin{assumption}\label{Hgen:loss}\phantom{=}
  \begin{enumerate}[label=\roman*)]
    \item There exists a positive integer $L \in \mathbb{N}^*$,  such that, for every $1 \leq i \leq n$, the function $p_i$ is $L$-homogeneous, locally Lipschitz continuous and semialgebraic.
    \item\label{Hgen:loss_gen} The loss function $l$ is $C^1$, with $l'(q) <0$ on some interval $[q_0, + \infty)$ and is such that for any two sequences $a_k,b_k \rightarrow + \infty$,
    \begin{equation*}
        \limsup \frac{b_k}{a_k}< 1 \implies \frac{l'(a_k)}{l'(b_k)} \rightarrow 0 \, .
    \end{equation*}
  \end{enumerate}
\end{assumption}

\begin{assumption}\phantom{=}\label{Hgen:cons}
  For every $i \leq n$, $\sa_i$ is measurable and $D_i$ is semialgebraic. Moreover, for every $w \in \bbR^d$ and $\lambda \geq 0$, $\sa_i(w)  \in D_i(w)$ and
  \begin{equation*}
    D_i(\lambda w) = \lambda^{L-1} D_i(w)\, .
  \end{equation*}
\end{assumption}

\begin{assumption}[Deterministic setting]\label{Hgen:det}
  The batch-size $n_b = n$ and there is $\gamma>0$ such that $\gamma_k \leq \gamma$ and $\sum \gamma_k = + \infty$.
\end{assumption}
\begin{assumption}[Stochastic setting]\label{Hgen:sto}
 The sequence $(\gamma_k)$ is a sequence of strictly positive step-sizes such that $\sum_{k} \gamma_k = + \infty$ and there is $q \geq 2$ such that $\sum_{k} \gamma_k^{1+q/2} < + \infty$.
\end{assumption}
\edi{Finally, all of the results now hold on the, a priori, larger event
\begin{equation*}
  \cE' = [ \norm{w_k} \rightarrow + \infty] \cap [\norm{w_k}^{L-2} \sum_{i=1}^n \ell'(p_i(w_k)) \rightarrow 0] \cap [\textrm{there is $k_0 \in \bbN$ such that $\inf_{k \geq k_0}\sm(w_k)\geq q_0$}]\, ,
\end{equation*}
where $q_0$ was defined in in Assumption~\ref{Hgen:loss}.
}

\paragraph{Comments on Assumptions~\ref{Hgen:loss}--\ref{Hgen:cons} \edi{and the event $\cE'$}.} These assumptions are similar to Assumptions~\ref{hyp:loss_exp_log}--\ref{hyp:conserv}. The main differences are the following. 

\begin{itemize}
  \item Assumption~\ref{Hgen:loss} allows us to consider more general losses such as $l(q) = e^{-q^a}$ or $l(q) =\log (1+e^{-q^a})$, for $a>0$, already considered in \cite{Lyu_Li_maxmargin}.
  \item Differently to the main setting of the paper, to treat the stochastic case we require in Assumption~\ref{Hgen:sto} vanishing stepsizes. In fact it could be alleviated, if, similarly to Proposition~\ref{prop:log_wk} we have control on the growth of $\norm{w_k}$. In the deterministic setting of Assumption~\ref{Hgen:det} one could still choose a constant stepsize.
  \item In \edi{the definition of the event $\cE'$}, differently to the main setting of the paper, we \emph{assume} $\norm{w_k} \rightarrow + \infty$. Note that, similarly to Lemma~\ref{lm:wk_infty}, we could ensure this growth if we assumed $\liminf \sm(u_k) >q_0$. However, the \edi{event $\cE'$} allows us to treat the case, where, a priori, we do not have such control on the growth of the normalized margin. For instance, if $l(q) = e^{-q}$, then \edi{the event $\cE'$} requires that $\norm{w_k}^{L-2}e^{-\norm{w_k}^L \sm(u_k)} \rightarrow 0$. Thus, it would hold as soon as $\sm(u_k) \geq \norm{w_k}^{-L}\log(\norm{w_k}^{L-1})$ a much softer requirement than $\liminf \sm(u_k) >0$. 
\end{itemize}

Under these new assumptions we have a version of Proposition~\ref{prop:stoch_approx_exp_log}. Recall that $\cF_k$ denotes the sigma algebra generated by $\{w_0, \ldots, w_k \}$.

\begin{proposition}\label{Pgen:stoch_approx}
  Let either Assumptions~\ref{Hgen:loss}--\ref{Hgen:det} or Assumptions~\ref{Hgen:loss},\ref{Hgen:cons} and \ref{Hgen:sto} hold. There are sequences $(\bg_k), (r_k), (\bgamma_k), (\bar{\eta}_{k+1})$ such that
  \begin{equation}\label{Egen:stoch_app_u}
    u_{k+1} = u_k + \bgamma_k(\bg_k - \scalarp{\bg_k}{u_k}u_k) + \bgamma_k \bar{\eta}_{k+1} + \bgamma_k^2 r_k\, ,
  \end{equation}
  and the following holds.
  \begin{enumerate}
    \item\label{Gpr_res:rk} \edi{On the event $\cE'$,} the sequence $(r_k)$ is such that almost surely $\sup_{k}\norm{r_k} < + \infty$.
    \item\label{Gpr_res:gammak} The sequence $(\bgamma_k)$ is adapted to $(\cF_k)$ and almost surely \edi{on $\cE'$}, there is $k_0 \in \bbN$, such that for all $k \geq k_0$, $\bgamma_k >0$. Moreover, \edi{on $\cE'$}, $\bgamma_k \rightarrow 0$ and $\sum_{k} \bar{\gamma}_k = + \infty$ and under Assumption~\ref{Hgen:sto}, we additionally have that $\sup_k \bgamma_k / \gamma_k <  +\infty$.
    \item\label{Gpr_res:etak} Under Assumption~\ref{Hgen:det}, $\bar{\eta}_{k} \equiv 0$. Otherwise, under Assumption~\ref{Hgen:sto}, the sequence $(\bar{\eta}_{k})$ is adapted to $(\cF_k)$, 
    \begin{equation*}
    \bbE[\bar{\eta}_{k+1} |\cF_k] = 0 \, ,
    \end{equation*}
    and there is a deterministic constant $c_1>0$ such that $\sup_{k} \norm{\bar{\eta}_{k+1}} < c_6$.
    \item\label{Gpr_res:barD} \edi{On $\cE'$,} for any unbounded sequence $(k_j)_{j \geq 0}$, such that $u_{k_j}$ converges to $u \in \bbS^{d-1}$, it holds that $\dist(\bar{D}(u), \bg_{k_j}) \rightarrow 0$. 
  \end{enumerate}
\end{proposition}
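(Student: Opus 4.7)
The plan is to follow the structure of the proof of Proposition~\ref{prop:stoch_approx_exp_log}, keeping the noise decomposition and Taylor-expansion machinery intact while replacing the quantitative estimates that exploited the exponential form of the loss by the qualitative conditions in Assumption~\ref{Hgen:loss}. I define $\eta_{k+1}$ as in~\eqref{eq:noise} with $\gamma$ replaced by $\gamma_k$, so that~\eqref{eq:sgd_gen} takes the form of~\eqref{eq:sgd_noise}. I then introduce
\begin{equation*}
\tilde{\gamma}_k := -\gamma_k \norm{w_k}^{L-1}\sum_{j=1}^n l'(p_j(w_k)), \quad \bar{\gamma}_k := \tilde{\gamma}_k / \norm{w_k}, \quad \lambda_{i,k}, \quad \tilde{\eta}_{k+1},
\end{equation*}
exactly as in~\eqref{def:reparm_gamma}--\eqref{eq:def_lmk_tilel}. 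By Assumption~\ref{Hgen:cons}, $\sa_i(w_k) = \norm{w_k}^{L-1} g_{i,k}$ with $g_{i,k} \in D_i(u_k)$, so the update rewrites as in~\eqref{eq:first_wk}; applying the Taylor expansion $\norm{u+h}^{-1} = 1 - \scalarp{u}{h} + \cO(\norm{h}^2)$ as in~\eqref{eq:pf_uk_stochapp} then yields~\eqref{Egen:stoch_app_u}.

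Claims~\ref{Gpr_res:rk} and~\ref{Gpr_res:etak} transfer essentially verbatim: $\sup_k \norm{r_k} < \infty$ follows from $\bar{\gamma}_k \to 0$ together with uniform boundedness of $\bg_k$ and $\tilde{\eta}_{k+1}$; in the deterministic case $\bar{\eta}_k \equiv 0$ by construction of $\eta_{k}$, while in the stochastic case the martingale property $\bbE[\bar{\eta}_{k+1}\mid \cF_k]=0$ and uniform boundedness follow directly from~\eqref{eq:noise} and Assumption~\ref{Hgen:cons}. For claim~\ref{Gpr_res:gammak}, I write $\bar{\gamma}_k = -\gamma_k \norm{w_k}^{L-2}\sum_j l'(p_j(w_k))$; positivity for large $k$ follows from $\sm(w_k)\geq q_0$ eventually (Assumption~\ref{Hgen:wk_div}), while $\bar{\gamma}_k \to 0$ and the bound $\sup_k \bar{\gamma}_k / \gamma_k < \infty$ (needed under Assumption~\ref{Hgen:sto}) are direct consequences of the decay hypothesis $\norm{w_k}^{L-2}\sum_i l'(p_i(w_k)) \to 0$. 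Finally, $\sum_k \bar{\gamma}_k = +\infty$ is derived exactly as in~\eqref{eq:sum_gamma_inf}, by iterating $\norm{w_{k+1}}^2 \leq \norm{w_k}^2 e^{C \bar{\gamma}_k}$ and invoking $\norm{w_k}\to\infty$.

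The only genuinely new ingredient is claim~\ref{Gpr_res:barD}. In the original proof the bound $\lambda_{i,k_j} \leq |l'(a_{k_j})|/|l'(b_{k_j})|$ was controlled by the explicit exponential ratio; here it is replaced by the abstract ratio condition in Assumption~\ref{Hgen:loss}. Along a subsequence $u_{k_j}\to u$, one extracts a further subsequence so that $\lambda_{i,k_j}\to\lambda_i$ and $g_{i,k_j}\to g_i\in D_i(u)$. For any $i$ with $p_i(u)>\sm(u)$, fix $i^*$ with $p_{i^*}(u_{k_j})\to\sm(u)$ and set $a_{k_j} := \norm{w_{k_j}}^L p_i(u_{k_j})$, $b_{k_j} := \norm{w_{k_j}}^L p_{i^*}(u_{k_j})$. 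Then $\limsup b_{k_j}/a_{k_j} = \sm(u)/p_i(u) < 1$, and provided both sequences tend to $+\infty$, the ratio condition gives $\lambda_{i,k_j}\to 0$. Consequently every accumulation point $g=\sum_i\lambda_i g_i$ of $(\bg_{k_j})$ is supported on the active set $I(u)$, proving $g\in\bar{D}(u)$.

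The main obstacle is the divergence $b_{k_j}\to+\infty$. Divergence of $a_{k_j}$ is immediate from $\norm{w_{k_j}}\to\infty$ and $p_i(u)>0$, and when $\sm(u)>0$ the same applies to $b_{k_j}$. In the borderline case $\sm(u)=0$ the crude bound $b_{k_j}\geq\sm(w_{k_j})\geq q_0$ is insufficient, and the decay condition $\norm{w_k}^{L-2}\sum_j l'(p_j(w_k))\to 0$ in Assumption~\ref{Hgen:wk_div} must be exploited: it forces $\norm{w_k}^{L-2}|l'(b_{k_j})|\to 0$, so for $L\geq 2$ one gets $|l'(b_{k_j})|\to 0$ and, by continuity of $l'$ on $[q_0,\infty)$ together with the fact that $|l'|$ is bounded away from zero on any compact subset, necessarily $b_{k_j}\to+\infty$. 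This is precisely where the flexibility of the abstract ratio condition is exercised to replace the explicit exponential decay of the original proof and to cover losses of the form $l(q)=e^{-q^a}$ and $l(q)=\log(1+e^{-q^a})$ for $a\geq 1$.
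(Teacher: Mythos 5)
Your proposal is correct and follows essentially the same route as the paper, whose own proof simply reduces Proposition~\ref{Pgen:stoch_approx} to the argument of Proposition~\ref{prop:stoch_approx_exp_log} with $\gamma$ replaced by $\gamma_k$, reads positivity and vanishing of $\bgamma_k$ off Assumptions~\ref{Hgen:loss} and~\ref{Hgen:wk_div}, and attributes the claim on $\bar{D}$ to the ratio condition. Your treatment of claim~\ref{Gpr_res:barD} is in fact more detailed than the paper's terse citation: forcing the denominator sequence $b_{k_j}\to+\infty$ in the borderline case $\sm(u)=0$ via $\norm{w_k}^{L-2}\sum_i l'(p_i(w_k))\to 0$ is precisely the point the paper glosses over, the only (minor) caveat being that this step of yours uses $L\geq 2$ and so leaves the degenerate $L=1$ case untreated.
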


\begin{proof}
  The proof goes as the one of Proposition~\ref{prop:stoch_approx_exp_log} in Section~\ref{pf:sto_app_explog} but with $\gamma$ replaced by $\gamma_k$. The claim on $(\bgamma_k)$ follows from the fact that 
  \begin{equation*}
    \bgamma_k = -\gamma_k \norm{w_k}^{L-2} \sum_{j=1}^n l'(p_j(w_k))\, .
  \end{equation*}
  Indeed, \edi{on $\cE'$}, for $k$ large enough, $p_j(w_k) \geq q_0$. Hence, $\bgamma_k >0$ by Assumption~\ref{Hgen:loss}-\ref{Hgen:loss_gen}. Moreover, it goes to zero \edi{on $\cE'$}. From the same assumption and under Assumption~\ref{Hgen:sto} we indeed have $\limsup \bgamma_k/\gamma_k < + \infty$. 
  The claim on $\bar{D}_s$ is obtained from Assumption~\ref{Hgen:loss}-\ref{Hgen:loss_gen}. The other claims are obtained \emph{exactly as} in the proof of  Proposition~\ref{prop:stoch_approx_exp_log}.
\end{proof}

As a result, we have a version of Theorem~\ref{thm:main}.
\begin{theorem}\label{Tgen:main}
  In the setting of Proposition~\ref{Pgen:stoch_approx}, almost surely \edi{on $\cE'$}, $\sm(u_k)$ converges to a nonnegative limit and 
  \begin{equation}\label{Egen:conv_uk}
    \dist(u_k, \cZ_s) \xrightarrow[k \rightarrow + \infty]{} 0 \, .
  \end{equation}
\end{theorem}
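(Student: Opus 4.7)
The plan is to apply Proposition~\ref{pr:stoch_approx_our} to the recursion~\eqref{Egen:stoch_app_u} furnished by Proposition~\ref{Pgen:stoch_approx}, with $B = \bbS^{d-1}$, $\sH = \bar{D}_s$, Lyapunov candidate $\Lambda = -\sm$, drift $v_k = \bg_k - \scalarp{\bg_k}{u_k}u_k$, step-sizes $(\bgamma_k)$, and perturbations $e_k = \bar{\eta}_{k+1} + \bgamma_k r_k$. This is the same template used in Appendix~\ref{pf:main_th} to prove Theorem~\ref{thm:main}, so the task reduces to re-verifying the input hypotheses in the broader setting of Proposition~\ref{Pgen:stoch_approx}. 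Boundedness of $(u_k)$ is automatic since $u_k \in \bbS^{d-1}$, and boundedness of $(v_k)$ follows from point~\ref{Gpr_res:barD} of Proposition~\ref{Pgen:stoch_approx} combined with local boundedness of $\bar{D}$ on the compact sphere. The conditions $\sum_k \bgamma_k = +\infty$, $\bgamma_k \to 0$, and the set-valued convergence required in hypothesis~\ref{hyp:drus_conv} are direct consequences of points~\ref{Gpr_res:gammak} and~\ref{Gpr_res:barD}. The Lyapunov property of $-\sm$ for $\bar{D}_s$ on $\bbS^{d-1}$ is exactly the integration computation displayed just before Theorem~\ref{thm:main}.

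The next step is to show that $\sm(\cZ_s)$ has empty interior, which I would establish by the same Sard-type argument as in Appendix~\ref{pf:main_th}: choose $p \geq d$ and apply Proposition~\ref{prop:var_strat_cons} to the triple $(\sm, \bar{D}, \bbS^{d-1})$ to obtain a $C^p$ stratification compatible with $\bbS^{d-1}$ on which $\bar{D}_s$ coincides, in the tangent directions, with the Riemannian gradient of $\sm_{|\bbS^{d-1}}$; then $\cZ_s$ is contained in the finite union of the critical sets of the smooth restrictions, and Sard's theorem concludes. The main technical point is then the perturbation hypothesis~\ref{hyp:perturb_zero}. The $\bgamma_k r_k$ contribution is handled verbatim as in Appendix~\ref{pf:main_th}, using $\sup_k \|r_k\| < \infty$, $\bgamma_k \to 0$, and the definition of $\sn(k,T)$. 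Under Assumption~\ref{Hgen:det} the noise $\bar{\eta}_{k+1}$ vanishes identically and there is nothing more to do. Under Assumption~\ref{Hgen:sto}, and this is where the argument is actually \emph{simpler} than in the main result, the summability $\sum_k \gamma_k^{1+q/2} < +\infty$ is a deterministic hypothesis, and Proposition~\ref{Pgen:stoch_approx} yields $\sup_k \bgamma_k/\gamma_k < +\infty$, so $\sum_k \bbE[\bgamma_k^{1+q/2}] < +\infty$ holds directly, without needing the random-exponent localization trick with events $A_{k_0,b}$ employed in Appendix~\ref{pf:main_th}. Combined with the adaptedness, centering, and uniform bound on $\bar{\eta}_{k+1}$ from point~\ref{Gpr_res:etak}, a direct application of \cite[Proposition 4.2, Remark 4.3]{benaim2006dynamics} then produces the required perturbation bound.

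Once Proposition~\ref{pr:stoch_approx_our} applies it yields $\dist(u_k, \cZ_s) \to 0$ and convergence of $\Lambda(u_k) = -\sm(u_k)$, hence $\sm(u_k) \to m^*$ for some $m^* \in \bbR$. For the nonnegativity of $m^*$, I would invoke the last part of Assumption~\ref{Hgen:wk_div}: for $k$ large, $\sm(w_k) \geq q_0$, so $L$-homogeneity of $\sm$ gives $\sm(u_k) \geq q_0 \|w_k\|^{-L}$, whose right-hand side tends to $0$ because $\|w_k\| \to +\infty$, forcing $m^* \geq 0$. I anticipate the perturbation-hypothesis verification to be the only place where some care is required; the rest is a faithful reproduction of the argument used in the main theorem.
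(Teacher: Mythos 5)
Your overall route is exactly the paper's: Theorem~\ref{Tgen:main} is proved by rerunning the argument of Theorem~\ref{thm:main} (Appendix~\ref{pf:main_th}) with Proposition~\ref{Pgen:stoch_approx} in place of Proposition~\ref{prop:stoch_approx_exp_log}, i.e.\ by checking the hypotheses of Proposition~\ref{pr:stoch_approx_our} with $B=\bbS^{d-1}$, $\sH=\bar D_s$, $\Lambda=-\sm$, and your verifications of boundedness, the Lyapunov property, and the Sard-type argument for $\sm(\cZ_s)$ are faithful to that template. Your observation on the nonnegativity of the limit (via $\sm(u_k)\geq q_0\norm{w_k}^{-L}$ and $\norm{w_k}\to+\infty$) is also fine.

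The one genuine gap is the claim that, under Assumption~\ref{Hgen:sto}, the perturbation hypothesis is ``simpler'' and that $\sum_k \bbE[\bgamma_k^{1+q/2}]<+\infty$ ``holds directly'', so that the localization over the events $A_{k_0,b}$ can be dropped. It cannot. The effective step-size is $\bgamma_k=-\gamma_k\norm{w_k}^{L-2}\sum_j l'(p_j(w_k))$, and Proposition~\ref{Pgen:stoch_approx} only gives $\sup_k \bgamma_k/\gamma_k<+\infty$ \emph{almost surely}: this supremum is a random variable (a consequence of the almost-sure convergence in Assumption~\ref{Hgen:wk_div}), with no deterministic bound and no moment control. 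Hence from $\sum_k\gamma_k^{1+q/2}<+\infty$ you only get $\sum_k\bgamma_k^{1+q/2}<+\infty$ almost surely, not in expectation, and the condition required to invoke \cite[Proposition 4.2, Remarks 4.3]{benaim2006dynamics} as stated in Appendix~\ref{pf:main_th} is precisely the expectation version. This is the same obstruction the paper flags for Theorem~\ref{thm:main} (there the random quantity is the exponent $c_3$), and the remedy is the same truncation/localization you discarded: for $b\in\bbN$ set $\hat\gamma_k(b)=\min(\bgamma_k,\,b\,\gamma_k)$, which satisfies $\sum_k\bbE[\hat\gamma_k(b)^{1+q/2}]\leq b^{1+q/2}\sum_k\gamma_k^{1+q/2}<+\infty$ since $(\gamma_k)$ is deterministic, apply the cited result to $(\hat\gamma_k(b),\bar\eta_{k+1})$, and conclude on each event $A_{k_0,b}=[\forall k\geq k_0,\ \bgamma_k\leq b\,\gamma_k]$, whose union over $k_0,b$ has full measure. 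With that correction your proof matches the paper's.
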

\begin{proof}
  The proof goes as the proof of Theorem~\ref{thm:main}, with Proposition~\ref{Pgen:stoch_approx} playing the role of Proposition~\ref{prop:stoch_approx_exp_log}.
\end{proof}

Let us finish this section with a brief comment on KKT points of~\eqref{def:prob2} considered in \cite{Lyu_Li_maxmargin}.
A point $w \in \bbR^d$ is a KKT point of~\eqref{def:prob2} if there is $\alpha_i, \ldots, \alpha_n \geq 0$ and $v_1, \ldots, v_n$, with $v_i \in \partial p_i(w)$ and 
\begin{equation*}
  w = \sum_{i=1}^n \alpha_i v_i \quad \textrm{ and for all $i$, } \quad \lambda_i(p_i(w)- 1) = 0 \, .
\end{equation*}
Denoting $\lambda_i = \alpha_i/ \sum_{j=1}^n \alpha_j$ and $u = w/ \norm{w}$, we can successively observe that \emph{i)} $\lambda_i \neq 0$ only if $p_i(u) = \sm(u)$ and $v_i = \norm{w}^L g_i$, with $g_i \in\partial p_i(u)$, \emph{ii)} $\sum_{i=1}^n \lambda_i g_i \in \bar{D}(u)$ \emph{iii)} therefore, $u= \sum_{i=1}^n \lambda_i g_i$ and finally $0\in \bar{D}_s(u)$. 
Conversely, if $u \in \cZ_s$ and \emph{if additionally} $\sm(u) >0$, then $w = u/\sm(u)^{1/L}$ is a KKT point of~\eqref{def:prob2}.

Nevertheless, due to the slow decrease rate required by the event $\cE'$, we do not, a priori, have that $\lim \sm(u_k) > 0$. Therefore, in the setting of Theorem~\ref{Tgen:main} there could be two scenarios. First, in which $\lim \sm(u_k) >0$ and in that case the limit points of $(u_k)$ are \emph{exactly} (scaled) KKT directions of~\eqref{def:prob2}. Second, in which $\lim \sm(u_k) =0$, and we only have that any limit point of $(u_k)$ is in $\cZ_s$.

Note however, that without any discussion on KKT points in all cases being in $\cZ_s$ is a meaningful description of optimality of maximization of the margin, which, furthermore, naturally extends to settings, where $D_i$ are different of the Clarke subgradients of $p_i$.

\end{document}